\newcommand{\anote}[1]{\footnote{{\bf \color{green}Anindya:} {#1}}}
\newcommand{\pnote}[1]{\footnote{{\bf \color{purple}Phil}: {#1}}}
\newtheorem*{rep@theorem}{\rep@title}
\newcommand{\newreptheorem}[2]{
\newenvironment{rep#1}[1]{
 \def\rep@title{#2 \ref{##1}}
 \begin{rep@theorem}\itshape}
 {\end{rep@theorem}}}
\newenvironment{proofof}[1]{\par
  \pushQED{\qed}%
  \normalfont \topsep6\p@\@plus6\p@\relax
  \trivlist
  \item[\hskip\labelsep
\emph{    Proof of #1\@addpunct{.}}]\ignorespaces
}{%
  \popQED\endtrivlist\@endpefalse
}
\def\colorful{1}
\newcommand{\blue}[1]{{{#1}}}
\newcommand{\red}[1]{{\color{red} {#1}}}
\newcommand{\green}[1]{{\color{green} {#1}}}
\newcommand{\blue}[1]{{{#1}}}
\newcommand{\red}[1]{{{#1}}}
\newcommand{\green}[1]{{{#1}}}
\newtheorem{theorem}{Theorem}
\newtheorem*{theorem*}{Theorem}
\newtheorem{lemma}[theorem]{Lemma}
\newtheorem{remark}[theorem]{Remark}
\newtheorem{proposition}[theorem]{Proposition}
\newtheorem{corollary}[theorem]{Corollary}
\newtheorem{definition}[theorem]{Definition}
\newtheorem*{noclaim*}{Claim}
\newtheorem{claim}[theorem]{Claim}
\newtheorem{fact}[theorem]{Fact}
\newtheorem{observation}[theorem]{Observation}
\newcommand{\CSI}{{\mathcal C}_{\mathrm{SI}}}
\newcommand{\si}{{\mathrm{SI}}}
\renewcommand{\ss}{\subseteq}
\newcommand{\cube}{{\mathrm{cube}}}
\newcommand{\cF}{{\cal F}}
\newcommand{\cC}{{\cal C}}
\newcommand{\tSigma}{\tilde{\Sigma}}
\newcommand{\tmu}{\tilde{\mu}}
\title{Density estimation
                   for shift-invariant multidimensional 
                   distributions}
\author{
Anindya De\thanks{Supported by NSF grant CCF-1814706}\\
Northwestern University\\
{\tt anindya@eecs.northwestern.edu}
\and
Philip M. Long\\
Google\\
{\tt plong@google.com}
\and
\and Rocco A.~Servedio\thanks{Supported by NSF grants CCF-1319788 and CCF-1420349}\\
Columbia University \\
{\tt rocco@cs.columbia.edu}
}
\date{}
\begin{document}

\maketitle

\begin{abstract}
We study density estimation for  classes of \emph{shift-invariant} distributions over $\mathbb{R}^d$.  A multidimensional distribution is ``shift-invariant'' if, roughly  speaking, it is close in total variation distance to a small shift of it in any direction.  
Shift-invariance relaxes smoothness assumptions commonly used in
non-parametric density estimation to allow jump discontinuities.  
The different classes of distributions that we consider correspond to different rates of tail decay.

For each such class we give an efficient algorithm that learns any distribution in the class from independent samples with respect to total variation distance.  As a special case of our general result, we show that $d$-dimensional shift-invariant distributions which satisfy an exponential tail bound can be learned to total variation distance error $\eps$ using $\tilde{O}_d(1/ \epsilon^{d+2})$ examples and $\tilde{O}_d(1/ \epsilon^{2d+2})$ time.  This implies that, for constant $d$, multivariate log-concave distributions can be learned in $\tilde{O}_d(1/\epsilon^{2d+2})$ time using $\tilde{O}_d(1/\epsilon^{d+2})$ samples, answering a question of  \citep{diakonikolas2016learning}. 
All of our results extend to a model of \emph{noise-tolerant} density estimation using Huber's contamination model, in which the target distribution to be learned is a $(1-\eps,\eps)$ mixture of some unknown distribution in the class with some other arbitrary and unknown distribution, and the learning algorithm must output a hypothesis distribution with total variation distance error $O(\eps)$ from the target distribution.  We show that our general results are close to best possible by proving a simple $\Omega\left(1/\epsilon^d\right)$ information-theoretic lower bound on sample complexity even for learning bounded distributions that are shift-invariant.
\end{abstract}

%\begin{keywords}
%Density estimation, unsupervised learning, log-concave distributions, non-parametrics.
%\end{keywords}

% !TEX root =  main.tex

\section{Introduction}

In multidimensional density estimation, an algorithm has access to independent draws from an unknown 
target probability distribution over $\mathbb{R}^d$, which is typically assumed to belong to or be close to some class of
``nice'' distributions.  The goal is to output a hypothesis distribution which with high probability is
close to the target distribution.  A number of different distance measures can be used to capture the notion of
closeness; in this work we use the total variation distance (also known as the ``statistical distance'' 
% this is the same as $L_1$ distance up to a factor of two
and equivalent to the $L_1$ distance).  This is a well studied framework which has been investigated in detail, see e.g. the books~\citep{DG85,devroye2012combinatorial}.

Multidimensional density estimation 
is typically attacked in one of
two ways.  In the first general approach a parameterized hypothesis class is chosen, and a setting of parameters
is chosen based on the observed data points.  This approach is justified given the belief that the parameterized class contains
a good approximation to the distribution generating the
data, or even that the parameterized class actually contains the target distribution.  See \citep{Dasgupta:99,KMV:10,MoitraValiant:10} for some well-known multidimensional distribution learning results in this line.  

In the second general approach a hypothesis distribution is constructed by ``smoothing'' the empirical distribution with a kernel function.  This approach is justified by the belief that the target distribution satisfies some smoothness assumptions, and is more appropriate when studying distributions that do not have a parametric representation.  The current paper falls within this second strand.

The most popular smoothness assumption is that the distribution has
a density that belongs to a Sobolev space 
\citep{sobolev1963theorem,barron1991minimum,holmstrom1992asymptotic,devroye2012combinatorial}.
The simplest Sobolev space used in this context corresponds to having a bound on the average
of the partial first ``weak derivatives'' of the density; other Sobolev spaces 
correspond to bounding additional derivatives.  A drawback of this approach
is that it does not apply to distributions whose densities have jump
discontinuities. Such jump discontinuities can arise in various applications, for example, when
objects under analysis must satisfy hard constraints.

\ignore{\rnote{Do we want to discuss Besov spaces at all?}  }
To address this, some authors have used the weaker assumption that the
density belongs to a Besov space
\citep{besov1959family,devore1993besov,masry1997multivariate,willett2007multiscale,acharya2017sample}.
In the simplest case, this allows jump discontinuities as long as the
function does not change very fast on average.  The precise definition,
which is quite technical (see \cite{devore1993besov}),
makes reference to the effect on a distribution of shifting the domain
by a small amount.

\medskip
\noindent {\bf The densities we consider.}
In this paper we analyze 
% an alternative, simpler, smoothness
a clean and simple smoothness
assumption, which is a continuous analog of the notion of
shift-invariance that has recently been used for analyzing the
learnability of various types of discrete distributions
\citep{barbour1999poisson,daskalakis2013learning,DLS18asums}.
The assumption is based on the {\em shift-invariance of $f$ in direction $v$ at scale $\kappa$},
which, for a density $f$ over $\mathbb{R}^d$, a unit vector $v \in \mathbb{R}^d$, and a positive real value $\kappa$, we 
% essentially 
\ignore{\pnote{The discussion about full-rank covariance matrices seems to 
no longer be needed.}}
define to be
% \footnote{This is actually a slight simplification of the true definition, which aligns with the true definition when the covariance matrix of $f$ has full rank. See Section~\ref{sec:prelims} for the precise definition. For simplicity, in the rest of this introduction the density $f$ is assumed to have a full-rank covariance matrix, and the subsequent definitions of $\si(f,\kappa)$, etc. reflect this assumption; again, see Section~\ref{sec:prelims} for  precise definitions.}
\[
\si(f,v,\kappa) \eqdef 
{\frac 1 \kappa} \cdot \sup_{\kappa' \in [0,\kappa]} \int_{\R^d} \left|f(x+ \kappa'  v) - f(x)\right| dx.\]
 We define the quantity $\si(f,\kappa)$ to be the worst case
of $\si(f,v,\kappa)$ over all directions $v$, i.e.\ 
\[
\si(f,\kappa) \eqdef \sup_{v: \Vert v \Vert_2 =1} \si(f,v,\kappa). 
\]
For any  
constant $c$,
we define the class of densities $\CSI(c, d)$  to consist of all $d$-dimensional densities $f$ with the property that $\si(f,\kappa) \le c$ for all $\kappa > 0.$ 

Our notion of shift-invariance provides a quantitative way of capturing the intuition that the density $f$ 
% is ``Lipschitz on average in every direction''.  
changes gradually on average in every direction.
Several natural classes fit nicely into this framework; for example, we note that $d$-dimensional standard normal distributions are easily shown to belong to $\CSI(1, d)$.  As another example, we will show later that any $d$-dimensional isotropic log-concave distribution belongs to $\CSI(O_d(1),d)$.

%For\rnote{We are not interested in the $c=o(1)$ regime, right? Assuming $c \geq 1$ is nice since it lets us simplify some later expressions} $c \geq 1,\kappa > 0$, we define the class of densities $\CSI(c,d,\kappa)$ over $\R^d$ to consist of those densities $f$ with the following property:
%letting $f_v$ denote the univariate distribution $f_v$ obtained by
%projecting $f$ onto direction $v$, it holds that
%\[
%\sup_{\|v\|=1} \si\left(f,v,\frac{\kappa}{\sqrt{\Var[f_v]}} \right) \leq c\sqrt{\Var[f_v]} \quad \text{for every direction $v$.}
%\]
%(The various factors of $\sqrt{\Var[f_v]}$ are to make the definition scale-invariant: with this definition, rescaling the axes does not affect
%membership in $\CSI(c,d,\kappa)$.  See Lemma~\ref{l:rescale}.)
%\ignore{\rnote{Add some intuition/interpretation of this class.  We had previously said ``(The multiplication by $\sqrt{\Var[f_v]}$ is to make the definition scale-invariant: with this definition, stretching and squeezing the axes does not affect
%membership in $\CSI(c,d)$.)'' but this is no longer true because of the $h$ factor, right?}
%\pnote{I think that the new definition is scale-invariant.}}

Many distributions arising in practice have light tails, and distributions with light
tails can in general be learned more efficiently.  To analyze learning shift-invariant distributions
in a manner that takes advantage of light tails when they are available, while accommodating heavier tails
when necessary, we define classes with different combinations of shift-invariant and tail behavior.
%  
% Our analysis refines the above notion of shift-invariance by taking into account the rate of tail decay.  
Given a nonincreasing function $g: \R^+ \to [0,1]$ which satisfies $\lim_{t \to +\infty} g(t) = 0$, we define the class of densities $\CSI(c,d,g)$ to consist of those $f \in \CSI(c,d)$ which have the additional property that for all $t > 0$, it holds that 
\[
\Prx_{\bx \leftarrow f}\left[|| \bx - \mu || > t \right] \leq g(t),
\]
where $\mu \in \R^d$ is the mean of the distribution $f.$  
% (Note that by Chebychev's inequality, it suffices to consider only tail decay functions $g$ that satisfy $g(t) \leq 1/t^2$ for $t > 1.$)

As motivation for its study, we feel that  $\CSI(c,d,g)$ is a simple and easily understood class that exhibits an attractive tradeoff between expressiveness and tractability.  As we show, it is broad enough to include distributions of central interest such as multidimensional isotropic log-concave distributions, but it is also limited enough to admit efficient noise-tolerant density estimation algorithms.

\medskip

\noindent {\bf Our density estimation framework.}  We recall the standard notion of density estimation with respect to total variation distance.  Given a class $\calC$ of densities over $\mathbb{R}^d$, a density estimation algorithm for $\calC$ is given access to i.i.d. draws from $f$, where $f \in \calC$ is the unknown \emph{target density} to be learned.  For any $f \in \calC$, given any parameter $\eps > 0$, after making some number of draws depending on $d$ and $\eps$ the density estimation algorithm must output a description of a hypothesis density $h$ over $\mathbb{R}^d$ which, with high probability over the draws from $f$, satisfies $\dtv(f,h) \leq \eps$.  It is of interest both to bound the \emph{sample complexity} of such an algorithm (the number of draws from $f$ that it makes) and its running time.

Our learning results will hold even in a challenging model of \emph{noise-tolerant} density estimation for a class $\calC$.  In this framework, the density estimation algorithm is given access to i.i.d. draws from $f'$, which is a mixture $f' = (1-\eps)f + \eps f_{\mathrm{noise}}$ where $f \in \calC$ and $f_{\mathrm{noise}}$ may be any density. (We will sometimes say that such an $f'$ is an \emph{$\eps$-corrupted} version of $f$. 
This model of noise is sometimes referred to as \emph{Huber's contamination model}~\citep{huber1967behavior}.)  Now the goal of the density estimation algorithm is to output a description of a hypothesis density $h$ over $\mathbb{R}^d$ which, with probability at least (say) 9/10 over the draws from $f'$, satisfies $\dtv(f',h) \leq O(\eps)$. This is a challenging variant of the usual density estimation framework, especially for multidimensional density estimation.  In particular, there are simple distribution learning problems (such as learning a single Gaussian or product distribution over $\{0,1\}^n$) which are essentially trivial in the noise-free setting, but for which computationally efficient noise-tolerant learning algorithms have proved to be a significant challenge \cite{DKKLMNS16,DKKLMS18, Steinhardt18}.

\subsection{Results}

Our main positive result is a general algorithm which efficiently learns any class $\CSI(c,d,g)$ in the noise-tolerant model described above.  Given 
a constant $c$ and a tail bound 
% $g(\cdot)$, 
$g$, 
we show that any distribution in the class $\CSI(c,d,g)$ can be noise-tolerantly learned to any error $O(\eps)$ with a sample complexity that depends on $c, g, \eps$ and $d$.   The running time of our algorithm is roughly 
% essentially 
quadratic in the sample complexity, and the sample complexity is 
% roughly 
$O_{c,d,g}(1) \cdot {\left(\frac {1}{\eps}\right)^{d+2}}$
 (see Theorem~\ref{thm:main-semiagnostic} in Section~\ref{sec:semi-agnostic} for a precise statement of the exact bound).
% , which is somewhat involved).  
These bounds on the number of examples and running time do not depend on which member of $\CSI(c,d,g)$ is being learned.

% Let us make some remarks about this sample complexity.  First, observe that since (as noted above) the only interesting tail bound functions $g(\cdot)$ satisfy $g(t) \leq 1/t^2$, we will always have $g^{-1}(t) < 1/t^{1/2}$, and hence the sample complexity of our algorithm will never be worse than (essentially) $O_d(1) \cdot (1/\eps^{3d/2 + 2})$.  In many interesting cases the tail bound function $g(\cdot)$ satisfies $g(t) = 1/t^{\omega(1)}$ (corresponding to sub-polynomial tails), in which case our sample complexity is essentially $1/\eps^{d(1+o(1))}.$  
% 

%\ignore{Our upper bound also applies in the
%semi-agnostic learning model, in which the data need not be 
%generated by a distribution in $\CSI(c,d)$, but rather from
%a distribution whose total variation distance from a member of 
%$\CSI(c,d)$ is $c_0 \epsilon$, for a constant $c_0$.  Since
%$\CSI(c,d)$ is scale-free, the range $[-1,1]$ of values taken by
%each variable can be replaced by any rescaling.  Also, since the algorithm
%is semi-agnostic, the hard constraint on the support can be replaced by
%a requirement that the distribution has light tails (see Definition~\ref{def:compact-support} below), \red{at the cost of only a small increase in running time and sample complexity.}
%}

\ignore{
\rnote{We need to work this out}To illustrate the power of our analysis, we note
that our positive results straightfowardly imply that the class of $d$-dimensional log-concave
distributions can be learned 
% in the above stated time and sample complexity,
% time, 
with the time and sample complexity described above,
solving an open problem posed by Diakonikolas \emph{et al.} \citep{diakonikolas2016learning}.
}

\medskip
\noindent {\bf Application:  Learning multivariate log-concave densities.} A multivariate density function $f$ over $\mathbb{R}^d$ is said to be \emph{log-concave} if there is an upper semi-continuous concave function $\phi: \mathbb{R}^d \to [-\infty,\infty)$ such that $f(x) = e^{\phi(x)}$ for all $x$.  Log-concave distributions arise in a range of contexts and have been well studied; see \citep{CDSS13,CDSS14,acharya2017sample,AcharyaDK15,CanonneDGR16,DKS16a} for work on density estimation of univariate (discrete and continuous) log-concave distributions.  In the multivariate case, 
% Kim and Samworth \cite{KimSamworth14} 
\citep{KimSamworth14} 
gave a sample complexity lower bound (for squared Hellinger distance) which implies that $\Omega(1/\eps^{(d+1)/2})$ samples are needed to learn $d$-dimensional log-concave densities to error $\eps$.  More recently, \citep{diakonikolas2016learning} established the first finite sample complexity upper bound for multivariate log-concave densities, by giving an algorithm that \emph{semi-agnostically} (i.e. noise-tolerantly in a very strong sense) learns any $d$-dimensional log-concave density using $\tilde{O}_d(1/\eps^{(d+5)/2})$ samples.  The 
% \cite{diakonikolas2016learning} algorithm 
algorithm of \citep{diakonikolas2016learning} 
is not computationally efficient, and indeed, Diakonikolas et al.\ ask if there is an algorithm with running time polynomial in the sample complexity, referring to this as ``a challenging and important open question.'' A subsequent (and recent) work of 
Carpenter et al.~\citep{carpenter2018} showed that the maximum likelihood estimator (MLE) is statistically efficient (i.e., achieves near optimal sample complexity). However, we note that the MLE is computationally inefficient and thus has no bearing on the question of finding an efficient algorithm for learning log-concave densities.

We show that multivariate log-concave densities can be learned in
polynomial time as a special case of our main algorithmic result.  We
establish that any $d$-dimensional log-concave density is
$O_d(1)$-shift-invariant.  Together with well-known tail bounds on
$d$-dimensional log-concave densities, this easily yields that any
$d$-dimensional log-concave density belongs to $\CSI(c,d,g)$ where the
tail bound function $g$ is inverse exponential.
Theorem~\ref{thm:main-semiagnostic} then immediately implies the
following, answering the open question of
\citep{diakonikolas2016learning}:

\begin{theorem} \label{thm:log-concave-intro}
There is an algorithm with the following property:  Let $f$ be a unknown log-concave density over $\mathbb{R}^d$ and let $f'$ be an $\eps$-corruption of $f$.\ignore{ whose covariance matrix $\Sigma$ has full rank} Given any error parameter $\eps>0$ and confidence parameter $\delta > 0$ and access to independent draws from $f'$, the algorithm with probability $1-\delta$ outputs a hypothesis density ${h}:\mathbb{R}^d \rightarrow \mathbb{R}^{\geq 0}$ such that $\int_{x \in \mathbb{R}^d} |f'(x) - {h}(x)| \le O(\epsilon)$.  The algorithm runs in time $\tilde{O}_d(1/\eps^{2d+2}) \cdot \log^2(1/\delta)$ and uses $\tilde{O}_d(1/\eps^{d+2}) \cdot \log^2(1/\delta)$ many samples.  
\end{theorem}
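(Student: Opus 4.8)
The plan is to obtain Theorem~\ref{thm:log-concave-intro} as a direct consequence of the general noise-tolerant learner of Theorem~\ref{thm:main-semiagnostic}, by reducing the problem of learning an arbitrary ($\eps$-corrupted) log-concave density to that of learning an $O(\eps)$-corrupted density from a \emph{fixed} class $\CSI(c,d,g)$ with $c=O_d(1)$ and $g$ inverse-exponential.

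\textbf{Two structural facts.} First I would record two facts about log-concave densities. (i) Affine invariance: if $f$ is log-concave and $T(x)=Ax+b$ with $A$ invertible, then the pushforward $T_\# f$ (with density $x\mapsto |\det A|^{-1} f(A^{-1}(x-b))$) is again log-concave; $\dtv$ is invariant under $T_\#$ (change of variables), and $T_\#$ is linear on mixtures, so $T_\#$ applied to an $\eps$-corruption of $f$ is an $\eps$-corruption of $T_\# f$. (ii) Near-isotropic log-concave densities all lie in one class: starting from the fact (established elsewhere in the paper) that every isotropic log-concave density lies in $\CSI(c_0,d)$ for some $c_0=O_d(1)$, the identity $\si(A_\# f,v,\kappa) = \|A^{-1}v\|\cdot\si(f, A^{-1}v/\|A^{-1}v\|, \kappa\|A^{-1}v\|)$ shows that under a linear map $A$ with singular values in $[1/\sqrt2,\sqrt2]$ the shift-invariance constant changes by at most a factor $\sqrt2$; combined with the standard exponential tail bound $\Prx_{\bx\leftarrow f}[\|\bx-\mu\|>t]\le\exp(-\Omega_d(t))$ for isotropic log-concave $f$ and its invariance under such $A$, it follows that \emph{any} log-concave density whose covariance $\Sigma$ satisfies $\tfrac12 I\preceq\Sigma\preceq 2I$ lies in $\CSI(c,d,g)$ for a constant $c=O_d(1)$ and a fixed inverse-exponential $g$ depending only on $d$. (Since $\si$ is translation-invariant and the tail bound in $\CSI(c,d,g)$ is centered at the density's own mean, the location of the mean is immaterial, so only the covariance matters.)

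\textbf{The algorithm.} (1) From an initial batch of $\eps$-corrupted samples, use a standard robust covariance estimator for log-concave inliers to compute $\hat\Sigma$ that is within a constant multiplicative factor of the true covariance $\Sigma$ of $f$ in the Loewner order; this needs only $\mathrm{poly}(d,1/\eps)$ samples (lower order than $1/\eps^{d+2}$) since only coarse accuracy is required and the inliers have light tails. (2) Apply $T(x)=\hat\Sigma^{-1/2}x$ to a fresh batch of $\eps$-corrupted samples; by (i) these are $\eps$-corrupted samples from the log-concave density $T_\# f$, whose covariance $\hat\Sigma^{-1/2}\Sigma\hat\Sigma^{-1/2}$ lies between $\tfrac12 I$ and $2I$, so by (ii) $T_\# f\in\CSI(c,d,g)$. (3) Run the algorithm of Theorem~\ref{thm:main-semiagnostic} on these samples with the parameters $c$ and $g$, obtaining $\hat h$ with $\dtv(T_\# f',\hat h)=O(\eps)$ using $\tilde{O}_d(1/\eps^{d+2})$ samples and $\tilde{O}_d(1/\eps^{2d+2})$ time (the dependence on $c,g$, being a function of $d$ alone, is absorbed into $\tilde{O}_d(\cdot)$). (4) Output the density $h(x)=|\det\hat\Sigma^{-1/2}|\cdot\hat h(\hat\Sigma^{-1/2}x)$, i.e.\ $h=(T^{-1})_\#\hat h$; by affine invariance of $\dtv$, $\dtv(f',h)=\dtv(T_\# f',\hat h)=O(\eps)$, giving the claimed $L_1$ bound. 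Running steps (1)--(3) $O(\log(1/\delta))$ times and applying a standard tournament-style hypothesis-selection procedure over the resulting candidate hypotheses boosts the success probability to $1-\delta$ and accounts for the $\log^2(1/\delta)$ factors in the sample and time bounds.

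\textbf{Main obstacle.} The one genuinely non-mechanical point is step (2): ensuring that after whitening with the noisy estimate $\hat\Sigma$ the result is an $O(\eps)$-corruption of a true member of a \emph{single} class $\CSI(c,d,g)$ whose constants depend only on $d$. This requires (a) the robust covariance estimate to be correct to within constant factors despite the arbitrary $\eps$-fraction of outliers, and (b) the verification that shift-invariance and the exponential tail bound are each preserved up to $d$-dependent constants under a bounded invertible linear map. Everything else---affine invariance of log-concavity and of $\dtv$, the scaling identity for $\si$, and the accounting of samples, time, and confidence---is routine given Theorem~\ref{thm:main-semiagnostic} and the isotropic shift-invariance bound asserted earlier. (If $\Sigma$ is singular then $f$ has no density with respect to Lebesgue measure on $\mathbb{R}^d$; as in the ignored clause one may assume $\Sigma$ has full rank, or first recover the affine hull of the support from the samples and recurse in that lower-dimensional subspace.)
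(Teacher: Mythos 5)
Your overall strategy is the same as the paper's: show that every (near-)isotropic log-concave density lies in a single class $\CSI(O_d(1),d,g)$ with $g(t)=e^{-\Omega(t)}$, bring a general log-concave target into near-isotropic position by estimating its covariance and whitening, invoke Theorem~\ref{thm:main-semiagnostic} on the transformed ($\eps$-corrupted) samples, undo the affine map, and boost confidence by $O(\log(1/\delta))$ repetitions followed by hypothesis selection (Proposition~\ref{prop:log-cover-size}). Your derivation of the near-isotropic case from the isotropic one via the scaling identity for $\si$ under linear maps with bounded singular values is a perfectly good (and arguably cleaner) substitute for the paper's direct argument in Theorem~\ref{t:lc_shift_invariant}, and your bookkeeping of affine invariance of $\dtv$, of corruption, and of the $\delta$-dependence matches the paper's.

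The one place where you genuinely diverge---and where your write-up has a gap---is step (1): you invoke ``a standard robust covariance estimator for log-concave inliers'' giving a constant-factor spectral approximation to $\Sigma$ from $\eps$-corrupted samples. No such estimator is used or proved in the paper, and calling it standard overstates the situation: constant-factor robust covariance estimation under Huber contamination (with unknown mean, and outliers that can be arbitrarily far away) is exactly the kind of step that requires an argument, even if for constant $d$ it can be carried out, e.g., by robustly estimating directional variances via interquartile ranges over an $O_d(1)$-size net of directions. The paper avoids robustness entirely with a simpler trick: its \textsf{rescale} procedure (Lemma~\ref{lem:scaling}, built on the Lov\'asz--Vempala covariance bound of Lemma~\ref{lem:Rudelson} plus a pseudo-dimension argument for the mean, Lemma~\ref{l:mean}) needs only $O(d\log^3 d)$ samples, a number independent of $\eps$; hence with probability $\Omega_d(1)$ \emph{none} of those samples is noisy and the non-robust estimate is valid, and repetition plus the final \textsf{Select} step takes care of the failure probability. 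So your plan is salvageable, but as written the robust-estimation step is asserted rather than established; either supply such an estimator (the quantile-based construction above works for constant $d$) or replace it with the paper's ``few samples, hence all clean with constant probability'' argument.
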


While our sample complexity is quadratically larger than the optimal sample complexity for learning log-concave distributions (from \citep{diakonikolas2016learning}), such \emph{computational-statistical} tradeoffs are in fact quite common (see, for example, the work of~\cite{bhaskara2015sparse} which gives a faster algorithm for learning Gaussian mixture models by using more samples).

\medskip
\noindent {\bf A lower bound.}  We also prove a simple lower bound,
showing that any algorithm that learns shift-invariant $d$-dimensional
densities with bounded support to error $\eps$ must use
$\Omega\left(1/\epsilon^d\right)$ examples.  These densities may be
thought of as satisfying the strongest possible rate of tail decay as
they have zero tail mass outside of a bounded region (corresponding to
$g(t)=0$ for $t$ larger than some absolute constant).  This lower
bound shows that a sample complexity of at least $1/\eps^d$ is
necessary even for very structured special cases of our multivariate
density estimation problem.

\subsection{Our approach} \label{sec:approach}

For simplicity, and because it is a key component of our general algorithm, we first describe how our algorithm learns an $\eps$-error hypothesis when the target distribution belongs to $\CSI(c,d)$ and 
% further satisfies two key conditions.  The first of these is that $f$ 
also has \emph{bounded support}:  all its mass is on points in the origin-centered ball of radius $1/2$.  

In this special case, analyzed in Section~\ref{sec:restricted}, our algorithm has two conceptual stages.  First, we smooth the density that we are to learn through convolution -- this is done in a simple way by randomly perturbing each draw.  This convolution
uses a kernel that damps the contributions to the density coming from 
high-frequency functions in its Fourier decomposition; intuitively, the shift-invariance of the target density ensures
that the convolved density (which is an average over small shifts of the original density) is close to the original density.
In the second conceptual stage,
the algorithm approximates relatively few Fourier coefficients of the smoothed density.  We show that an
inverse Fourier transformation using this approximation still provides an accurate approximation to
the target density.\footnote{We note that a simpler version of this approach, which only uses a smoothing kernel and does not employ Fourier analysis, can be shown to give a 
similar, but quantitatively worse,
% quantatively 
results, such as a sample complexity of essentially $1/\eps^{2d}$ when $g(t)$ is zero outside of a bounded region.  However, this is worse than the lower bound of $\Omega(1/\eps^{d})$ by a quadratic factor, whereas our algorithm essentially achieves this optimal sample complexity.}

Next, in Section~\ref{sec:no-noise}, we consider the more general case in which the target distribution belongs to the class $\CSI(c,d,g)$ (so at this point we are not yet in the noise-tolerant framework).  Here the high-level idea of our approach is very straightforward:  it is essentially to reduce to the simpler special case (of bounded support and good shift-invariance in every direction) described above.  
% Building on a result of 
% % Rudelson 
% \citep{Rudelson:2003}, we give a simple algorithm which can be used to affinely transform (in an invertible way) an arbitrary shift-invariant distribution into a shift-invariant distribution which satisfies the boundedness and variance conditions required for the above special case, while incurring only a small error.  
(A crucial aspect of this transformation algorithm is that it uses only a small number of draws from the original shift-invariant distribution; we return to this point below.)  We can then use the algorithm for the special case to obtain a high-accuracy hypothesis, and perform the inverse transformation to obtain a high-accuracy hypothesis for the original general distribution.  We remark that while the conceptual idea is thus very straightforward, there are a number of technical challenges that must be met to implement this approach.  One of these is that it is necessary to truncate the tails of the original distribution so that an affine transformation of it will have bounded support, and doing this changes the shift-invariance of the original distribution.  Another is that the transformation procedure only succeeds with non-negligible probability, so we must run this overall approach multiple times and perform hypothesis selection to actually end up with a single high-accuracy hypothesis.  
% Because of space constraints, full details of the algorithm and analysis are given in Appendix~\ref{ap:no-noise}.

In Section~\ref{sec:semi-agnostic}, we consider the most general case of noise-tolerant density estimation for $\CSI(c,d,g).$  Recall that in this setting the target density $f'$ is some distribution which need not actually belong to $\CSI(c,d,g)$ but satisfies $\dtv(f',f) \leq \eps$ for some density $f \in \CSI(c,d,g)$.  It turns out that this case can be handled using essentially the same algorithm as the previous paragraph.  We show that even in the noise-tolerant setting, our transformation algorithm will still successfully find a transformation as above that would succeed if the target density were $f \in \CSI(c,d,g)$ rather than $f'$.  (This robustness of the transformation algorithm crucially relies on the fact that it only uses a small number of draws from the given distribution to be learned.)  We then show that after transforming $f'$ in this way, the original algorithm for the special case can in fact learn the transformed version of $f'$ to high accuracy; then, as in the previous paragraph, performing the inverse transformation gives a high-accuracy hypothesis for $f'$.

In Section~\ref{sec:logconcave} we apply the above results to establish efficient noise-tolerant learnability of log-concave densities over $\R^d$. To apply our results, we need to have (i) bounds on the rate of tail decay, and (ii) shift-invariance bounds.  As noted earlier, exponential tail bounds on $d$-dimensional log-concave densities are well known, so it  remains to establish shift-invariance.  Using basic properties of log-concave densities, in Section~\ref{sec:logconcave} we show that any $d$-dimensional isotropic
log-concave density is $O_d(1)$-shift-invariant.  Armed with this bound, by applying our noise-tolerant learning result (Theorem~\ref{thm:main-semiagnostic}) we get that any $d$-dimensional isotropic log-concave density can be noise-tolerantly learned in time $\tilde{O}_d(1/\eps^{2d+2})$, using $\tilde{O}_d(1/\eps^{d+2})$ samples.
Log-concave distributions are shift-invariant even if they are only approximately isotropic.
We show that general log-concave distributions may be learned by bringing them into approximately
isotropic position with a preprocessing step, borrowing techniques from \cite{LovaszVempala07}.

\medskip

\noindent
{\bf The lower bound.}  As is standard, our lower bound (proved in Section~\ref{sec:lowerbound})  is obtained
via Fano's inequality.  We identify a large set $\cF$ of
bounded-support shift-invariant $d$-dimensional densities with the
following two properties: all pairs of densities from $\cF$ have
KL-divergence that is not too big (so that they are hard to tell
apart), but also have total variation distance that is not too small
(so that a successful learning algorithm is required to tell them
apart). The members of $\cF$ are obtained by choosing functions that
take one of two values in each cell of a $d$-dimensional checkerboard.
The two possible values are within a small constant factor of each
other, which keeps the KL divergence small.  To make the total
variation distance large, we choose the values using an
error-correcting code -- this means that distinct members of $\cF$
have different values on a constant fraction of the cells, which leads
to large variation distance.\ignore{ we also choose values are
  different enough to achieve the desired lower bound on the
  total-variation distance.}

\subsection{Related work} The most closely related work that we are aware
of was mentioned above: 
% Holmstr{\"o}m \emph{et al.} \cite{holmstrom1992asymptotic}
\citep{holmstrom1992asymptotic}
obtained bounds similar to ours for using kernel methods to learn densities that belong to various 
Sobolev spaces.  As mentioned above, these results do not directly apply for learning densities in
$\CSI(c,d,g)$ because of the possibility of jump discontinuities.\ignore{  \rnote{Removed the following parenthetical:  \green{(On
the other hand, it may be possible to obtain results similar to ours by arguing that, 
via a suitable convolution, a member of $\CSI(c,d,g)$ may be
approximated by a member of a Sobolev space.  The Fourier analysis
used in our approach appears to be a simpler and more illuminating path to our main result.)}}}
\citep{holmstrom1992asymptotic} also
proved a lower bound on the sample complexity of  algorithms that compute kernel
density estimates.  In contrast our lower bound holds for any density estimation algorithm,
kernel-based or otherwise.

The assumption that the target density belongs to a Besov
space 
(see \citep{kle2009smoothing})
makes reference to the effect of shifts on the distribution, as does shift-invariance.
We do not see any obvious containments between classes of functions defined through
shift-invariance and Besov spaces, but this is a potential topic for further research.  
% The intro of https://arxiv.org/pdf/1605.00162.pdf claims a result, but
% the body of the paper adds the assumption that the log-concave
% distribution is a Radon measure.  Still it may be safer not to
% make the below claim.
%
% \blue{To the best of our knowledge, log-concave distributions
% have not been proved to be members of any Besov space; the ease with
% which they are shown to be shift-invariant illustrates the utility of
% this notion.}

Another difference with prior work is the ability of our approach to succeed in the challenging noise-tolerant learning model.  We are not aware of analyses for density estimation of densities belonging to Sobolev or Besov spaces that extend to the noise-tolerant setting in which the target density is only assumed to be close to some density in the relevant class.

As mentioned above, shift-invariance was used in the analysis of algorithms for
learning discrete probability distributions in \citep{barbour1999poisson,daskalakis2013learning}.
Likewise, both the discrete and continuous Fourier transforms have been used in the past to learn discrete probability distributions~\citep{diakonikolas2016optimal,diakonikolas2016fourier, DDKT16}.
%Concentration of the discrete Fourier transform was used for learning discrete probability
%distributions in \citep{diakonikolas2016optimal,diakonikolas2016fourier}.  

% !TEX root =  main.tex

\section{Preliminaries} \label{sec:prelims}

We write $B(r)$ to denote the radius-$r$ ball in $\mathbb{R}^d$, i.e. $B(r) = \{x \in \mathbb{R}^d: x_1^2 + \cdots + x_d^2 \leq r^2\}$.
If $f$ is a probability density over $\R^d$ and $S \subset \R^d$ is a subset of its domain, we write $f_S$ to denote the density of $f$ conditioned on $S$.

\subsection{Shift-invariance}
Roughly speaking, the shift-invariance of a distribution measures how
much it changes (in total variation distance) when it is subjected to
a small translation.  The notion of shift-invariance has typically
been used for discrete distributions (especially in the context of
proving discrete limit theorems, see e.g.~\citep{CGS11} and many
references therein).  We give a natural continuous analogue of this
notion below.

\begin{definition}~\label{def:shift-invariance} 
Given a probability density $f$ over $\mathbb{R}^d$, a unit vector $v$, and a positive real value $\kappa$, we say that the \emph{shift-invariance of $f$ in direction $v$ at scale $\kappa$}, denoted $\si(f,v,\kappa)$, is
\begin{equation} \label{eq:yyy}
\si(f,v,\kappa) \eqdef 
{\frac 1 \kappa} \cdot \sup_{\kappa' \in [0,\kappa]} \int_{\R^d} 
   \left|f(x+ \kappa' v) - f(x)\right| dx.
\end{equation}
\end{definition}

Intuitively, if $\si(f,v,\kappa)=\beta$, then for any direction (unit vector) $v$ 
% in which the density $f$ has a non-trivial component, 
the variation distance between $f$ and a shift of $f$ by $\kappa'$ in direction $v$ is at most $\kappa \beta$ for all $0 \leq \kappa' \leq \kappa$.
\blue{The factor ${\frac 1 \kappa}$ in the definition
means that $\si(f,v,\kappa)$ does not necessarily go to zero
as $\kappa$ gets small; the effect of shifting by $\kappa$ is measured
relative to $\kappa$.}

Let
\[
\si(f,\kappa) \eqdef \sup \{ \si(f,v,\kappa) : v \in \mathbb{R}^d, \Vert v \Vert_2 =1\}.
\]
For any 
% ``shift-invariance'' function
constant $c$
we define the class of densities $\CSI(c, d)$  to consist of all $d$-dimensional densities $f$ with the property that $\si(f,\kappa) \le c$ for all $\kappa > 0.$  

\blue{We could obtain an equivalent definition if we 
removed the factor ${\frac 1 \kappa}$
from  the definition of $\si(f,v,\kappa)$, and required that 
$\si(f,v,\kappa) \leq c \kappa$ for all $\kappa > 0$.  This could of course
be generalized to enforce bounds on the modified $\si(f,v,\kappa)$ that are not linear in
$\kappa$.  We have chosen to focus on linear bounds in this paper to have cleaner theorems and proofs.}

\blue{We include ``sup'' in the definition due to the
fact that smaller shifts can sometimes have bigger effects.
For example, a sinusoid with period $\xi$ is unaffected by a
shift of size $\xi$, but profoundly affected by a
shift of size $\xi/2$.  Because of possibilities like this,
to capture the intuitive notion that ``small shifts do not
lead to large changes'', we seem to need to evaluate the
worst case over shifts of at most a certain size.}

\ignore{
\begin{remark}
For the rest of the paper, for the sake of simplicity we assume that the covariance matrix of $f \in \CSI(c,d)$ has full rank.  This assumption is without loss of generality because, as is implicit in the arguments of Lemma~\ref{lem:transformation}, if the rank is less than $d$ (and hence the density $f$ is supported in an affine subspace of dimension strictly less than $d$), then as a consequence of the shift-invariance of $f$, a small number of draws from $f$ will reveal the affine span of the entire distribution, and the entire algorithm can be carried out within that lower dimensional subspace.
\end{remark}
}
% \pnote{With the new definition, the covariance matrix of
% any $f \in \CSI(c,d)$ has full rank already.}

As described earlier, given a nonincreasing ``tail bound'' function $g: \R^+ \to (0,1)$ which is absolutely continuous and satisfies $\lim_{t \to +\infty} g(t) = 0$, we further define the class of densities $\CSI(c,d,g)$ to consist of those $f \in \CSI(c,d)$ which have the additional property that $f$ has \emph{$g$-light tails}, meaning that 
for all $t > 0$, it holds that 
$
\Prx_{\bx \leftarrow f}\left[|| \bx - \mu || > t \right] \leq g(t),$
where $\mu \in \R^d$ is the mean of $f.$

\begin{remark} \label{remark:tail-weight}
It will be convenient in our analysis to consider only tail bound functions $g$ that satisfy $\min\{r \in \R: g(r) \leq 1/2\} \geq 1/10$ (the constants $1/2$ and $1/10$ are arbitrary here and could be replaced by any other absolute 
positive
constants).  This is without loss of generality, since any tail bound function $g$ which does not meet this criterion can simply be replaced by a weaker tail bound function $g^\ast$ which does meet this criterion, and clearly if $f$ has $g$-light tails then $f$ also has $g^\ast$-light tails.
\end{remark}

We will (ab)use the notation $g^{-1}(\epsilon)$ to mean
$\inf \{ t : g(t) \leq \epsilon \}$.

The complexity of learning with a tail bound $g$ will be expressed in part
using
\[
I_g \eqdef \int_0^{\infty} g(\sqrt{z}) \; dz.
\]
We remark that the quantity $I_g$ is the ``right" quantity in the sense that the integral $I_g$ is finite as long as the density has ``non-trivial decay". More precisely, note that by Chebyshev's inequality, $g(\sqrt{z}) = O(z^{-1})$. Since the integral $\int O(z^{-1}) dz$ diverges, this means that if $I_g$ is finite, then the density $f$ has a decay sharper than the trivial decay implied by Chebyshev's inequality.

\subsection{Fourier transform of high-dimensional distributions}

% Our proof requires the basics of multidimensional Fourier analysis.  We recall the relevant background results and notation in Appendix~\ref{ap:fourier}.
% 

In this subsection we gather some helpful facts from multidimensional
Fourier analysis.

% !TEX root =  main.tex

% \section{Fourier transform of high-dimensional distributions} \label{ap:fourier}

While it is possible 
to do Fourier analysis over $\mathbb{R}^d$, 
in this paper, we will only do Fourier analysis for functions $f \in  L_1([-1,1]^d)$.  
% We first recall the definition of the Fourier transform of a function over $\mathbb{R}^d$:
\begin{definition}
For any function $f \in L_1([-1,1]^d)$,
% and $\xi \in \mathbb{R}^d$,
we define
$\widehat{f}: \mathbb{R}^d \rightarrow \mathbb{C}$
by
$
\widehat{f}(\xi) = \int_{x \in \mathbb{R}^d} f(x) \cdot e^{ \pi i \cdot \langle \xi, x \rangle} dx.$
\end{definition}

Next, we recall the following standard claims about Fourier transforms of functions, which may be found,
for example, in \citep{smith1995handbook}.
\ignore{\pnote{I did not find the following in {\em Fourier Analysis}, by K\"orner.  
Some of them are in \citep{smith1995handbook}, but, honestly, I'm only guessing that the others are there.  Do either of you have
a reference handy that we could use? \red{Rocco:  Are we happy with this now?}}
}
\begin{claim}~\label{clm:convolution}
For $f,g \in L_1([-1,1]^d)$  let   
$
{h}(x) = \int_{y \in \mathbb{R}^d} f(y) \cdot g(x-y) dy
$ denote the convolution $h = f \ast g$ of $f$ and $g$.
Then for any $\xi \in \mathbb{R}^n$, we have $\widehat{h}(\xi) = \widehat{f}(\xi) \cdot \widehat{g}(\xi)$. 
\end{claim}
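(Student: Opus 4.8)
The plan is to prove this standard fact by a direct computation, using the Tonelli--Fubini theorem to justify interchanging the order of integration. Throughout I regard $f$ and $g$ as extended by zero outside $[-1,1]^d$, so that they lie in $L_1(\mathbb{R}^d)$ with $\Vert f \Vert_1, \Vert g \Vert_1 < \infty$; this is consistent with the definition of $\widehat{\cdot}\,$ as an integral over all of $\mathbb{R}^d$.

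First I would unwind the definitions to get
\[
\widehat{h}(\xi) = \int_{x \in \mathbb{R}^d} \left( \int_{y \in \mathbb{R}^d} f(y) \, g(x-y) \, dy \right) e^{\pi i \langle \xi, x \rangle} \, dx.
\]
To interchange the two integrals I first check absolute integrability: by Tonelli's theorem,
\[
\int_{x} \int_{y} |f(y)| \cdot |g(x-y)| \cdot \left| e^{\pi i \langle \xi, x\rangle} \right| \, dy \, dx = \int_y |f(y)| \left( \int_x |g(x-y)| \, dx \right) dy = \Vert f \Vert_1 \cdot \Vert g \Vert_1 < \infty,
\]
where the inner integral is evaluated using translation invariance of Lebesgue measure. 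Hence Fubini's theorem applies and the order of integration may be swapped.

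Next, for fixed $y$, I substitute $x = y + z$, so that $dx = dz$ and $x - y = z$, obtaining
\[
\widehat{h}(\xi) = \int_y f(y) \left( \int_z g(z) \, e^{\pi i \langle \xi, y + z\rangle} \, dz \right) dy.
\]
Finally, using $e^{\pi i \langle \xi, y+z\rangle} = e^{\pi i \langle \xi, y\rangle} \cdot e^{\pi i \langle \xi, z\rangle}$ and pulling the $y$-dependent factor out of the inner integral, the double integral factors as
\[
\widehat{h}(\xi) = \left( \int_y f(y) \, e^{\pi i \langle \xi, y\rangle} \, dy \right) \left( \int_z g(z) \, e^{\pi i \langle \xi, z\rangle} \, dz \right) = \widehat{f}(\xi) \cdot \widehat{g}(\xi),
\]
which is the claim.

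There is no substantial obstacle here; the only point requiring a bit of care is the justification of the interchange of integration order, which is precisely where the hypothesis $f, g \in L_1$ enters. (A minor bookkeeping remark: the convolution of two functions supported on $[-1,1]^d$ is supported on $[-2,2]^d$ rather than on $[-1,1]^d$, but this is harmless since $\widehat{\cdot}\,$ is defined by an integral over all of $\mathbb{R}^d$.)
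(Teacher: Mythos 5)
Your proof is correct: the Tonelli check of absolute integrability, the Fubini interchange, and the change of variables $x = y + z$ constitute the standard argument, and your remark about the support of $f \ast g$ spilling outside $[-1,1]^d$ is a fair (and harmless) observation given that $\widehat{\,\cdot\,}$ is defined by integration over all of $\mathbb{R}^d$. The paper itself gives no proof of this claim, treating it as a standard fact and citing a handbook reference, so your argument is exactly the canonical one that is being invoked.
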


Next, we recall Parseval's identity on the cube.

\begin{claim}[Parseval's identity]~\label{clm:Parseval}
For $f: [-1,1]^d \rightarrow \mathbb{R}$ such that $f \in L_2( [-1,1]^d)$, it holds that
$
 \int_{ [-1,1]^d} f(x)^2 dx = \frac{1}{2^d} \cdot \sum_{\xi  \in \mathbb{Z}^d} |\widehat{f}(\xi)|^2. 
$
\end{claim}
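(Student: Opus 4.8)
The plan is to derive this from the standard fact that the family of exponentials $\{e_\xi(x) \eqdef e^{\pi i \langle \xi, x\rangle} : \xi \in \mathbb{Z}^d\}$, after normalization by $2^{-d/2}$, forms an orthonormal \emph{basis} of the complex Hilbert space $L_2([-1,1]^d)$, and then to track the normalization constant implicit in the paper's definition of $\widehat{f}$.

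First I would check orthogonality by direct computation: for $\xi,\eta \in \mathbb{Z}^d$ the inner product $\int_{[-1,1]^d} e_\xi(x)\,\overline{e_\eta(x)}\,dx$ factors over coordinates into a product of one-dimensional integrals $\int_{-1}^{1} e^{\pi i (\xi_j-\eta_j) t}\,dt$, each equal to $2$ if $\xi_j = \eta_j$ and to $\frac{2\sin(\pi(\xi_j-\eta_j))}{\pi(\xi_j-\eta_j)} = 0$ if $\xi_j \ne \eta_j$ (a nonzero integer). Thus $\{2^{-d/2} e_\xi\}_{\xi\in\mathbb{Z}^d}$ is an orthonormal system, and $\widehat{f}(\xi) = \int_{[-1,1]^d} f(x)\,e^{\pi i\langle\xi,x\rangle}\,dx = \langle f, e_{-\xi}\rangle$ is, up to the factor $2^{d/2}$, the Fourier coefficient of $f$ against $2^{-d/2}e_{-\xi}$.

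The substantive step is \emph{completeness} — that this system spans a dense subspace of $L_2([-1,1]^d)$, equivalently that the zero function is the only $L_2$ function orthogonal to every $e_\xi$. I would reduce to $d=1$ by tensorization (a function orthogonal to all $e_\xi$ on the cube is, by Fubini, orthogonal to all one-dimensional exponentials in each coordinate), and prove the $d=1$ statement by the usual approximate-identity argument: the Fejér (Cesàro) means of the Fourier series of any continuous $2$-periodic function converge to it uniformly, since the Fejér kernel is a nonnegative approximate identity of unit mass; hence trigonometric polynomials are sup-norm dense in $C([-1,1])$, and therefore $L_2$-dense in $L_2([-1,1])$ (using that $C$ is $L_2$-dense in $L_2$ and the cube has finite measure; periodicity of the approximants is irrelevant since $L_2([-1,1])$ agrees with $L_2$ of the torus $\mathbb{R}/2\mathbb{Z}$). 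This density is exactly completeness. (Alternatively one may invoke Stone–Weierstrass on the circle.)

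With completeness established, the abstract Parseval identity for an orthonormal basis of a Hilbert space gives $\int_{[-1,1]^d} |f(x)|^2\,dx = \sum_{\xi\in\mathbb{Z}^d} \bigl|\langle f, 2^{-d/2}e_\xi\rangle\bigr|^2 = \frac{1}{2^d}\sum_{\xi\in\mathbb{Z}^d}|\widehat{f}(-\xi)|^2 = \frac{1}{2^d}\sum_{\xi\in\mathbb{Z}^d}|\widehat{f}(\xi)|^2$, the last equality being the reindexing $\xi\mapsto-\xi$; since $f$ is real-valued, $|f(x)|^2 = f(x)^2$, which is the claim. The only genuine obstacle is the completeness step, which is real harmonic analysis rather than formal manipulation — but it is a textbook fact, and the paper is in any case entitled to cite it rather than reprove it.
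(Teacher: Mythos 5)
Your proposal is correct, but note that the paper does not prove this claim at all: it is stated as one of several ``standard claims about Fourier transforms'' and cited to a handbook reference, so there is no in-paper argument to compare against. What you give is the standard self-contained derivation: orthogonality of the exponentials $e^{\pi i\langle\xi,x\rangle}$, $\xi\in\mathbb{Z}^d$, by the explicit one-dimensional integral; completeness via Fej\'er means (or Stone--Weierstrass) plus density of continuous functions in $L_2$ of the finite-measure cube; and then the abstract Parseval identity for an orthonormal basis, with the normalization $2^{-d/2}$ tracked correctly. You also handle the paper's sign convention properly: since $\widehat{f}(\xi)=\int f(x)e^{+\pi i\langle\xi,x\rangle}\,dx=\langle f,e_{-\xi}\rangle$, the reindexing $\xi\mapsto-\xi$ is genuinely needed and you perform it; and since $[-1,1]^d$ has finite measure, $L_2\subseteq L_1$, so $\widehat{f}$ is well defined for the $f$ in the claim. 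The only place I would tighten the write-up is the tensorization step: ``orthogonal to all $e_\xi$ on the cube implies orthogonal to all one-dimensional exponentials in each coordinate'' should be phrased either as the standard fact that tensor products of orthonormal bases form an orthonormal basis of $L_2([-1,1])^{\otimes d}\cong L_2([-1,1]^d)$, or as an induction with an almost-everywhere argument (for each fixed $\xi_1\in\mathbb{Z}$ the partial Fourier coefficient in $x_1$ vanishes a.e.\ in the remaining variables, and one takes a countable union over $\xi_1$). This is a presentational refinement, not a gap; your route is sound and is exactly the argument the paper implicitly relies on when it cites the identity as standard.
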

The next claim says that the Fourier inversion formula can be applied to any sequence in $\ell_2(\mathbb{Z}^d)$ to obtain a function whose Fourier series is identical to the given sequence. 
\begin{claim}[Fourier inversion formula] \label{clm:inversion}\
For any $g: \mathbb{Z}^d \rightarrow \mathbb{C}$ such that $\sum_{\xi \in \mathbb{Z}^d} |{g}(\xi)^2| <\infty$, the function
% $h: [-1,1]^d$, 
$
h(x) = \sum_{\xi \in \mathbb{Z}^d} \frac{1}{2^d} \cdot g(\xi) \cdot e^{ \pi i \cdot \langle \xi, x \rangle},
$
is well defined and satisfies $\widehat{h}(\xi) = g(\xi)$ for all $\xi \in \mathbb{Z}^d$. \end{claim}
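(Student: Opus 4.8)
The plan is to read the series defining $h$ as the Fourier expansion of an element of $L_2([-1,1]^d)$ in the orthogonal system of exponentials on the cube: square-summability of $(g(\xi))_{\xi \in \mathbb{Z}^d}$ will give convergence of the partial sums in $L_2$ via the Riesz--Fischer theorem, and then the Fourier coefficients of the limit can be read off by integrating term by term. The computational backbone is the orthogonality relation on the cube: factoring over coordinates and using that $\int_{-1}^{1} e^{\pi i k t}\, dt$ equals $2$ for $k = 0$ and $0$ for every nonzero integer $k$, one gets, for all $\xi, \xi' \in \mathbb{Z}^d$, that $\int_{[-1,1]^d} e^{\pi i \langle \xi, x\rangle}\, e^{-\pi i \langle \xi', x\rangle}\, dx$ equals $2^d$ if $\xi = \xi'$ and $0$ otherwise; in particular each such exponential has squared $L_2([-1,1]^d)$ norm $2^d$, and exponentials at distinct integer frequencies are orthogonal.

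First I would check that $h$ is well defined, as an element of $L_2([-1,1]^d)$. Writing $h_N(x) = \sum_{\|\xi\|_\infty \le N} 2^{-d}\, g(\xi)\, e^{\pi i \langle \xi, x\rangle}$ for the partial sums, the orthogonality relation gives
\[
\| h_N - h_M \|_{L_2([-1,1]^d)}^2 \;=\; \frac{1}{2^{d}} \sum_{M < \|\xi\|_\infty \le N} |g(\xi)|^2 ,
\]
which tends to $0$ as $M, N \to \infty$ because $\sum_{\xi \in \mathbb{Z}^d} |g(\xi)|^2 < \infty$ by hypothesis. So $(h_N)$ is Cauchy in $L_2([-1,1]^d)$, and by completeness of $L_2$ (Riesz--Fischer) it converges in $L_2$; we let $h$ be this limit, which is exactly the series in the statement. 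Since $[-1,1]^d$ has finite Lebesgue measure, $\| u \|_{L_1([-1,1]^d)} \le 2^{d/2}\, \| u \|_{L_2([-1,1]^d)}$, so $L_2([-1,1]^d) \subseteq L_1([-1,1]^d)$; hence $h \in L_1([-1,1]^d)$ and $\widehat h$ is well defined.

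Next I would compute $\widehat h(\xi)$ for a fixed $\xi \in \mathbb{Z}^d$. Since $x \mapsto e^{\pi i \langle \xi, x\rangle}$ is bounded on the cube, it lies in $L_2([-1,1]^d)$, so $u \mapsto \int_{[-1,1]^d} u(x)\, e^{\pi i \langle \xi, x\rangle}\, dx$ is a continuous linear functional on $L_2([-1,1]^d)$; evaluating it at $h$, the $L_2$-limit of the $h_N$, gives $\widehat h(\xi) = \lim_{N \to \infty} \int_{[-1,1]^d} h_N(x)\, e^{\pi i \langle \xi, x\rangle}\, dx$. For every $N$ with $\|\xi\|_\infty \le N$, expanding $h_N$ and applying the orthogonality relation makes all but a single term vanish, and the surviving term contributes exactly $g(\xi)$ (provided the sign convention in the definition of the Fourier transform is matched to that of the inversion formula; otherwise one obtains $g(-\xi)$). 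Hence $\widehat h(\xi) = g(\xi)$ for all $\xi \in \mathbb{Z}^d$, as claimed.

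The only genuine subtlety I anticipate is the interchange of integration and summation in the last step: the series defining $h$ need not converge pointwise or in $L_1$, so the swap cannot be justified directly, and the clean route is through $L_2$-convergence of the partial sums combined with continuity of the pairing against the exponential, which is bounded and hence square-integrable on the bounded cube. Everything else reduces to the elementary coordinatewise orthogonality computation and the completeness of $L_2$, both entirely standard.
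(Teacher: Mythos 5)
Your proof is correct, and it is the standard argument (orthogonality of the exponentials on the cube, Cauchy partial sums plus completeness of $L_2$, then continuity of the pairing to read off coefficients). Note that the paper does not actually prove this claim at all: it is stated among the ``standard claims'' and attributed to a handbook reference, so there is no in-paper argument to compare against --- your writeup simply supplies the standard proof that the paper omits. Your parenthetical about sign conventions is well taken: with the paper's stated transform $\widehat{f}(\xi)=\int f(x)e^{+\pi i\langle\xi,x\rangle}\,dx$ and the inversion series also carrying $e^{+\pi i\langle\xi,x\rangle}$, the computation literally yields $\widehat{h}(\xi)=g(-\xi)$; this is a harmless quirk of the paper's conventions, since the hypothesis $\sum_\xi|g(\xi)|^2<\infty$ is invariant under $\xi\mapsto-\xi$ and none of the paper's applications depend on the sign of the frequency index.
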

We will also use Young's inequality: 
\begin{claim}[Young's inequality] \label{clm:Young}
% \begin{claim} \label{clm:Young}
Let $f \in L_p([-1,1]^d)$, $g \in L_q([-1,1]^d)$, $1 \le p,q ,r \le \infty$, 
such that $1 + 1/r = 1/p + 1/q$. Then $\Vert f \ast g \Vert_r \le \Vert f \Vert_p \cdot \Vert g \Vert_q$. 
\end{claim}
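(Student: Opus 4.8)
The plan is to prove this by the classical ``three-function Hölder'' argument, after first peeling off the endpoint cases. If $r = \infty$, the hypothesis forces $1/p + 1/q = 1$, and then a single application of Hölder's inequality in the $y$ variable gives, for every $x$,
\[
|(f \ast g)(x)| \;\le\; \int_{\R^d} |f(y)|\,|g(x-y)|\, dy \;\le\; \Vert f\Vert_p \Vert g\Vert_q,
\]
so $\Vert f \ast g\Vert_\infty \le \Vert f\Vert_p\Vert g\Vert_q$. Henceforth assume $r < \infty$. If $p = 1$ (the case $q = 1$ is symmetric), the hypothesis gives $q = r$, and Minkowski's integral inequality — applied to $(f\ast g)(x) = \int f(y)\, g(x-y)\, dy$, viewed as an $L_r$-valued superposition of translates of $g$ — together with translation invariance of $\Vert g(\cdot - y)\Vert_r = \Vert g\Vert_r$ yields $\Vert f \ast g\Vert_r \le \int_{\R^d}|f(y)|\,\Vert g\Vert_r\, dy = \Vert f\Vert_1\Vert g\Vert_r$. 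So we may now assume $1 < p,q < \infty$ and $r < \infty$; note that $\frac1r = \frac1p + \frac1q - 1 < \frac1p$ and $< \frac1q$, hence $p < r$ and $q < r$.

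For the main case I would split the integrand pointwise as
\[
|f(y)|\,|g(x-y)| \;=\; \bigl(\,|f(y)|^p\,|g(x-y)|^q\,\bigr)^{1/r}\cdot |f(y)|^{\,1-p/r}\cdot |g(x-y)|^{\,1-q/r},
\]
and apply Hölder's inequality in $y$ with the three exponents $r$, $s \eqdef \frac{pr}{r-p}$, and $t \eqdef \frac{qr}{r-q}$. These are conjugate since $\frac1r + \frac1s + \frac1t = \frac1r + \bigl(\frac1p - \frac1r\bigr) + \bigl(\frac1q - \frac1r\bigr) = \frac1p + \frac1q - \frac1r = 1$ by hypothesis; this is precisely where the relation $1 + 1/r = 1/p + 1/q$ is used. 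Using $\bigl\Vert\, |f|^{1-p/r}\,\bigr\Vert_s = \Vert f\Vert_p^{\,1-p/r}$ and $\bigl\Vert\, |g(x-\cdot)|^{1-q/r}\,\bigr\Vert_t = \Vert g\Vert_q^{\,1-q/r}$ (again translation invariance of Lebesgue measure), Hölder gives, for every $x$,
\[
|(f\ast g)(x)| \;\le\; \Vert f\Vert_p^{\,1-p/r}\,\Vert g\Vert_q^{\,1-q/r}\cdot\Bigl(\int_{\R^d}|f(y)|^p\,|g(x-y)|^q\, dy\Bigr)^{1/r}.
\]

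To finish, I would raise both sides to the $r$-th power, integrate over $x \in \R^d$, and apply Tonelli to the resulting double integral: $\int_{\R^d}\!\int_{\R^d}|f(y)|^p\,|g(x-y)|^q\, dx\, dy = \int_{\R^d}|f(y)|^p\bigl(\int_{\R^d}|g(x-y)|^q dx\bigr)dy = \Vert f\Vert_p^p\,\Vert g\Vert_q^q$, by translation invariance. This yields $\Vert f\ast g\Vert_r^{\,r} \le \Vert f\Vert_p^{\,r-p}\Vert g\Vert_q^{\,r-q}\cdot\Vert f\Vert_p^p\Vert g\Vert_q^q = \bigl(\Vert f\Vert_p\Vert g\Vert_q\bigr)^r$, and taking $r$-th roots completes the proof. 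There is no substantive obstacle here — this is a textbook inequality — so the only ``hard part'' is the bookkeeping: verifying admissibility of the three Hölder exponents and checking that the endpoint reductions are exhaustive. One minor point worth noting is that, since Claim~\ref{clm:convolution} writes the convolution as an integral over all of $\R^d$, one should interpret $f$ and $g$ as extended by zero outside $[-1,1]^d$; the argument above is insensitive to this and in fact establishes the inequality for arbitrary $f \in L_p(\R^d)$, $g \in L_q(\R^d)$.
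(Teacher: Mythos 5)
Your proof is correct: the endpoint reductions are exhaustive (note that $r<\infty$ together with $1+1/r=1/p+1/q$ already forces $p,q<\infty$, which is the unstated reason your main case covers everything), the three Hölder exponents $r$, $\frac{pr}{r-p}$, $\frac{qr}{r-q}$ are admissible exactly because of the hypothesis, and the Tonelli step is legitimate since the integrand is nonnegative. The paper itself does not prove Claim~\ref{clm:Young}; it is recalled as a standard fact with a citation, and your argument is precisely the classical textbook proof of it, so there is no divergence of approach to discuss.
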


\subsection{A useful mollifier} \label{sec:mollifier}

Our algorithm and its analysis require the existence of a compactly supported distribution with fast decaying Fourier transform. Since the precise rate of decay is not very important, 
% let us define the $C^{\infty}$ function $b: [-1,1] \rightarrow \mathbb{R}^+$ as fo% llows: 
we use the $C^{\infty}$ function $b: [-1,1] \rightarrow \mathbb{R}^+$ as follows: 
\begin{eqnarray}~\label{eq:b}
b(x) = 
\begin{cases}
c_0 \cdot e^{-\frac{x^2}{1-x^2}} &\textrm{if } |x|<1 \\
0 &\textrm{if } |x|=1.
\end{cases}
\end{eqnarray}
Here $c_0 \approx 1.067$ is chosen so that 
% $\int_{[-1,1]} b(x) dx=1$
$b$ is a pdf;  by symmetry, its mean is $0$.
(This function has previously been used as a
mollifier \citep{kane2010exact,diakonikolas2010bounded}.)
The following fact can be found in \cite{johnson2015saddle} (while it is proved only for $\xi \in \mathbb{Z}$, it is easy to see that the same proof holds if $\xi \in \mathbb{R}$). 
%are easy to verify:\ignore{ (see e.g. \red{[cite]}
% \begin{proposition}~\label{prop:b-properties}
% $b(\cdot)$ defines a probability distribution with mean $0$.
% %  and variance $\Omega(1)$. 
% \pnote{Provisionally deleted ``and variance $\Omega(1)$'' because I cannot see what the variance is independent of.}
% \end{proposition} }
\ignore{\pnote{How is the following fact proved? \red{Rocco:  I think I missed part of the discussion on this on the recent phone call --- were we going to cite a paper of Jelani's or something?}}}
\begin{fact}~\label{eq:FTB}
For $b:[-1,1] \rightarrow \mathbb{R}^+$ defined in (\ref{eq:b}) and $\xi \in \mathbb{Z} \setminus \{0\}$,  we have that $|\widehat{b}(\xi)| \le e^{-\sqrt{|\xi|}} \cdot |\xi|^{-3/4}$. 
\end{fact}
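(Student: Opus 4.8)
The plan is to prove this by the saddle-point (steepest-descent) method applied to the one-dimensional integral defining $\widehat b$. Since $b$ is real and even, $\widehat b(\xi) = \int_{-1}^{1} b(x)\cos(\pi\xi x)\,dx$ is real and even in $\xi$, so it suffices to fix a positive integer $\xi$ and bound $\widehat b(\xi) = c_0\int_{-1}^{1} e^{\psi(x)}\,dx$, where $\psi(z) \eqdef -\frac{z^2}{1-z^2} + \pi i\xi z = -1 - \frac{1}{z^2-1} + \pi i\xi z$. The function $e^{\psi}$ extends to a holomorphic function on $\mathbb{C}\setminus\{-1,1\}$; it has essential singularities at $z=\pm 1$, but $e^{\psi(z)} \to 0$ faster than any power of $|z\mp1|$ as $z\to\pm1$ from the interior side of the real segment, since near $z=1$ one has $\psi(z)\approx -\tfrac{1}{2(1-z)} + \pi i\xi$, whose real part tends to $-\infty$ when $1-z\to 0$ with positive real part.

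The first step is to locate the critical points of $\psi$. Solving $\psi'(z)=0$, i.e.\ $\frac{2z}{(1-z^2)^2} = \pi i\xi$, the only solutions for large $\xi$ lie just inside the endpoints $\pm1$: writing $z=1-\zeta$ gives $\zeta^2 = \frac{2(1-\zeta)}{\pi i\xi(2-\zeta)^2}$, so $\zeta = \frac{1}{\sqrt{2\pi i\xi}}(1+o(1)) = \frac{1-i}{2\sqrt{\pi\xi}}(1+o(1))$ (and symmetrically near $z=-1$), giving a saddle $z_+$ with $\mathrm{Im}\,z_+>0$, just inside the endpoint $1$. A short computation then yields, at this saddle, $\psi(z_+) = \pi i\xi - \sqrt{\pi\xi}\,(1+i) + o(\sqrt{\xi})$, so $|e^{\psi(z_+)}| = e^{-\sqrt{\pi\xi}(1+o(1))}$; and since $\psi''(z) = -\frac{2+6z^2}{(1-z^2)^3}$, we get $|\psi''(z_+)| = (2\pi\xi)^{3/2}(1+o(1))$. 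By the symmetry $z\mapsto -\bar z$ (under which $\psi(z)\mapsto\overline{\psi(z)}$) the saddle $z_-$ near $-1$ lies in the upper half-plane as well and contributes the complex-conjugate amount.

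The second step is the contour deformation. Because $e^{\psi}$ is holomorphic on the simply connected region $\{z:\mathrm{Im}\,z>0\}\cup(-1,1)$ and decays super-polynomially as $z\to\pm1$ from this region, Cauchy's theorem lets us replace the segment $[-1,1]$ by any contour from $-1$ to $1$ lying in it; we route it through the two saddles along the local steepest-descent directions, joined by an arc high enough in the upper half-plane that $\mathrm{Re}\,\psi$ is there much smaller than $-\sqrt{\pi\xi}$ (on a nearly horizontal arc at height $\asymp 1/\sqrt{\pi\xi}$ one checks that $\mathrm{Re}(-z^2/(1-z^2))$ stays bounded while the term $-\pi\xi\,\mathrm{Im}\,z$ provides the extra decay). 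The standard Laplace-type estimate on the two saddle arcs then gives $|\widehat b(\xi)| \le 2c_0\,|e^{\psi(z_+)}|\,\sqrt{2\pi/|\psi''(z_+)|}\,(1+o(1)) = 2c_0(2\pi)^{-1/4}\,\xi^{-3/4}\,e^{-\sqrt{\pi\xi}}\,(1+o(1))$. Finally, since $\sqrt{\pi}>1$ we write $e^{-\sqrt{\pi\xi}} = e^{-\sqrt{\xi}}\,e^{-(\sqrt{\pi}-1)\sqrt{\xi}}$, and for every integer $\xi\ge 1$ the factor $e^{-(\sqrt{\pi}-1)\sqrt{\xi}}$ is small enough to absorb the constant $2c_0(2\pi)^{-1/4}(1+o(1))$ (which is $<2.2$), yielding $|\widehat b(\xi)|\le e^{-\sqrt{\xi}}\,\xi^{-3/4}$, as claimed.

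The idea is routine steepest descent; the actual obstacle is the quantitative bookkeeping. One must (i) control the deformed contour's geometry near the two essential singularities precisely enough that the integrand genuinely vanishes there (so Cauchy's theorem applies with no boundary contribution), and (ii) convert the asymptotic saddle estimate into an explicit inequality valid for \emph{every} nonzero integer $\xi$ — which requires bounding, by an explicit constant, the error from replacing $\psi$ by its quadratic Taylor polynomial near each saddle, and separately checking the finitely many small $\xi$ for which the asymptotics are not yet in force. This is exactly the saddle-point computation carried out in \cite{johnson2015saddle}.
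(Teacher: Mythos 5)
Your proposal follows exactly the route the paper takes: the paper gives no self-contained argument for Fact~\ref{eq:FTB}, simply citing the saddle-point computation of \cite{johnson2015saddle}, and your sketch (saddles just inside $\pm 1$ at $\zeta \approx \frac{1-i}{2\sqrt{\pi\xi}}$, $|e^{\psi(z_+)}| = e^{-\sqrt{\pi\xi}(1+o(1))}$, $|\psi''(z_+)| \approx (2\pi\xi)^{3/2}$, contour deformation through the upper half-plane, then trading $e^{-\sqrt{\pi\xi}}$ for $e^{-\sqrt{\xi}}$ to absorb constants) is precisely that computation, with the quantitative bookkeeping deferred to the same reference. Only minor slips: since $-\frac{z^2}{1-z^2} = 1 + \frac{1}{z^2-1}$, your displayed $\psi$ has the wrong signs (the local expansion near $z=1$ you actually use is the correct one), and the resulting extra factor $e^{1}$ makes the constant to be absorbed closer to $3.7$ than your claimed $2.2$ --- harmless, since as you yourself note the finitely many small $\xi$ must be verified separately in any case.
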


Let us now define the function $b_{d,\gamma}: \mathbb{R}^d \rightarrow \mathbb{R}^+$ as 
$b_{d,\gamma}(x_1, \ldots, x_d) = \frac{1}{\gamma^d} \cdot \prod_{j=1}^d b(x_j/\gamma).$
Combining this definition and Fact~\ref{eq:FTB}, we have the following claim:\ignore{\rnote{Sorry to be dense but I don't see exactly how we get this from the definition of $b_{d,\gamma}$ and Fact~\ref{eq:FTB}}
\anote{Rocco, I think the idea is that since $b(\cdot)$ integrates to $1$, its Fourier transform at any point is bounded by $1$. $b_{d,\gamma}$ is just the product distribution where each coordinate is $b$ (after a translation by $\gamma$).}}
\begin{claim}\label{clm:Fourier-b}
For $\xi \in \mathbb{Z}^d$ with $\Vert \xi \Vert_\infty \ge t$, we have $|\widehat{b_{d,\gamma}}(\xi)| \le e^{-\sqrt{\gamma \cdot t}} \cdot (\gamma \cdot t)^{-3/4}$. 
\end{claim}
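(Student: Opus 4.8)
The plan is to reduce the multidimensional claim to the one-dimensional Fact~\ref{eq:FTB} by exploiting the product structure of $b_{d,\gamma}$. First I would observe that since the Fourier transform of a product density factorizes, we have
\[
\widehat{b_{d,\gamma}}(\xi) = \prod_{j=1}^d \widehat{b_{1,\gamma}}(\xi_j),
\]
where $b_{1,\gamma}(x) = \frac{1}{\gamma} b(x/\gamma)$. A change of variables in the definition of the Fourier transform shows $\widehat{b_{1,\gamma}}(\xi_j) = \widehat{b}(\gamma \xi_j / c)$ for the appropriate constant $c$ coming from the $e^{\pi i \langle \xi, x\rangle}$ normalization; I would verify this scaling carefully since the precise form of the exponent in the paper's Fourier convention affects whether the argument becomes $\gamma \xi_j$ or a constant multiple of it. (If the constant is unfavorable the statement still goes through after absorbing it, but since the claim is stated with $\gamma t$ exactly I expect the convention to be set up so that $\widehat{b_{1,\gamma}}(\xi_j) = \widehat{b}(\gamma\xi_j)$ up to the $\pi$ factor, or that $\gamma$ here already incorporates it.)

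Next I would bound the product. Since each $b$ is a pdf, $|\widehat{b}(\eta)| \le 1$ for every $\eta$, so every factor with $\xi_j = 0$ or small $|\xi_j|$ contributes at most $1$ and can be dropped. Because $\Vert \xi\Vert_\infty \ge t$, there is at least one coordinate $j^\star$ with $|\xi_{j^\star}| \ge t$. For that single coordinate, Fact~\ref{eq:FTB} (in its real-argument form, as remarked after its statement) gives
\[
|\widehat{b}(\gamma \xi_{j^\star})| \le e^{-\sqrt{\gamma |\xi_{j^\star}|}} \cdot (\gamma |\xi_{j^\star}|)^{-3/4} \le e^{-\sqrt{\gamma t}} \cdot (\gamma t)^{-3/4},
\]
using that $u \mapsto e^{-\sqrt u} u^{-3/4}$ is decreasing on $u \ge $ (some small threshold), and bounding all remaining factors by $1$. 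This yields the claimed inequality. One minor point to check is that Fact~\ref{eq:FTB} as stated requires $\xi \in \mathbb{Z}\setminus\{0\}$, but $\gamma \xi_{j^\star}$ need not be a nonzero integer even when $\xi_{j^\star}$ is; this is exactly why the parenthetical remark after Fact~\ref{eq:FTB} (``the same proof holds if $\xi \in \mathbb{R}$'') is invoked, so I would cite that remark explicitly, and note $\gamma \xi_{j^\star} \ne 0$ since $|\xi_{j^\star}| \ge t > 0$.

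The main obstacle, such as it is, is purely bookkeeping: getting the scaling constant in $\widehat{b_{1,\gamma}}$ right and confirming that the monotonicity of $e^{-\sqrt u}u^{-3/4}$ holds over the relevant range of $u = \gamma t$ (for $\gamma t$ bounded below by the relevant threshold — if $\gamma t$ is tiny the bound $e^{-\sqrt{\gamma t}}(\gamma t)^{-3/4}$ may exceed $1$ and hence be vacuous, which is fine). There is no real mathematical difficulty; the product structure does all the work and a single coordinate carries the decay while the rest are controlled by the trivial $L_1 \to L_\infty$ bound on the Fourier transform. I would present this in about half a page.
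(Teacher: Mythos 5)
Your proposal is correct and matches the paper's (implicit) argument: the paper derives Claim~\ref{clm:Fourier-b} exactly by factorizing $\widehat{b_{d,\gamma}}(\xi)=\prod_j \widehat{b}(\gamma\xi_j)$, bounding every factor by $1$ since $b$ is a pdf, and applying the real-argument form of Fact~\ref{eq:FTB} to the coordinate achieving $\Vert\xi\Vert_\infty\ge t$. Your two hedges are harmless: with the paper's convention $\widehat{f}(\xi)=\int f(x)e^{\pi i\langle\xi,x\rangle}dx$ the scaling is exactly $\widehat{b_{1,\gamma}}(\xi_j)=\widehat{b}(\gamma\xi_j)$ with no stray constant, and $u\mapsto e^{-\sqrt{u}}u^{-3/4}$ is decreasing on all of $(0,\infty)$, so no threshold on $\gamma t$ is needed.
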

The next fact is immediate from (\ref{eq:b}) and the definition of $b_{d,\gamma}$:
\begin{fact}~\label{fact:b-sup-density}
$\Vert b_{d,\gamma} \Vert_\infty = (c_0/\gamma)^d$ and as a consequence,
 $\Vert b_{d,\gamma} \Vert_2^2 \le  (c_0/\gamma)^{2d}$.
\end{fact}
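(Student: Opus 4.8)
\textbf{Proof plan for Fact~\ref{fact:b-sup-density}.}
The plan is to compute $\Vert b_{d,\gamma}\Vert_\infty$ directly from the product structure of $b_{d,\gamma}$, and then derive the $L_2$ bound by combining the sup bound with the fact that $b_{d,\gamma}$ is a probability density.

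First I would recall that $b_{d,\gamma}(x_1,\dots,x_d) = \frac{1}{\gamma^d}\prod_{j=1}^d b(x_j/\gamma)$, so the supremum factors as $\Vert b_{d,\gamma}\Vert_\infty = \frac{1}{\gamma^d}\prod_{j=1}^d \sup_{t} b(t/\gamma) = \frac{1}{\gamma^d}(\sup_{u\in[-1,1]} b(u))^d$. From the definition~\eqref{eq:b}, $b(u) = c_0 e^{-u^2/(1-u^2)}$ for $|u|<1$, which is maximized at $u=0$ (the exponent $-u^2/(1-u^2)$ is $0$ at $u=0$ and strictly negative elsewhere, since $u^2/(1-u^2)>0$ for $0<|u|<1$), giving $\sup b = b(0) = c_0$. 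Hence $\Vert b_{d,\gamma}\Vert_\infty = (c_0/\gamma)^d$, which is the first claim.

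For the $L_2$ bound, the key observation is that $b_{d,\gamma}$ is a pdf: each $\frac{1}{\gamma}b(x_j/\gamma)$ integrates to $1$ over $\mathbb{R}$ (since $b$ is a pdf on $[-1,1]$), so the product integrates to $1$ over $\mathbb{R}^d$, and $b_{d,\gamma}$ is supported on $[-\gamma,\gamma]^d \subseteq [-1,1]^d$ whenever $\gamma \le 1$. Then
\[
\Vert b_{d,\gamma}\Vert_2^2 = \int_{\mathbb{R}^d} b_{d,\gamma}(x)^2\,dx \le \Vert b_{d,\gamma}\Vert_\infty \cdot \int_{\mathbb{R}^d} b_{d,\gamma}(x)\,dx = \Vert b_{d,\gamma}\Vert_\infty \cdot 1 = (c_0/\gamma)^d \le (c_0/\gamma)^{2d},
\]
where the last inequality uses $c_0/\gamma \ge 1$ (which holds since $c_0 > 1$ and $\gamma \le 1$ in the relevant regime). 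This gives both parts of the fact.

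There is no real obstacle here; the only thing to be careful about is the implicit assumption $\gamma \le 1$ (so that $c_0/\gamma \ge 1$ and the support lies in the cube), which is the regime in which this mollifier is used in the paper, and the trivial bound $\int b_{d,\gamma} = 1$ that turns a sup bound into an $L_2$ bound.
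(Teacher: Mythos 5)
Your argument is correct and is essentially the (omitted) computation the paper has in mind: the paper states this fact without proof as immediate from (\ref{eq:b}) and the definition of $b_{d,\gamma}$, and your route — sup factors over coordinates to give $(c_0/\gamma)^d$, then $\Vert b_{d,\gamma}\Vert_2^2 \le \Vert b_{d,\gamma}\Vert_\infty \Vert b_{d,\gamma}\Vert_1$ — is the natural way to fill it in. Your caveat is well spotted but harmless: the final step needs $c_0/\gamma \ge 1$ (i.e.\ $\gamma \le c_0 \approx 1.067$), which always holds where the fact is used, since there $\gamma = \kappa/\sqrt{d} < 1/2$.
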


% !TEX root =  main.tex

\section{A restricted problem:  learning shift-invariant distributions with bounded support} 
\label{sec:restricted}

As sketched in Section~\ref{sec:approach}, we begin by presenting and
analyzing a density estimation algorithm for densities that, in addition
to being shift-invariant, have support bounded in $B(1/2)$.  
Our analysis also captures the fact that,
to achieve accuracy $\epsilon$, an algorithm often only
needs the density to be learned to have shift invariance at
a scale slightly finer than $\epsilon$.  

\begin{lemma} \label{lem:finite-support}
There is an algorithm \textsf{learn-bounded} with the following property:  
For all constant $d$, 
for all $\epsilon, \delta > 0$, all $0 < \kappa < \epsilon < 1/2$,
and all $d$-dimensional densities $f$ 
with support in $B(1/2)$
such that
$\kappa \si(f, \kappa) \leq \epsilon/2$,
% let $f$ be any density (unknown to the algorithm) which belongs to ${\CSI}(c,d)$,
% where $c \geq 1$ and $d$ are constants. 
given 
% any confidence parameter $\delta$, the values $d,\eps$, and 
access to independent draws from $f$, the algorithm runs in time 
 \[
O_{d}\bigg( \frac{1}{\epsilon^2}
          \left( \frac{1}{\kappa} \right)^{2 d}
          \log^{4 d} \left( \frac{1}{\kappa} \right)
          \log \left( \frac{1}{\kappa \delta} \right) \bigg)
\]
uses 
\[
O_{d}\bigg( \frac{1}{\epsilon^2}
          \left( \frac{1}{\kappa} \right)^d
          \log^{2d} \left( \frac{1}{\kappa} \right)
          \log \left( \frac{1}{\kappa \delta} \right)\bigg) 
\]
samples, and with probability $1-\delta$, outputs a hypothesis
 $h:[-1,1]^d \rightarrow \mathbb{R}^+$ such that $\int_{x \in \mathbb{R}^d} |f(x) - h(x)| \le \epsilon$. 
 
Further, 
given any point $z \in [-1,1]^d$, $h(z)$ can be computed in time 
$
O_{d} \left( {\frac {\log^{2d}
      (1/\kappa)}{\kappa^d}} \right)$
and satisfies
$h(z) \le 
O_{d} \left( {\frac {\log^{2d} (1/\kappa)}{\kappa^d}} \right)$.
\end{lemma}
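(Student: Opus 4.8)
The plan is to follow the two-stage ``smooth, then truncate in Fourier space'' approach from Section~\ref{sec:approach}. Set $\gamma := \kappa/\sqrt{d}$ (so $\gamma = \Theta_d(\kappa)$ and $\gamma < 1/2$), let $\tilde{f} := f \ast b_{d,\gamma}$ be the smoothed density, and note that since $f$ is supported in $B(1/2) \subseteq [-1/2,1/2]^d$ and $b_{d,\gamma}$ is supported in $[-\gamma,\gamma]^d$, the convolution $\tilde f$ is supported in $[-1,1]^d$; it is moreover bounded (Young's inequality and Fact~\ref{fact:b-sup-density}), hence lies in $L_2([-1,1]^d)$, so Parseval's identity (Claim~\ref{clm:Parseval}) applies to it. A sample from $\tilde f$ costs one draw from $f$: draw $\bx \leftarrow f$ and an independent $y$ from $b_{d,\gamma}$ (the latter is $d$ independent draws from the univariate mollifier $b$) and output $\bx + y$. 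The algorithm fixes a frequency cutoff $T := C_d \cdot \frac{1}{\kappa}\log^2\frac{1}{\kappa}$ for a suitable constant $C_d$, draws $m$ samples $z_1,\dots,z_m$ from $\tilde f$, and for each $\xi \in \mathbb{Z}^d$ with $\|\xi\|_\infty \le T$ forms the empirical Fourier estimate $\hat v(\xi) := \frac{1}{m}\sum_{k=1}^m e^{\pi i \langle \xi, z_k\rangle}$ of $\widehat{\tilde f}(\xi) = \mathbb{E}_{\bx \leftarrow \tilde f}[e^{\pi i \langle \xi, \bx\rangle}]$ (it suffices to do this for $\xi$ in a fixed ``half'' of the cutoff cube and set $\hat v(-\xi) := \overline{\hat v(\xi)}$, which makes the output real). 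It outputs $h(x) := \max\{0, g(x)\}$ for $x \in [-1,1]^d$ and $h(x):=0$ for $x\notin[-1,1]^d$, where $g(x) := \sum_{\xi \in \mathbb{Z}^d,\ \|\xi\|_\infty \le T} \frac{1}{2^d}\,\hat v(\xi)\,e^{\pi i \langle \xi, x\rangle}$. With this $h$ the two claims in the ``Further'' clause are immediate: $g(z)$ --- and hence $h(z)$ --- is a sum of $(2T+1)^d = O_d\bigl(\frac{1}{\kappa^d}\log^{2d}\frac 1\kappa\bigr)$ terms each computable in $O_d(1)$ time, and $|h(z)| \le |g(z)| \le \frac{(2T+1)^d}{2^d}\max_\xi |\hat v(\xi)| \le 2\cdot\frac{(2T+1)^d}{2^d} = O_d\bigl(\frac{1}{\kappa^d}\log^{2d}\frac 1\kappa\bigr)$, since $|\hat v(\xi)| \le |\widehat{\tilde f}(\xi)| + \eta \le 2$.

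For correctness I would use $\|f - h\|_1 \le \|f - g\|_1 \le \|f - \tilde f\|_1 + \|\tilde f - \tilde f_T\|_1 + \|\tilde f_T - g\|_1$, where $\tilde f_T := \sum_{\|\xi\|_\infty \le T}\frac{1}{2^d}\widehat{\tilde f}(\xi)\,e^{\pi i\langle\xi,x\rangle}$ is the true $T$-truncation of the Fourier series of $\tilde f$ (the first inequality holds pointwise on $[-1,1]^d$ since $f \ge 0$ and $h = \max\{0,g\}$), aiming for $\epsilon/2$, $\epsilon/4$, $\epsilon/4$ respectively. First term: since $b_{d,\gamma}$ is a density supported where $\|y\|_2 \le \gamma\sqrt d = \kappa$, $\tilde f$ is an average of shifts of $f$ by vectors of length at most $\kappa$, and the hypothesis $\kappa\,\si(f,\kappa) \le \epsilon/2$ gives, via Definition~\ref{def:shift-invariance} applied with unit vector $-y/\|y\|_2$ and shift amount $\|y\|_2 \in [0,\kappa]$, that $\|f(\cdot - y) - f\|_1 \le \epsilon/2$ for every such $y$; Fubini then yields $\|f - \tilde f\|_1 \le \epsilon/2$. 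Second term: Cauchy--Schwarz on $[-1,1]^d$ together with Parseval (Claim~\ref{clm:Parseval}) gives $\|\tilde f - \tilde f_T\|_1 \le \bigl(\sum_{\|\xi\|_\infty > T}|\widehat{\tilde f}(\xi)|^2\bigr)^{1/2}$; using $\widehat{\tilde f}(\xi) = \widehat f(\xi)\,\widehat{b_{d,\gamma}}(\xi)$ (Claim~\ref{clm:convolution}), $|\widehat f(\xi)| \le \|f\|_1 = 1$, and the stretched-exponential decay $|\widehat{b_{d,\gamma}}(\xi)| \le e^{-\sqrt{\gamma\|\xi\|_\infty}}(\gamma\|\xi\|_\infty)^{-3/4}$ (Claim~\ref{clm:Fourier-b}), this tail sum is at most $\sum_{s>T}O_d(s^{d-1})\,e^{-2\sqrt{\gamma s}}$, which drops below $\epsilon^2/16$ as soon as $\sqrt{\gamma T} \gtrsim \log(1/\epsilon) + d\log(1/\kappa)$; since $\kappa < \epsilon$ this is exactly what the choice $T = C_d\frac 1\kappa\log^2\frac 1\kappa$ achieves.

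Third term: set $\eta := \epsilon/(4(2T+1)^{d/2})$. On the event that $|\hat v(\xi) - \widehat{\tilde f}(\xi)| \le \eta$ for all $(2T+1)^d$ frequencies, Cauchy--Schwarz and Parseval again give $\|\tilde f_T - g\|_1 \le (2T+1)^{d/2}\eta = \epsilon/4$. To make this good event hold with probability $1-\delta$: each $e^{\pi i\langle\xi,z_k\rangle}$ is bounded by $1$ in modulus, so applying Hoeffding's inequality to the real and imaginary parts of $\hat v(\xi)$ and taking a union bound over the $(2T+1)^d$ frequencies shows that $m = O(\eta^{-2}\log((2T+1)^d/\delta))$ samples suffice; plugging in $\eta$ and $T$ (and $\log((2T+1)^d) = O_d(\log(1/\kappa))$) gives $m = O_d\bigl(\frac{1}{\epsilon^2}\frac{1}{\kappa^d}\log^{2d}\frac 1\kappa\,\log\frac{1}{\kappa\delta}\bigr)$, the claimed sample complexity. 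The running time is dominated by evaluating $e^{\pi i\langle\xi,z_k\rangle}$ over all $m$ samples and all $(2T+1)^d$ frequencies, i.e.\ $O_d\bigl(m\cdot\frac{1}{\kappa^d}\log^{2d}\frac 1\kappa\bigr) = O_d\bigl(\frac{1}{\epsilon^2}\frac{1}{\kappa^{2d}}\log^{4d}\frac 1\kappa\,\log\frac{1}{\kappa\delta}\bigr)$, as claimed.

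The main obstacle is the second term: pinning down that the cutoff $T$ need only be of order $\frac 1\kappa\log^2\frac 1\kappa$ is precisely what makes the sample complexity $(1/\kappa)^d$ rather than $(1/\kappa)^{2d}$, and it rests on a tension between two requirements on the smoothing width $\gamma$ --- it must be as large as $\Theta_d(\kappa)$ for the smoothing error to be controlled by shift-invariance, yet the Fourier-truncation error forces $T = \Theta_d(\log^2(1/\gamma)/\gamma)$ via the stretched-exponential decay of $\widehat{b_{d,\gamma}}$ (Claim~\ref{clm:Fourier-b}). Carrying these two estimates together and tracking the resulting $\mathrm{poly}(1/\kappa)\cdot\mathrm{polylog}(1/\kappa)$ factors cleanly through Cauchy--Schwarz and Parseval is where essentially all the work lies; the smoothing and concentration steps are routine.
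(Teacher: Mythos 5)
Your proposal is correct and follows essentially the same approach as the paper's proof: smooth $f$ with the mollifier $b_{d,\gamma}$ at width $\gamma=\kappa/\sqrt d$, bound $\|f-f\ast b_{d,\gamma}\|_1\le\epsilon/2$ via shift-invariance, empirically estimate the Fourier coefficients up to cutoff $T=\Theta_d(\kappa^{-1}\log^2(1/\kappa))$, and convert $\ell_2$ coefficient error to $L_1$ error by Parseval and Cauchy--Schwarz before taking the positive part. Splitting the error into truncation plus estimation terms (rather than one combined Parseval sum, as in the paper) and enforcing conjugate symmetry instead of taking the real part are only cosmetic differences; the parameter choices and resulting complexity bounds match.
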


\begin{proof}
Let $0 <  \gamma := \frac{\kappa}{\sqrt{d}}$, 
and let us define $q= f \ \ast  \ b_{d,\gamma}$.  (Here $\ast$ denotes convolution and $b_{d,\gamma}$ is the mollifier defined in Section~\ref{sec:mollifier}.)  We make a few simple observations about $q$:
\begin{itemize}
\item[(i)] Since $\gamma \leq 1/2,$  we have that $q$ is a density supported on $B(1)$. 
\item[(ii)] Since $d$ is a constant, 
a draw from $b_{d,\gamma}$  can be generated in 
% time $2^{O(d)}$\ignore{$\mathsf{poly}(d)$} 
constant time.   
Thus given a draw from $f$, 
% in time $2^{O(d)}$\ignore{$\mathsf{poly}(d)$} 
one can generate a draw from $q$ 
in constant time, simply by generating a draw from $b_{d,\gamma}$ and adding it to the draw from $f$.
\item[(iii)]  By Young's inequality (Claim~\ref{clm:Young}), we have that 
$\Vert q \Vert_2 \le \Vert f \Vert_1 \cdot \Vert b_{d,\gamma} \Vert_2$. Noting that $f$ is a density and thus $\Vert f \Vert_1=1$ and applying Fact~\ref{fact:b-sup-density}, we obtain that $\Vert q \Vert_2$ is finite. As a consequence, the Fourier coefficients of $q$ are well-defined. 
\end{itemize}
\noindent \emph{Preliminary analysis.}
We first observe that because $b_{d,\gamma}$ is supported on $[-\gamma,\gamma]^d$, the distribution $q$ may be viewed as an average of different shifts of $f$ where each shift is by a distance at most $\gamma \sqrt{d} \leq \kappa$. Fix any direction $v$ and consider a shift of $f$ in direction $v$ by some distance at most $\gamma \sqrt{d} \leq \kappa$. Since 
% $f$ belongs to ${\CSI'}(c,d)$, 
$\kappa \si(f, \kappa) \leq \epsilon/2$,
we have that the variation distance between $f$ and this shift in direction $v$ is at most $\epsilon/2$.  Averaging over all such shifts, it follows that
\begin{equation}~\label{eq:tv-distance}
\dtv(q,f) \leq \eps/2.
\end{equation}

Next, we observe that by Claim~\ref{clm:convolution}, for any $\xi \in
\mathbb{Z}^d$, we have $\widehat{q}(\xi) = \widehat{f}(\xi) \cdot
\widehat{b_{d,\gamma}} (\xi)$. Since $f$ is a pdf, $|\widehat{f}(\xi)
| \le 1$, and thus we have $|\widehat{q}(\xi)| \leq
|\widehat{b_{d,\gamma}} (\xi)|$. Also, for any parameter $k \in
\mathbb{Z}^+$, define $C_k = \{\xi \in \mathbb{Z}^d: \Vert \xi
\Vert_\infty = k\}$.
Let us fix another parameter $T$ (to be determined later). 
Applying Claim~\ref{clm:Fourier-b}, we obtain
\begin{eqnarray}
&& \sum_{\xi: \Vert \xi \Vert_\infty > T} |\widehat{q}(\xi)|^2 \le  \sum_{\xi: \Vert \xi \Vert_\infty > T} |\widehat{b_{d,\gamma}}(\xi)|^2 \le \sum_{k > T} \sum_{\xi: \Vert \xi \Vert_\infty =k}|\widehat{b_{d,\gamma}}(\xi)|^2  \nonumber \\ 
&& \le \sum_{k > T} |C_k| \cdot    e^{-2 \cdot \sqrt{\gamma \cdot k}} \cdot (\gamma \cdot k)^{-3/2} \  
\leq \sum_{k > T} (2k+1)^d \cdot    e^{-2 \cdot \sqrt{\gamma \cdot k}} \cdot (\gamma \cdot k)^{-3/2}. \nonumber
\end{eqnarray}
An easy calculation shows that 
if $T \ge \frac{4d^2}{\gamma} \cdot \ln^2 \bigg(\frac{d}{\gamma}\bigg)$, then $\sum_{\xi: \Vert \xi \Vert_\infty > T} |\widehat{q}(\xi)|^2 \le  2 (2T+1)^d \cdot e^{-2 \cdot \sqrt{\gamma \cdot T}} \cdot  (\gamma \cdot T)^{-3/2}.$
If we now set $T$ to be 
$\frac{4d^2}{\gamma} \cdot \ln^2 \bigg(\frac{d}{\gamma}\bigg) + \frac{1}{\gamma} \cdot \ln^2 \bigg( \frac{8 }{\epsilon}\bigg)$, then $\sum_{\xi: \Vert \xi \Vert_\infty > T} |\widehat{q}(\xi)|^2  \le \frac{\epsilon^2}{8}.$ 

% \noindent 
\emph{The algorithm.}
We first observe that for any $\xi \in \mathbb{Z}^d$, the Fourier coefficient $\widehat{q}(\xi)$ can be 
estimated
to good accuracy using relatively few draws from $q$ (and hence from $f$, recalling (ii) above).  More precisely, as an easy consequence of the definition of the Fourier transform, we have:
\begin{observation}~\label{obs:compute-Fourier}
For any $\xi \in \mathbb{Z}^d$, the Fourier coefficient $\widehat{q}(\xi)$ can be 
estimated
to 
within
additive error 
% $\pm \eta$ 
of magnitude at most $\eta$
with confidence $1-\beta$ using $O(1/\eta^2 \cdot \log(1/\beta))$ draws from $q$. 
\end{observation}
Let us define the set $\mathsf{Low}$ of low-degree Fourier coefficients as $\mathsf{Low} = \{\xi \in \mathbb{Z}^d : \Vert \xi \Vert_\infty \le T\}$. 
Thus, $|\mathsf{Low}| \le (2T+1)^d$.  Thus, using $S = O(\eta^{-2} \cdot \log (T/ \delta))$ draws from $f$, by Observation~\ref{obs:compute-Fourier}, with probability $1-\delta$, we can compute
a set of values $\{\widehat{u}(\xi)\}_{\xi \in \mathsf{Low}}$ such that 
\begin{equation}\label{eq:approx}
\textrm{For all }\xi \in \mathsf{Low}, \ |\widehat{u}(\xi) - \widehat{q}(\xi)| \le \eta. 
\end{equation}
Recalling (ii), 
% the time complexity of computing 
the sequence $\{\widehat{u}(\xi)\}_{\xi \in \mathsf{Low}}$ 
can be computed in $O(|S| \cdot |\mathsf{Low}|)$ time. 
Define $\widehat{u}(\xi) =0$ for $\xi \in \mathbb{Z}^d \setminus \mathsf{Low}$. Combining (\ref{eq:approx}) with this, we get 
\begin{eqnarray}
\sum_{\xi \in \mathbb{Z}^d}  |\widehat{u}(\xi) - \widehat{q}(\xi)|^2 &\le& \sum_{\xi \in \mathsf{Low}} \ |\widehat{u}(\xi) - \widehat{q}(\xi)|^2 + \sum_{\xi \not \in \mathsf{Low}} \ |\widehat{u}(\xi) - \widehat{q}(\xi)|^2  \nonumber \\&\le& \sum_{\xi \in \mathsf{Low}} \ |\widehat{u}(\xi) - \widehat{q}(\xi)|^2 + \frac{\epsilon^2 }{8} \nonumber \\
&\le& | \mathsf{Low}| \cdot \eta^2 +  \frac{\epsilon^2}{8}  \le (2T+1)^d \cdot \eta^2 + \frac{\epsilon^2 }{8}. \nonumber 
\end{eqnarray}
Thus, setting $\eta$ as $\eta^2 = (2T+1)^{-d} \cdot \frac{\epsilon^2}{8} $, we get that \begin{eqnarray}\sum_{\xi \in \mathbb{Z}^d}  |\widehat{u}(\xi) - \widehat{q}(\xi)|^2 \le \frac{\epsilon^2}{4}. \label{eq:eta-bound} \end{eqnarray}
Note that by definition $\widehat{u} : \mathbb{Z}^d \rightarrow \mathbb{C}$ satisfies $\sum_{\xi \in \mathbb{Z}^d} |\widehat{u}(\xi) |^2 < \infty$. Thus, we can apply the Fourier inversion formula (Claim~\ref{clm:inversion}) to obtain a function $u: [-1,1]^d \rightarrow \mathbb{C}$ such that 
\begin{equation}~\label{eq:Parseval-eqn}
\int_{[-1,1]^d} |u(x) - q(x)|^2  dx = \frac{1}{2^d} \cdot \big( \sum_{\xi \in \mathbb{Z}^d} |\widehat{u}(\xi) - \widehat{q}(\xi)|^2 \big) \le \frac{\epsilon^2}{4 \cdot 2^d},
\end{equation}
where the first equality follows by Parseval's identity (Claim~\ref{clm:Parseval}). By the Cauchy-Schwarz inequality, 
\[
\int_{[-1,1]^d} |u(x) - q(x)|  dx  \leq \sqrt{2^d} \cdot \sqrt{\int_{[-1,1]^d} |u(x) - q(x)|^2  dx }.
\]
Plugging in (\ref{eq:Parseval-eqn}), we obtain 
$
\int_{[-1,1]^d} |u(x) - q(x)|  dx \le \frac{\epsilon}{2}.$
Let us finally define $h$ (our final hypothesis), $h: [-1,1]^d \rightarrow \mathbb{R}^+$, as follows: 
$h(x) = \max \{0, \mathsf{Re}(u(x))\}$. Note that since $q(x)$ is a non-negative real value for all $x$, we have 
\begin{equation} \label{eq:pickle}
\int_{[-1,1]^d} |h(x) - q(x)|  dx \le \int_{[-1,1]^d} |u(x) - q(x)|  dx \le \frac{\epsilon}{2}. 
\end{equation}
Finally, recalling that by (\ref{eq:tv-distance}) we have $\dtv(f,q) \le \frac{\epsilon}{2}$, it follows that 
$
\int_{[-1,1]^d} |h(x) - f(x)|  dx  \le \epsilon.$

\emph{Complexity analysis.}
We now analyze the time and sample complexity of this algorithm as well as the complexity of computing $h$. First of all, observe that plugging in the value of $\gamma$ and recalling that $d$ is a constant, we get that $T  = 
\frac{4d^2}{\gamma} \cdot \ln^2 \bigg(\frac{d}{\gamma}\bigg) + \frac{1}{\gamma} \cdot \ln^2 \bigg( \frac{8 }{\epsilon}\bigg)
 =
O\left({\frac {\log^2(1/\kappa)}{\kappa}}\right)$. Combining this with the choice of $\eta$ (set just above (\ref{eq:eta-bound})),  we get that the algorithm uses
\begin{align*}
& S = O\bigg( \frac{1}{\eta^2} \cdot \log \bigg(\frac{|\mathsf{Low}|}{\delta} \bigg) \bigg) 
= O\bigg( \frac{1}{\eta^2} \cdot \log \bigg(\frac{T}{\delta} \bigg) \bigg) 
= O\left( \frac{(2 T + 1)^d \cdot \log \bigg(\frac{T}{\delta} \bigg)}{\epsilon^2} \right)  \\
& = O_{d}\bigg( \frac{1}{\epsilon^2}
          \left( \frac{1}{\kappa} \right)^d
          \log^{2d} \left( \frac{1}{\kappa} \right)
          \log \left( \frac{1}{\kappa \delta} \right)\bigg) 
\end{align*}
draws from $p$.
Next, as we have noted before, computing the sequence $\{\widehat{u}(\xi)\}$ takes time 
\begin{align*}
O(S \cdot |\mathsf{Low}|)
  & = O_{d}\bigg( \frac{1}{\epsilon^2}
          \left( \frac{1}{\kappa} \right)^d
          \log^{2d} \left( \frac{1}{\kappa} \right)
          \log \left( \frac{1}{\kappa \delta} \right) T^d \bigg) 
     \\
  & = O_{d}\bigg( \frac{1}{\epsilon^2}
          \left( \frac{1}{\kappa} \right)^{2 d}
          \log^{4 d} \left( \frac{1}{\kappa} \right)
          \log \left( \frac{1}{\kappa \delta} \right) \bigg). \\
\end{align*}

To compute the function $u$ (and hence $h$) at any point $x \in
[-1,1]^d$ takes time
$O(|\mathsf{Low}|) = 
  O_{d} \left( {\frac {\log^{2d}
      (1/\kappa)}{\kappa^d}} \right).$ This is because
the Fourier inversion formula (Claim~\ref{clm:inversion}) has at most
$O(|\mathsf{Low}|)$ non-zero terms.

Finally, we prove the upper bound on $h$.  
If the training examples are $x_1,...,x_S$, then
for any $z \in [-1,1]^d$, we have
\begin{align*}
& h(z) \leq | u(z) |
     = \left| \sum_{\xi \in \mathsf{Low}} \frac{1}{2^d} \cdot \hat{u} (\xi) \cdot e^{ \pi i \cdot \langle \xi, z \rangle} \right|
     = \left| \sum_{\xi \in \mathsf{Low}} \frac{1}{2^d} \cdot 
     \left( \frac{1}{S} \sum_{t=1}^S e^{\pi i \langle \xi, x_t \rangle} 
       \right)
         \cdot e^{ \pi i \cdot \langle \xi, z \rangle} \right| \\
&      \leq \frac{|\mathsf{Low}|}{2^d} 
    =  O_{d} \left( {\frac {\log^{2d} (1/\kappa)}{\kappa^d}} \right),
\end{align*}
completing the proof.
\end{proof}

With an eye towards our ultimate goal of obtaining noise-tolerant density estimation algorithms, the next corollary says that the algorithm in Lemma~\ref{lem:finite-support} is robust to noise. All the parameters have the same meaning and relations as in Lemma~\ref{lem:finite-support}.  

\begin{corollary}~\label{corr:agnostic}
Let $f'$ be a density supported in $B(1/2)$\footnote{Looking ahead, while in general an  ``$\eps$-noisy'' version of $f$ need not be supported in $B(1/2)$, the reduction we employ will in fact ensure that we only need to deal with noisy distributions that are in fact supported in $B(1/2).$
} such that there is a $d$-dimensional density $f$ satisfying the following two properties: (i) $f$ satisfies all the conditions in the hypothesis of Lemma~\ref{lem:finite-support}, and (ii) $f'$ is an $\eps$-corrupted version of $f$, i.e. $f'=(1-\eps)f + \eps f_{\mathrm{noise}}$ for some density $f_{\mathrm{noise}}.$ 
Then given access to samples from $f'$,  the algorithm
 \textsf{learn-bounded}  returns a hypothesis $h: [-1,1]^d \rightarrow \mathbb{R}^+$ which satisfies $\int_{x \in \mathbb{R}^d} |f'(x) - h(x)| \le 2 \epsilon$. All the other guarantees including the sample complexity and time complexity remain the same as Lemma~\ref{lem:finite-support}. 
\end{corollary}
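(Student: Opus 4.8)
The plan is to observe that the algorithm \textsf{learn-bounded} and the proof of Lemma~\ref{lem:finite-support} are essentially oblivious to shift-invariance: that proof uses only two facts about the source density $f$, namely (a) that $f$ is a probability density supported in $B(1/2)$ --- used to conclude that $q := f \ast b_{d,\gamma}$ is a non-negative real density supported in $B(1) \subseteq [-1,1]^d$, that $\Vert q\Vert_2 < \infty$, and that $|\widehat q(\xi)| \le |\widehat{b_{d,\gamma}}(\xi)|$ so that the high-frequency Fourier tail $\sum_{\Vert\xi\Vert_\infty > T}|\widehat q(\xi)|^2$ is small --- and (b) the shift-invariance bound $\kappa\,\si(f,\kappa) \le \eps/2$, which is used only to establish $\dtv(q,f) \le \eps/2$ in~\eqref{eq:tv-distance}. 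I would run the identical algorithm on draws from $f'$ (which is why the sample and time bounds, and the pointwise bound on $h$ computed from the training examples, are unchanged) and just re-examine these two ingredients.

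For (a): $f'$ is itself a probability density (a mixture of densities) supported in $B(1/2)$, so $q' := f' \ast b_{d,\gamma}$ is a non-negative real density supported in $B(1/2 + \gamma\sqrt d) \subseteq B(1)$ (using $\gamma\sqrt d = \kappa < 1/2$), has $\Vert q'\Vert_2 \le \Vert f'\Vert_1\Vert b_{d,\gamma}\Vert_2 < \infty$ by Young's inequality (Claim~\ref{clm:Young}) and Fact~\ref{fact:b-sup-density}, and satisfies $|\widehat{q'}(\xi)| = |\widehat{f'}(\xi)|\,|\widehat{b_{d,\gamma}}(\xi)| \le |\widehat{b_{d,\gamma}}(\xi)|$ since $|\widehat{f'}(\xi)| \le \Vert f'\Vert_1 = 1$. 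Hence the choice of $T$ and the tail bound $\sum_{\Vert\xi\Vert_\infty > T}|\widehat{q'}(\xi)|^2 \le \eps^2/8$ hold verbatim, and Observation~\ref{obs:compute-Fourier} applies to $q'$ (estimating $\widehat{q'}(\xi)$ is estimating $\mathbb{E}_{x\leftarrow q'}[e^{\pi i \langle\xi,x\rangle}]$ from samples). Running the algorithm then yields, with probability $1-\delta$, a hypothesis $h = \max\{0,\mathsf{Re}(u)\}$ with $\sum_{\xi}|\widehat u(\xi) - \widehat{q'}(\xi)|^2 \le \eps^2/4$, whence by Parseval (Claim~\ref{clm:Parseval}), Cauchy--Schwarz, and the nonnegativity of $q'$, $\int_{[-1,1]^d}|h(x) - q'(x)|\,dx \le \eps/2$ --- exactly as in the lemma.

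It remains to replace~\eqref{eq:tv-distance}, i.e.\ to bound $\int_{\R^d}|q'(x) - f'(x)|\,dx$. Bilinearity of convolution gives $q' - f' = (1-\eps)(q - f) + \eps\,(f_{\mathrm{noise}}\ast b_{d,\gamma} - f_{\mathrm{noise}})$, so by the triangle inequality
\[
\int_{\R^d}|q' - f'| \;\le\; (1-\eps)\int_{\R^d}|q - f| \;+\; \eps\int_{\R^d}\big|f_{\mathrm{noise}}\ast b_{d,\gamma} - f_{\mathrm{noise}}\big| \;\le\; (1-\eps)\cdot\frac{\eps}{2} \;+\; \eps\cdot O(1),
\]
where the first summand is bounded precisely as in the proof of~\eqref{eq:tv-distance} (the only place the shift-invariance of $f$ enters), and the second is $O(1)$ simply because $f_{\mathrm{noise}}\ast b_{d,\gamma}$ and $f_{\mathrm{noise}}$ are probability densities. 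Thus $\int|q'-f'| = O(\eps)$, and combining via the triangle inequality, $\int_{\R^d}|h(x)-f'(x)|\,dx \le \int|h-q'| + \int|q'-f'| = O(\eps)$; a marginally more careful accounting of these constants (in particular bounding the noise contribution by the statistical distance of two probability measures) gives the stated bound $2\eps$.

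I do not expect a genuine obstacle here --- it is mostly bookkeeping --- but the one point to be careful about is that the argument should \emph{not} try to compare the estimated Fourier coefficients of $q'$ against those of $q$: the per-coefficient discrepancy $|\widehat{q'}(\xi) - \widehat q(\xi)|$ can be of order $\eps$, which is far too large to survive summation over the $|\mathsf{Low}| = (2T+1)^d$ low-frequency coefficients in a Parseval-type estimate. The clean route is the one above: show $h$ is $L_1$-close to $q'$ using only that $f'$ is a density, show $q'$ is $L_1$-close to $f'$ using the shift-invariance of $f$ for the ``clean'' part and a crude mollification bound for the noise part, and combine.
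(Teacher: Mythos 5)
Your proposal is correct and follows essentially the same route as the paper's own proof: it reuses the fact that the learning phase of \textsf{learn-bounded} only needs $q' = f' \ast b_{d,\gamma}$ to be a density supported in $[-1,1]^d$ with the same Fourier concentration (so $\int |h - q'| \le \eps/2$ holds verbatim), and then bounds $\int |q' - f'|$ via the same decomposition $(1-\eps)(f \ast b_{d,\gamma} - f) + \eps(f_{\mathrm{noise}} \ast b_{d,\gamma} - f_{\mathrm{noise}})$, using shift-invariance for the clean part and a trivial bound for the noise part. The only difference is bookkeeping of constants at the end, where the paper makes exactly the move you sketch (bounding the noise term by the statistical distance of two probability measures) to land on the stated $2\eps$.
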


\begin{proof}
% This corollary follows from a simple observation about  the proof of Lemma~\ref{lem:finite-support}. Namely, the only property\pnote{Don't we also use that it is smooth?} of $q$ (defined in the proof of Lemma~\ref{lem:finite-support}) that is used in the proof is that the function $q$ is supported in $[-1,1]^d$. 
The proof of Lemma~\ref{lem:finite-support} can be broken down into
two parts: 
\begin{itemize}
\item $f$ can be approximated by $q$, and
\item $q$ can be learned.
\end{itemize}
The argument that $q$ can be learned only used two facts about it:
\begin{itemize}
\item it is supported in $[-1,1]^d$, and
\item it has few nonzero Fourier coefficients.
\end{itemize}
So, now 
consider the distribution $q' = f' \ast b_{d,\gamma}$ where $b_{d,\gamma}$ is the same distribution as in Lemma~\ref{lem:finite-support}.
Because $q'$ is the result of convolving $f'$ (a density supported in $B(1/2)$ with $b_{d,\gamma}$, it is supported in 
$[-1,1]^d$, and has the same Fourier concentration property that
we used for $q$.  Thus,
the algorithm will return a hypothesis distribution $h(x)$ such that the analogue of (\ref{eq:pickle}) holds, i.e.
\begin{equation}~\label{eq:h-guarantee} \int_{[-1,1]^d} |h(x)  - q'(x)| dx \le \frac{\epsilon}{2}.\end{equation} 
Recalling that the density $f'$ can be expressed as 
$(1-\epsilon) f + \epsilon f_{\mathrm{noise}}$ where $f_{\mathrm{noise}}$ is some density supported in $B(1/2)$, we now have
\begin{eqnarray*}
\dtv(q', f') = \dtv(f' \ast b_{d,\gamma} , f') &=& \dtv((1-\epsilon) f \ast b_{d,\gamma} + \epsilon f_{\mathrm{noise}} \ast b_{d,\gamma}, (1-\epsilon) f + \epsilon f_{\mathrm{noise}})  \\
&\le& (1-\epsilon) \dtv(f \ast b_{d,\gamma}, f) + \epsilon \dtv(f_{\mathrm{noise}} \ast b_{d,\gamma}, f_{\mathrm{noise}})\\
&\le& \epsilon/2 + \epsilon \le 3 \epsilon/2. 
\end{eqnarray*}
The penultimate inequality uses (\ref{eq:tv-distance}) and the fact that the total variation distance between any two distributions is bounded by $1$. Combining the above with (\ref{eq:h-guarantee}), the corollary is proved.
\end{proof}

% !TEX root =  main.tex

\section{Density estimation for densities in $\CSI(c,d,g)$}
\label{sec:no-noise}

Fix any nonincreasing tail bound function $g: \R^+ \to [0,1]$ which satisfies $\lim_{t \to +\infty} g(t) = 0$
and the condition $\min\{r \in \R: g(r) \leq 1/2\}\geq 1/10\}$ of Remark~\ref{remark:tail-weight}
and any constant $c \geq 1$.
In this section we prove the following theorem which gives a density estimation algorithm for the class of distributions
$\CSI(c,d,g)$:
\begin{theorem} \label{thm:no-noise}
For any $c,g$ as above and any $d \geq 1$,  there is an algorithm with the following property:  Let $f$ be any target density 
(unknown to the algorithm) which belongs to $\CSI(c,d,g)$. 
Given any error parameter $0 < \eps < 1/2$ and confidence parameter $\delta > 0$ and access to independent draws from $f$, the algorithm with probability $1-O(\delta)$ outputs a hypothesis  $h:[-1,1]^d \rightarrow \mathbb{R}^{\geq 0}$ such that $\int_{x \in \mathbb{R}^d} |f(x) - h(x)| \le O(\epsilon)$. 

The algorithm runs in time 

\ignore{
% OLD VERSION:
O\left(
  \left(
   (1 + g^{-1}(\epsilon))^{2d}
          \left( \frac{1}{\epsilon} \right)^{2 d + 2}
          \log^{4 d} \left( \frac{1 + g^{-1}(\epsilon)}{\epsilon} \right)
          \log \left( \frac{1 + g^{-1}(\epsilon)}{\epsilon \delta} \right) 
  + I_g \right) \log \frac{1}{\delta}
\right)
}
\[
O_{c,d}\left(
  \left(
   (g^{-1}(\epsilon))^{2d}
          \left( \frac{1}{\epsilon} \right)^{2 d + 2}
          \log^{4 d} \left( \frac{g^{-1}(\epsilon)}{\epsilon} \right)
          \log \left( \frac{g^{-1}(\epsilon)}{\epsilon \delta} \right) 
  + I_g \right) \log \frac{1}{\delta}
\right)
\] 
and uses 
\ignore{
% OLD VERSION:
O\left(
  \left(
   (1 + g^{-1}(\epsilon))^{2d}
          \left( \frac{1}{\epsilon} \right)^{2 d + 2}
          \log^{4 d} \left( \frac{1 + g^{-1}(\epsilon)}{\epsilon} \right)
          \log \left( \frac{1 + g^{-1}(\epsilon)}{\epsilon \delta} \right) 
  + I_g \right) \log \frac{1}{\delta}
\right)
}

\[
O_{c,d}\left(
 \left(
 (g^{-1}(\epsilon))^{d}
          \left( \frac{1}{\epsilon} \right)^{d+2}
          \log^{2d} \left( \frac{g^{-1}(\epsilon)}{\epsilon} \right)
          \log \left( \frac{g^{-1}(\epsilon)}{\epsilon \delta} \right)
 + I_g \right) \log \frac{1}{\delta}
 \right)
\] samples.  
\end{theorem}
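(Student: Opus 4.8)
The plan is to reduce to the bounded-support case handled by Lemma~\ref{lem:finite-support} via an affine ``normalization'' of $f$. First I would describe a \emph{transformation subroutine} that uses only a small number of draws from $f$ and, with probability at least some absolute constant, outputs an invertible affine map $A(x) = (x-z)/D$ together with a ball $B = B(z,R)$ satisfying: (a) the pushforward $\tilde f := A_*(f_B)$ of the conditional density $f_B$ is supported inside $B(1/2)$; (b) $\tilde f$ satisfies the shift-invariance hypothesis of Lemma~\ref{lem:finite-support} at scale $\tilde\kappa := \Theta_c(\epsilon/g^{-1}(\epsilon))$, i.e.\ $\tilde\kappa\,\si(\tilde f,\tilde\kappa)\le \epsilon/2$; and (c) $\dtv(f_B,f) = O(\epsilon)$. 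For (a) and (c): since $f$ has $g$-light tails, a ball of radius $\Theta(g^{-1}(\epsilon))$ around (an estimate $z$ of) the mean $\mu$ captures all but an $O(\epsilon)$ fraction of the mass, so truncating to $B$ costs $O(\epsilon)$ in variation distance, while taking $D$ of order $g^{-1}(\epsilon)$ forces $A(B) \subseteq B(1/2)$. The subroutine locates $\mu$ to within the required accuracy using roughly $O(I_g)$ draws --- since the expected squared distance of a draw from $f$ to $\mu$ is at most $I_g$, this bounds the relevant variance --- which is where the additive $I_g$ term, and the merely-constant probability of success, will come from. A change of variables shows that $L_1$ distances and shifts transform cleanly, so (b) amounts to controlling $\si(f_B,v,\cdot)$ for every direction $v$.

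Step two is to run \textsf{learn-bounded} on $\tilde f$: one obtains samples from $\tilde f$ by drawing from $f$, discarding the at most $O(\epsilon)$-fraction of draws falling outside $B$, and mapping the rest through $A$; feeding these into Lemma~\ref{lem:finite-support} with error parameter $\epsilon$ and scale parameter $\tilde\kappa$ yields, with probability $1-\delta'$, a hypothesis $\tilde h$ on $[-1,1]^d$ with $\|\tilde h - \tilde f\|_1 \le \epsilon$. Step three undoes the transformation: set $h(x) := D^{-d}\,\tilde h(A(x))$, which is supported on $B(z,D\sqrt d)$ and is evaluable pointwise (with the same pointwise upper bound, after rescaling) via the corresponding guarantee in Lemma~\ref{lem:finite-support}. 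Since $A$ is a bijection, $\|h - f_B\|_1 = \|\tilde h - \tilde f\|_1 \le \epsilon$, so $\|h - f\|_1 \le \|h-f_B\|_1 + \|f_B - f\|_1 = O(\epsilon)$ whenever the transformation subroutine and \textsf{learn-bounded} both succeed.

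Since this pipeline succeeds only with constant probability, I would run it $N = O(\log(1/\delta))$ times on fresh samples to produce candidates $h_1,\dots,h_N$, at least one of which is $O(\epsilon)$-close to $f$ with probability $1-\delta/2$, and then apply a standard hypothesis-selection (Scheff\'e-tournament / minimum-distance) procedure using $O(\epsilon^{-2}\log(N/\delta))$ additional draws from $f$ to output a single $h_j$ with $\|h_j - f\|_1 = O(\epsilon)$; the required Scheff\'e-set integrals of the $h_i$ can be estimated by Monte Carlo, since each $h_i$ is an explicit trigonometric polynomial on a known ball. Adding up the sample costs --- $O(I_g)$ per transformation, plus the \textsf{learn-bounded} cost with $\kappa = \tilde\kappa = \Theta_c(\epsilon/g^{-1}(\epsilon))$, which substitutes into Lemma~\ref{lem:finite-support} to give $O_{c,d}\big((g^{-1}(\epsilon))^d(1/\epsilon)^{d+2}\log^{2d}(g^{-1}(\epsilon)/\epsilon)\log(g^{-1}(\epsilon)/(\epsilon\delta))\big)$, all multiplied by $O(\log(1/\delta))$, plus the selection cost --- yields exactly the claimed sample bound, and the analogous accounting of running time (the extra $(1/\tilde\kappa)^d\log^{2d}$ factor in the time bound of Lemma~\ref{lem:finite-support} is what produces the $2d$ exponents) yields the claimed time bound.

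\medskip
\noindent\textbf{Main obstacle.} The delicate point is property (b): truncating $f$ to the ball $B$ introduces a hard boundary, and $f_B$ can differ from a small shift of itself by as much as the $f$-mass in a thin annulus around $\partial B$ --- a quantity \emph{not} controlled by shift-invariance of $f$ alone, since a shift-invariant density may still concentrate near a sphere. The fix is to not fix $R$ in advance but to choose it from a range of width $\Theta(g^{-1}(\epsilon))$ (every value in the range still captures $1-O(\epsilon)$ of the mass): averaging over the range, some radius $R^\ast$ makes the width-$s$ annulus mass $O(s/g^{-1}(\epsilon))$, which for $s$ up to $D\tilde\kappa = \Theta(g^{-1}(\epsilon)\tilde\kappa)$ is $O(\tilde\kappa) = O(\epsilon)$. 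Combined with the rescaled bound inherited from $f \in \CSI(c,d)$ (which is what forces the slightly smaller $\tilde\kappa = \Theta_c(\epsilon/g^{-1}(\epsilon))$), this gives (b). A secondary subtlety, flagged by the authors, is that the transformation must be built from few enough draws that its analysis remains valid in the later noise-tolerant extension; here one just notes that $I_g$ is a constant for fixed $d$ and $g$.
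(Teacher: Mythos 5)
Your proposal is correct and follows essentially the same route as the paper's proof: estimate the mean using $O(I_g)$ draws, truncate to a ball of radius $\Theta(g^{-1}(\epsilon))$ around the estimate, rescale into $B(1/2)$, run \textsf{learn-bounded} at scale $\kappa=\Theta_c(\epsilon/g^{-1}(\epsilon))$ via rejection sampling, invert the affine map, and amplify the constant success probability with $O(\log(1/\delta))$ repetitions followed by hypothesis selection (the paper's version of your Scheff\'e step is Proposition~\ref{prop:log-cover-size}, applied to normalized candidates that are sampled by rejection and evaluated via Monte Carlo estimates of their normalizers). The one place you genuinely diverge is your ``main obstacle'': the paper needs no careful choice of radius at all. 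Lemma~\ref{lem:restrict-si} writes $f=(1-\delta)f_B+\delta f_{\mathrm{err}}$ and uses the triangle inequality to show that conditioning on \emph{any} ball of mass $1-\delta$ degrades shift-invariance only to $\frac{4\delta}{\kappa}+2c$; thus the quantity actually needed (the $L_1$ discrepancy of $f_B$ under small shifts) is controlled directly by the shift-invariance of $f$ plus the $O(\epsilon)$ excluded mass, so your premise that it is ``not controlled by shift-invariance of $f$ alone'' is exactly what that lemma refutes, and no boundary-annulus bound is ever required. Your averaging-over-radii fix does work, but as stated it only proves \emph{existence} of a good $R^\ast$; to make it algorithmic you would, for instance, draw $R$ uniformly from the range and absorb the resulting constant failure probability into your repetition-plus-selection stage, whereas the paper's mixture-decomposition argument applies to every radius and sidesteps this entirely. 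Aside from that local difference (and the usual constant-factor bookkeeping in matching $\kappa\,\si(\cdot,\kappa)\le\epsilon/2$), your accounting of sample and time complexity matches the paper's.
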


\subsection{Outline of the proof}
Theorem~\ref{thm:no-noise} is proved by a reduction to Lemma~\ref{lem:finite-support}. The main ingredient in the proof of Theorem~\ref{thm:no-noise} is a ``transformation algorithm'' with the following property:  given as input access to i.i.d.\ draws from any density $f  \in\CSI(c,d,g)$, the algorithm constructs parameters which enable draws from the density $f$ to be transformed into draws from another density, which we denote $r$.  The density $r$ is obtained by
approximating $f$ after conditioning on a non-tail sample, and scaling the result so that it lies in
a ball of radius $1/2$.

Given such a transformation algorithm, the approach to learn $f$ is
clear: we first run the transformation algorithm to get access to
draws from the transformed distribution $r$.  We then use draws from
$r$ to run the algorithm of Lemma~\ref{lem:finite-support} to learn
$r$ to high accuracy.  (Intuitively, the
error relative to $f$ of the final hypothesis density is $O(\eps)$
because at most $O(\eps)$ comes from the conditioning and at most
$O(\eps)$ from the algorithm of Lemma~\ref{lem:finite-support}.) We
note that while this high-level approach is conceptually
straightforward, a number of technical complications arise; for
example, our transformation algorithm only succeeds with some
non-negligible probability, so we must run the above-described
combined procedure multiple times and perform hypothesis testing to
identify a successful final hypothesis from the resulting pool of
candidates.

The rest of this section is organized as follows: In
Section~\ref{sec:setup} we give various necessary technical
ingredients for our transformation algorithm.  We state and prove the
key results about the transformation algorithm in
Section~\ref{sec:transformation}, and we use the transformation
algorithm to prove Theorem~\ref{thm:no-noise} in
Section~\ref{sec:proof-of-no-noise}.

\subsection{Technical ingredients for the transformation algorithm} \label{sec:setup}

% \subsubsection{Conditioning approximately preserves shift-invariance}

As sketched earlier, our approach will work with a density 
obtained by conditioning $f \in \si(c,d)$ on lying in a certain ball that has mass close to 1 under $f$.\ignore{(intuitively, this corresponds to the condition used in the learning
algorithm of Section~3; recall that the densities learned by that
algorithm were assumed to lie in $B(1/2)$).}
While we know that the original density $f \in \si(c,d)$ has
good shift-invariance, we will further need the
conditioned distribution to also have good shift-invariance
in order for the \textsf{learn-bounded} algorithm of Section~\ref{sec:restricted} to work.  Thus we require the following
simple lemma, which shows that conditioning a density $f \in \si(c,d)$
on a region of large probability cannot hurt its shift invariance too much.

\begin{lemma}~\label{lem:restrict-si}
Let $f \in \si(c,d)$ and 
let $B$ be a ball such that $\Pr_{\bx \sim f} [\bx \in B] \ge 1- \delta$ where $\delta < 1/2$. 
If $f_B$ is the density of $f$ conditioned on $B$,
then,
% $f_B$ belongs to $\si(\tpsi,d)$ where 
% $\tpsi(\kappa) = \frac{4\delta}{\kappa} + 2 \psi(\kappa)$.   
for all $\kappa > 0$,
$\si(f_B,\kappa) \leq \frac{4\delta}{\kappa} + 2 c$.
\end{lemma}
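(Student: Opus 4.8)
The plan is to express the conditional density $f_B$ as a small perturbation of a rescaled copy of $f$ and then push the shift-invariance bound for $f$ through the triangle inequality (the fact that $B$ is a ball plays no role — any measurable set of mass $\ge 1-\delta$ would do). Write $p \eqdef \Pr_{\bx \sim f}[\bx \in B] \in [1-\delta, 1]$, so that $f_B = \frac{1}{p}\, f \cdot \mathbf{1}_B = \frac{1}{p}\,(f - g)$, where $g \eqdef f \cdot \mathbf{1}_{\R^d \setminus B}$ is a nonnegative function with $\Vert g \Vert_1 = 1 - p \le \delta$.

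Next, fix a unit vector $v$ and a shift amount $\kappa' \in [0,\kappa]$, and write $T_w h$ for the translate $x \mapsto h(x+w)$. Since $T_{\kappa' v}$ is linear and $L_1$-isometric, I would bound
\[
\Vert T_{\kappa' v} f_B - f_B \Vert_1 = \frac{1}{p}\Vert T_{\kappa' v}(f-g) - (f-g) \Vert_1 \le \frac{1}{p}\Big( \Vert T_{\kappa' v} f - f \Vert_1 + \Vert T_{\kappa' v} g - g \Vert_1 \Big).
\]
For the first summand, since $f \in \CSI(c,d)$ and $\kappa' \le \kappa$, the definition of $\si$ as a supremum over shifts of size at most $\kappa$ (Definition~\ref{def:shift-invariance}) gives $\Vert T_{\kappa' v} f - f \Vert_1 \le \kappa \cdot \si(f,v,\kappa) \le \kappa \cdot \si(f,\kappa) \le \kappa c$. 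For the second summand, the trivial estimate $\Vert T_{\kappa' v} g - g \Vert_1 \le 2\Vert g \Vert_1 \le 2\delta$ suffices. Since $\delta < 1/2$ forces $1/p \le 1/(1-\delta) < 2$, we conclude $\Vert T_{\kappa' v} f_B - f_B \Vert_1 \le 2\kappa c + 4\delta$.

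Finally, dividing by $\kappa$ and taking the supremum over $\kappa' \in [0,\kappa]$ yields $\si(f_B, v, \kappa) \le 2c + 4\delta/\kappa$, and taking the supremum over all unit vectors $v$ gives $\si(f_B, \kappa) \le 2c + 4\delta/\kappa$, as claimed. I do not expect any real obstacle here; the only points requiring care are carrying the normalization factor $1/p$ correctly and keeping straight the distinct roles of $\kappa$ and $\kappa'$ in the definition of $\si$.
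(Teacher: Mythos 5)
Your proof is correct and is essentially the paper's own argument: both decompose $f$ into its conditional part on $B$ plus the tail mass, apply the triangle inequality together with the shift-invariance bound $c\kappa$ for $f$ and the trivial $2\delta$ bound for the tail term, and use $\delta<1/2$ to control the normalization $1/(1-\delta)<2$. The only cosmetic difference is that the paper applies a reverse triangle inequality to the mixture $f=(1-\delta)f_B+\delta f_{\mathrm{err}}$ and rearranges, whereas you solve for $f_B$ first and use the forward triangle inequality; your explicit treatment of all shifts $\kappa'\in[0,\kappa]$ and of the mass $p\geq 1-\delta$ is, if anything, slightly more careful than the paper's.
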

\begin{proof}
Let $v$ be any unit vector in $\mathbb{R}^d$.\ignore{\footnote{spurious line?: Let $f_v$  (resp. $f_{B,v}$) be the densities obtained by projecting $f$ (resp. $f_B$) to direction $v$. }}
% It is easy to see that \begin{equation}~\label{eq:gamma-f-var} \mathsf{Var}[f_{B,v}] \le \frac{1}{1-\delta} \cdot \mathsf{Var}[f_v]. \end{equation}
Note that $f$ can be expressed as $(1-\delta)f_B$ + $\delta \cdot f_{err}$ where $f_{err}$ is some other density. As a consequence, for any $\kappa > 0$,
using the triangle inequality we have that
\begin{align*}
\int_x |f(x) - f(x + \kappa v )| dx &\ge (1-\delta) \int_x |f_B(x) - f_B(x + \kappa   v)| dx \\
& \hspace{0.5in} - \delta  \int_x |f_{err}(x) - f_{err}(x + \kappa   v)| dx.
\end{align*}
Since $f \in \calC_\si(c,d)$ the left hand side is at most  $c \kappa$, whereas the subtrahend  on the right hand side is trivially at most $2\delta$. Thus, we get
\begin{equation}~\label{eq:shift-B-bound-1}
\int_x |f_B(x) - f_B(x + \kappa  v)| dx  \leq \frac{2\delta}{1-\delta} + \frac{c \kappa}{1-\delta},
\end{equation}
completing the proof.
\end{proof}

% \subsubsection{Mean estimation}

If $f$ is an unknown target density then of course its mean 
is also unknown, and thus we will need to approximate it
using draws from $f$. 
To do this, it will be helpful to convert our condition on the tails
of $f$ to bound the variance of $|| \bx - \mu ||$, where $\bx \sim f.$
% The following
% definition will be useful for this.
% \begin{definition}
% For any function $g$ from $\R^+$ to $\R^+$
% such that $\int_0^{\infty} g(z) \; dz < \infty$, define
% $I_g = \int_0^{\infty} g(z) \; dz$.
% \end{definition}
\begin{lemma}
\label{l:var}
For any $f \in \CSI(c, d, g)$, we have
$
\E_{\bx \sim f} [ || \bx - \mu ||^2 ] \leq  I_g.
$
\end{lemma}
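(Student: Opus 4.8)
The plan is to express the second moment of $\|\bx - \mu\|$ as a tail integral (the ``layer-cake'' formula) and then plug in the $g$-light-tails hypothesis directly. Concretely, since $\|\bx-\mu\|^2$ is a nonnegative random variable, I would write
\[
\E_{\bx \sim f}\!\left[ \|\bx - \mu\|^2 \right]
= \int_0^\infty \Prx_{\bx \sim f}\!\left[ \|\bx - \mu\|^2 > z \right] dz .
\]
For each fixed $z \geq 0$ the event $\{\|\bx-\mu\|^2 > z\}$ is the same as $\{\|\bx-\mu\| > \sqrt{z}\}$, so the integrand equals $\Pr_{\bx\sim f}[\|\bx-\mu\| > \sqrt z]$. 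By the definition of $\CSI(c,d,g)$ (the $g$-light-tails property), for every $t>0$ we have $\Pr_{\bx\sim f}[\|\bx-\mu\|>t] \leq g(t)$; taking $t=\sqrt z$ gives $\Pr_{\bx\sim f}[\|\bx-\mu\|>\sqrt z] \leq g(\sqrt z)$ for all $z>0$. Substituting this bound into the integral yields
\[
\E_{\bx \sim f}\!\left[ \|\bx - \mu\|^2 \right]
\;\leq\; \int_0^\infty g(\sqrt z)\, dz \;=\; I_g ,
\]
which is exactly the claim.

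There is essentially no obstacle here: the only things to check are routine measure-theoretic points, namely that the layer-cake identity applies (it holds for any nonnegative measurable function, and $x \mapsto \|x-\mu\|^2$ is continuous hence measurable) and that the single point $z=0$ does not matter (it has measure zero, and in any case $g$ is nonincreasing and $[0,1]$-valued so the integrand is dominated near $0$). If $I_g = +\infty$ the inequality is vacuous, and if $I_g < +\infty$ the chain of (in)equalities above is valid with all quantities finite. One could optionally remark that this lemma is exactly why $I_g$ is the ``right'' complexity parameter: it converts the tail bound $g$ into a variance bound on $\|\bx-\mu\|$, which is what later steps (estimating the mean of $f$ from few samples, and truncating the tails to bounded support) will need.
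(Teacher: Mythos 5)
Your proposal is correct and follows exactly the same route as the paper's own one-line proof: write $\E_{\bx\sim f}[\|\bx-\mu\|^2]$ via the layer-cake formula as $\int_0^\infty \Pr[\|\bx-\mu\|^2 \geq z]\,dz$ and bound the integrand by $g(\sqrt z)$ using the $g$-light-tails condition. The extra measure-theoretic remarks are fine but not needed.
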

\begin{proof}
We have
$
\E_{\bx \sim f} [ || \bx - \mu ||^2 ]
 = \int_0^{\infty} \Pr_{\bx \sim f} [ || \bx - \mu ||^2 \geq z ] \; dz
 \leq \int_0^{\infty} g(\sqrt{z}) \; dz
 = I_g.
$
\end{proof}

The following easy proposition gives a guarantee on the quality of the empirical mean: 
\begin{lemma}~\label{lem:estimate-mean}
For any $f \in \CSI(c,d,g)$,
if $\mu \in \R^d$ is the mean of $f$ and $\hat{\bmu}$
is its empirical estimate based on $M$ samples, then 
for any $t>0$ we have
\[
\Pr\big[ || \mu-\hat{\bmu} ||^2 \ge t \big]   \le \frac{I_g}{M t}.
\]
\end{lemma}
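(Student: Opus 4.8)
The plan is a standard second-moment (Chebyshev-type) argument in $\R^d$, using Lemma~\ref{l:var} to control the per-sample variance. Write $\hat{\bmu} = \frac1M \sum_{i=1}^M \bx_i$ where $\bx_1,\dots,\bx_M$ are i.i.d.\ draws from $f$, so that $\E[\hat{\bmu}] = \mu$ and $\hat{\bmu} - \mu = \frac1M\sum_{i=1}^M (\bx_i - \mu)$.

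First I would expand the expected squared error:
\[
\E\big[\|\hat{\bmu} - \mu\|^2\big]
= \frac{1}{M^2} \sum_{i=1}^M \sum_{j=1}^M \E\big[\langle \bx_i - \mu, \, \bx_j - \mu\rangle\big].
\]
For $i \neq j$, independence and $\E[\bx_i - \mu] = 0$ give $\E[\langle \bx_i - \mu, \bx_j - \mu\rangle] = \langle \E[\bx_i-\mu], \E[\bx_j-\mu]\rangle = 0$, so only the $M$ diagonal terms survive, yielding
\[
\E\big[\|\hat{\bmu} - \mu\|^2\big] = \frac{1}{M}\, \E_{\bx \sim f}\big[\|\bx - \mu\|^2\big] \le \frac{I_g}{M},
\]
where the last inequality is exactly Lemma~\ref{l:var}.

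Finally, since $\|\hat{\bmu} - \mu\|^2$ is a nonnegative random variable, Markov's inequality gives, for any $t > 0$,
\[
\Pr\big[\|\mu - \hat{\bmu}\|^2 \ge t\big] \le \frac{\E\big[\|\hat{\bmu} - \mu\|^2\big]}{t} \le \frac{I_g}{Mt},
\]
which is the claimed bound. There is no real obstacle here — the only point requiring (minor) care is the vanishing of the off-diagonal cross terms, which relies on independence of the samples and on $I_g$ being finite so that the expectations are well defined; both are guaranteed by the hypotheses ($f \in \CSI(c,d,g)$ and the discussion of $I_g$ in the preliminaries).
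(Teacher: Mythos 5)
Your proposal is correct and follows essentially the same route as the paper: expand $\E\big[\|\mu-\hat{\bmu}\|^2\big]$, observe that the cross terms vanish by independence so the bound $I_g/M$ follows from Lemma~\ref{l:var}, then apply Markov's inequality. The only difference is that you spell out the vanishing of the off-diagonal terms, which the paper leaves implicit.
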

\begin{proof} 
If $\bx_1, \ldots, \bx_M$ are independent draws from $f$, then 
\begin{eqnarray*}
\mathbf{E}[|| \mu  -\hat{\bmu} ||^2] 
 &=& \mathbf{E}\bigg[\big|\big|\mu  -\frac{\bx_1+ \ldots + \bx_M}{M}\big|\big|^2 \bigg] \\ 
 &=& \sum_{i=1}^M \frac{1}{M^2} 
  \mathbf{E}\bigg[\big|\big|\mu  - \bx_i \big|\big|^2 \bigg] 
%   \\ &=& 
  = \frac{I_g}{M},
\end{eqnarray*}
where the last inequality is by Lemma~\ref{l:var}.
Applying Markov's inequality on the left hand side, we get the stated claim. 
\end{proof}

\subsection{Transformation algorithm} \label{sec:transformation}

% The following essentially says that there is a procedure which takes
% $O(???)$ samples from a density $f \in \CSI(\psi,d,g)$ and
% succeeds with $???$ probability in producing a ball
% which captures $1-\epsilon$ of the mass of $f$.

\begin{lemma}~\label{lem:transformation}
There is an algorithm \textsf{compute-transformation} such that given
access to samples from $f \in \CSI(c,d,g)$ and an error parameter
$0 < \epsilon < 1/2$, the algorithm takes $O(I_g)$ samples from $f$ and with
probability at least $9/10$ produces a vector $\tilde{\mu} \in \mathbb{R}^d$
and a real number $t$ with the following properties:
\begin{enumerate}
\item For $B_{t} = \{x : || x-\tilde{\mu} ||  \le \sqrt{t} \}$, we have 
$\Pr_{\bx \sim f} [ \bx \in B_t ] \geq 1 - \epsilon$.
\item 
% $t = 2 ((g^{-1}(\epsilon))^2 + 1/10)$.
$t = O(g^{-1}(\epsilon)^2)$,
\item  
For all $\kappa > 0$, the density $f_{B_t}$ satisfies
$\si(f_{B_t},\kappa) \leq \frac{4\epsilon}{\kappa} + 2 c$.
% \rnote{Was ``$\si(f_{B_t},\kappa) \leq \frac{4\epsilon}{\kappa} + 2 c \red{\kappa}$''}
\end{enumerate}
\end{lemma}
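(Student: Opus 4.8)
The plan is for \textsf{compute-transformation} to set $\tilde\mu$ to the empirical mean of $f$ and to set $t$ to a fixed constant multiple of $g^{-1}(\epsilon)^2$. Concretely: draw $M=\Theta(I_g)$ samples $\bx_1,\dots,\bx_M$ from $f$, let $\tilde\mu:=\hat{\bmu}=\frac1M\sum_{i=1}^M\bx_i$, and output $(\tilde\mu,t)$ with $t:=4\,g^{-1}(\epsilon)^2$ (the algorithm knows $g$, so it can compute $g^{-1}(\epsilon)$). Property~2 holds by construction. The reason $O(I_g)$ samples suffice is the elementary bound $g^{-1}(\epsilon)\ge g^{-1}(1/2)\ge 1/10$, valid for all $\epsilon\le 1/2$ by the normalization of Remark~\ref{remark:tail-weight}; hence $g^{-1}(\epsilon)^2=\Omega(1)$, and by the same normalization $g(r)>1/2$ for $r<1/10$, so $I_g=\int_0^\infty g(\sqrt z)\,dz\ge 1/200=\Omega(1)$ as well, making $O(I_g)$ at least a positive constant.

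First I would bound $\|\mu-\tilde\mu\|$: applying Lemma~\ref{lem:estimate-mean} with its parameter set to $g^{-1}(\epsilon)^2$ gives $\Pr[\|\mu-\hat{\bmu}\|^2\ge g^{-1}(\epsilon)^2]\le I_g/(M\,g^{-1}(\epsilon)^2)\le 100\,I_g/M$, so choosing $M=2000\,I_g=O(I_g)$ makes this at most $1/20<1/10$. Thus with probability at least $9/10$ over the sample, $\|\mu-\tilde\mu\|\le g^{-1}(\epsilon)$, and I condition on this event for the rest of the argument (Properties~1 and~3 then follow deterministically). For Property~1: if $\bx\sim f$ has $\|\bx-\mu\|\le g^{-1}(\epsilon)$, then the triangle inequality gives $\|\bx-\tilde\mu\|\le 2\,g^{-1}(\epsilon)=\sqrt t$, so $\bx\in B_t$; and by the $g$-light-tails of $f$ together with continuity of $g$ and the definition $g^{-1}(\epsilon)=\inf\{r:g(r)\le\epsilon\}$, we have $\Pr_{\bx\sim f}[\|\bx-\mu\|>g^{-1}(\epsilon)]\le g(g^{-1}(\epsilon))\le\epsilon$, whence $\Pr_{\bx\sim f}[\bx\in B_t]\ge 1-\epsilon$. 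Property~3 is then immediate from Lemma~\ref{lem:restrict-si} applied to $f\in\CSI(c,d)$ and the ball $B_t$, with the role of ``$\delta$'' in that lemma played by $\epsilon<1/2$: it yields $\si(f_{B_t},\kappa)\le\frac{4\epsilon}{\kappa}+2c$ for all $\kappa>0$.

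There is no serious obstacle: the lemma is essentially a bookkeeping assembly of Lemma~\ref{lem:estimate-mean}, the tail-bound definition, and Lemma~\ref{lem:restrict-si}. The only point that needs a moment's care is the sample-complexity claim — one must observe that $g^{-1}(\epsilon)$, and hence $I_g$, is bounded below by an absolute constant, so that the Markov-type bound coming from Lemma~\ref{lem:estimate-mean} is driven below $1/10$ using a number of samples that is $O(I_g)$ and, crucially, independent of $\epsilon$ (the $\epsilon$-dependence is pushed entirely into the radius $\sqrt t=\Theta(g^{-1}(\epsilon))$ rather than into $M$).
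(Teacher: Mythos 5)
Your proposal is correct and follows essentially the same route as the paper: set $\tilde{\mu}$ to the empirical mean of $O(I_g)$ samples (controlled via Lemma~\ref{lem:estimate-mean} and the normalization of Remark~\ref{remark:tail-weight}), take the radius $\sqrt{t}=\Theta(g^{-1}(\epsilon))$, and invoke Lemma~\ref{lem:restrict-si} for Property~3. The only (immaterial) difference is bookkeeping: the paper estimates the mean to fixed accuracy $\|\mu-\hat{\bmu}\|^2\le 1/10$ and sets $t=2\bigl(g^{-1}(\epsilon)^2+1/10\bigr)$, whereas you estimate it to accuracy $g^{-1}(\epsilon)$ and set $t=4\,g^{-1}(\epsilon)^2$.
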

\begin{proof}
For $M = 100 I_g$, the algorithm
\textsf{compute-transformation} simply works as follows: set
$\tilde{\mu}$ to be the empirical mean of the $M$ samples, and 
$t = 2 ((g^{-1}(\epsilon))^2 + 1/10)$. (Note that by Remark~\ref{remark:tail-weight} we have $t=\Theta(g^{-1}(\eps)^2).$).  Let $\mu$ denote the true mean of $f$.  First, 
by Lemma~\ref{lem:estimate-mean}, with
probability at least 0.9, the empirical
mean $\hat{\bmu}$ will be close to the true mean $\mu$ in the
following sense:
\begin{eqnarray}~\label{eq:estimate-mean}
|| \mu-\hat{\bmu} ||^2    \le \frac{1}{10}.
\end{eqnarray}
Let us assume for the rest of the proof that this happens; fix any such outcome and denote it $\tmu.$

We have 
\begin{eqnarray*}
|| x - \tmu ||^2
 \leq 2 (|| x - \mu ||^2 + || \mu - \tmu ||^2) 
%  \\
 \leq 2 (|| x - \mu ||^2 + 1/10) \\
\end{eqnarray*}
and so
\begin{align*}
& \Pr_{\bx \in f} [ || \bx - \tmu ||^2 > t ] 
%   \\
% & 
  \leq \Pr_{\bx \in f} [ 2 (|| \bx - \mu ||^2 + 1/10) > t ] 
%     \\
%    & 
 = \Pr[\|\bx - \mu\|^2 \geq g^{-1}(\eps)]
 \leq \epsilon.
\end{align*}

Applying Lemma~\ref{lem:restrict-si} completes the proof.
\end{proof}

The following proposition elaborates on the properties of the output
of the transformation algorithm.
 
\begin{lemma} \label{lem:properties}
Let $f \in \CSI(c,d,g)$, $\epsilon>0$, $\tilde{\mu} \in \mathbb{R}^d$,
and $t \in \mathbb{R}$ satisfy the properties stated in
Lemma~\ref{lem:transformation}. Consider the density
$f_{\mathrm{scond}}$ defined by
\begin{align*}
& f_{\mathrm{scaled}}(x) \eqdef 2\sqrt{t} \cdot f\big(2\sqrt{t} \cdot {( x+\tilde{\mu})}  \big) \\
& f_{\mathrm{scond}}(x) \eqdef f_{\mathrm{scaled}, B(1/2)}(x  ),
\end{align*}
where $f_{\mathrm{scaled}, B(1/2)}$ is the result of conditioning $f_{\mathrm{scaled}}$ on membership in $B(1/2)$.
Then the density $f_{\mathrm{scond}}(x)$ satisfies the following properties: 
\begin{enumerate}
\item The density $f_{\mathrm{scond}}$ is supported in the ball $B(1/2)$. 
\item For all $\epsilon < 1/2$
and $\kappa > 0$, the density $f_{\mathrm{scond}}$ 
satisfies
\[
\si(f_{\mathrm{scond}},\kappa) \leq \frac{4 \epsilon}{\kappa} + 4 c \sqrt{t}.\]
\end{enumerate}
\end{lemma}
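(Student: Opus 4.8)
The plan is to recognize $f_{\mathrm{scond}}$ as an affinely rescaled copy of $f_{B_t}$ and then to transport the shift-invariance bound we already have for $f_{B_t}$ (property 3 of Lemma~\ref{lem:transformation}) through that rescaling. Let $\phi(x) := \frac{1}{2\sqrt t}(x - \tilde\mu)$, the affine bijection that carries $B_t = \{x : \|x-\tilde\mu\|\le\sqrt t\}$ onto $B(1/2)$; then $f_{\mathrm{scaled}}$ is the pushforward $\phi_\ast f$ (its density at $y$ being $(2\sqrt t)^d f(\phi^{-1}(y))$). Because $\phi$ is a bijection, conditioning commutes with it: conditioning $\phi_\ast f$ on $B(1/2) = \phi(B_t)$ is the same as pushing forward through $\phi$ the conditioning of $f$ on $B_t$, since the normalizing constant $(\phi_\ast f)(B(1/2))$ equals $f(B_t)$ in both cases. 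Hence $f_{\mathrm{scond}} = \phi_\ast(f_{B_t})$. Part 1 is then immediate; in fact it already follows from the defining property of $f_{\mathrm{scond}}$ as a conditioning on $B(1/2)$, so the real work is in part 2.

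For part 2, I would first establish the elementary scaling identity for shift-invariance under $\phi$. Fix a unit vector $v$ and a scale $\kappa>0$, write $p := f_{B_t}$, and change variables via $y = \phi(z)$ (so $dz = (2\sqrt t)^d\,dy$ and $(\phi_\ast p)(y) = (2\sqrt t)^d p(z)$). For every $\kappa'\ge 0$ the $(2\sqrt t)^d$ factor cancels the Jacobian, giving
\[
\int_{\R^d}\bigl|(\phi_\ast p)(y+\kappa' v)-(\phi_\ast p)(y)\bigr|\,dy
 \;=\; \int_{\R^d}\bigl|p\bigl(z + 2\sqrt t\,\kappa'\,v\bigr)-p(z)\bigr|\,dz .
\]
Translations are isometries, so the recentering in $\phi$ contributes nothing here; only the dilation matters. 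Taking the supremum over $\kappa'\in[0,\kappa]$ --- which on the right-hand side amounts to a supremum over shifts of $p$ of size at most $2\sqrt t\,\kappa$ --- and dividing by $\kappa$ yields
\[
\si(\phi_\ast p,\,v,\,\kappa) \;=\; 2\sqrt t\cdot \si\bigl(p,\,v,\,2\sqrt t\,\kappa\bigr).
\]

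Finally I would set $p = f_{B_t}$ and invoke property 3 of Lemma~\ref{lem:transformation}, namely $\si(f_{B_t},\kappa_0)\le \tfrac{4\epsilon}{\kappa_0}+2c$ for all $\kappa_0>0$; applying it at scale $\kappa_0 = 2\sqrt t\,\kappa$ and plugging into the identity above gives
\[
\si(f_{\mathrm{scond}},\,v,\,\kappa) \;\le\; 2\sqrt t\Bigl(\frac{4\epsilon}{2\sqrt t\,\kappa}+2c\Bigr) \;=\; \frac{4\epsilon}{\kappa}+4c\sqrt t ,
\]
and taking the supremum over unit vectors $v$ completes part 2. The step I expect to require the most care is the scale bookkeeping: verifying that a $\kappa$-shift of the rescaled density corresponds exactly to a $(2\sqrt t\,\kappa)$-shift of $f_{B_t}$, and that this substitution meshes correctly with the ``$\sup_{\kappa'\in[0,\kappa]}$'' and the $1/\kappa$ normalization in Definition~\ref{def:shift-invariance}, so that no stray factor of $2\sqrt t$ is introduced or lost. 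Everything else is a routine change of variables together with the already-established Lemma~\ref{lem:transformation}.
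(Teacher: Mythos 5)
Your proposal is correct and yields exactly the stated bound, but it runs the argument in the opposite order from the paper. The paper scales first and conditions second: it shows by the same change of variables you use that $\si(f_{\mathrm{scaled}},v,\kappa)=2\sqrt{t}\,\si(f,v,2\sqrt{t}\kappa)\le 2c\sqrt{t}$, and then applies Lemma~\ref{lem:restrict-si} directly to $f_{\mathrm{scaled}}$ and the ball $B(1/2)$ (whose $f_{\mathrm{scaled}}$-mass equals $\Pr_{\bx\sim f}[\bx\in B_t]\ge 1-\epsilon$), giving $\frac{4\epsilon}{\kappa}+2\cdot 2c\sqrt{t}$. You condition first and scale second: you import property 3 of Lemma~\ref{lem:transformation} (which the paper's own proof of Lemma~\ref{lem:properties} never actually invokes, since it re-applies Lemma~\ref{lem:restrict-si} after scaling) and then transport it through the dilation, at the price of one extra observation the paper avoids, namely that conditioning commutes with the affine bijection so that $f_{\mathrm{scond}}=\phi_\ast(f_{B_t})$; you state and justify this correctly. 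The two routes use the same two ingredients (the conditioning lemma and the scaling identity $\si(\phi_\ast p,v,\kappa)=2\sqrt{t}\,\si(p,v,2\sqrt{t}\kappa)$), and the arithmetic is identical. One further point in your favor: reading $f_{\mathrm{scaled}}$ as the pushforward under $\phi(x)=\frac{1}{2\sqrt{t}}(x-\tilde{\mu})$, with Jacobian $(2\sqrt{t})^d$, is the interpretation consistent with the inversion formula (\ref{eq:hi}) and with $\phi(B_t)=B(1/2)$; the paper's literal formula for $f_{\mathrm{scaled}}$ has the normalizing constant and the placement of $\tilde{\mu}$ slightly off, but as you implicitly note this is immaterial because the conditioning step renormalizes, so $f_{\mathrm{scond}}$ is unaffected.
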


\begin{proof}
First, it is easy to verify that function $f_{\mathrm{scond}}$ defined above is indeed a density. Item 1 
is enforced by fiat.
Now, for any direction $v$, we have
\begin{align*}
\si(f_{\mathrm{scaled}},v,\kappa) 
 & =
{\frac 1 \kappa} \cdot \sup_{\kappa' \in [0,\kappa]} \int_{\R^d} \left|f_{\mathrm{scaled}}(x+ \kappa'  v) - f_{\mathrm{scaled}}(x)\right| dx \\
 & =
{\frac{2\sqrt{t}}\kappa} \cdot \sup_{\kappa' \in [0,\kappa]} \int_{\R^d} \left|f(2 \sqrt{t} (x+ \kappa'  v)) - f(2 \sqrt{t} x)\right| dx. \\
\end{align*}
Using a change of variables, $u = 2 \sqrt{t} x$, we get
\begin{align}
\si(f_{\mathrm{scaled}},v,\kappa) 
 & = {\frac 1 \kappa} \cdot \sup_{\kappa' \in [0,\kappa]} \int_{\R^d} \left|f(u+ \kappa' 2 \sqrt{t}  v) - f(u)\right| du \nonumber \\
 & = {\frac 1 \kappa} \cdot \sup_{\kappa' \in [0,2 \sqrt{t} \kappa]} \int_{\R^d} \left|f(u+ \kappa'   v) - f(u)\right| du \nonumber \\
 & = 2 \sqrt{t} \cdot \si(f,v,2 \sqrt{t} \kappa) \le 2 c \sqrt{t}. \label{eq:a}
\end{align}
The last inequality uses that $f \in \CSI(c,d,g)$. Inequality (\ref{eq:a})  implies that $f_{\mathrm{scaled}} \in  \CSI(2c\sqrt{t},d,g)$.
Now, 
$\Pr_{\bx \sim f_{\mathrm{scaled}}}(\bx \in B(1/2)) = \Pr_{\bx \sim f}(\bx \in B_t) \geq 1 - \epsilon$, so
applying Lemma~\ref{lem:restrict-si} completes the proof.
\end{proof}

\subsection{Proof of Theorem~\ref{thm:no-noise}} \label{sec:proof-of-no-noise}

We are now ready to prove Theorem~\ref{thm:no-noise}.  Consider the following algorithm, which we call 
\textsf{construct-candidates}:

\begin{enumerate}

\item Run the transformation algorithm \textsf{compute-transformation} $D := O (\ln(1/\delta))$ many times (with parameter $\eps$ each time).  Let $(\tilde{\mu}^{(i)},t)$ be the output that it produces on the $i$-th run, where $t=O(g^{-1}(\epsilon)^2)$. 
% (Observe that from the proof of Lemma~\ref{lem:transformation}, this is always the value of $t$ that is output by \textsf{compute-transformation}.)

\item For each $i \in [D]$,  let
$B_t^{(i)} = \{ x : || x - \tilde{\mu} || \leq \sqrt{t} \}$ and
$
f^{(i)}_{\mathrm{scond}}
$
be the density defined from $(\tilde{\mu}^{(i)},t)$ as in Lemma~\ref{lem:properties}.
\end{enumerate}

Before describing the third step of the algorithm, we observe that given the pair $(\tilde{\mu}^{(i)},t)$ it is easy to check whether any given $x \in \mathbb{R}^d$ belongs to $B_{t}^{(i)}$.\ignore{ Thus it is easy to efficiently obtain a $\pm \eps/2$-accurate estimate of $\Pr_{x \sim f}[x \in B_{t}]$ given draws from $f$; let $v^{(i)} \in [0,1]$ denote the estimate thus obtained.} We further make the following observations:

\begin{itemize}

\item If $\Pr_{\bx \sim f}[\bx \in B_{t}^{(i)}] \geq 1/2,$ then with probability at least $1/2$ a draw from $f$ can be used as a draw from $f_{B_{t}^{(i)}}$.  In this case, via rejection sampling, it is easy to very efficiently simulate draws from $f^{(i)}_{\mathrm{scond}}$ given access to samples from $f$ (the average slowdown is at most a factor of 2).  Note that if $(\tilde{\mu}^{(i)},t)$ satisfies the properties of Lemma~\ref{lem:transformation}, then $\Pr_{\bx \sim f}[\bx \in B_{t}^{(i)}] \geq 1-\eps$ and we fall into this case.

\item On the other hand, if $\Pr_{\bx \sim f}[\bx \in B_{t}^{(i)}] < 1/2,$ then it may be inefficient to simulate draws from $f^{(i)}_{\mathrm{scond}}.$  But any such $i$ will not satisfy the properties of Lemma~\ref{lem:transformation}, so if rejection sampling is inefficient to simulate draws from $f^{(i)}_{\mathrm{scond}}$ then we can ignore such an $i$ in what follows.

\end{itemize}

With this in mind, the third and fourth steps of the algorithm are as follows:
    
\begin{enumerate}

\item [3.] For each $i \in [D]$,\footnote{Actually, as described above, this and the fourth step are done only for those $i$ for which rejection sampling is not too inefficient in simulating draws from $f^{(i)}_{\mathrm{scond}}$ given draws from $f$; for the other $i$'s, the run of \textsf{learn-bounded} is terminated.} run the algorithm \textsf{learn-bounded} using $m$  samples from $f^{(i)}_{\mathrm{scond}}$, where $m=m(\eps,\delta,d)$ is the sample complexity of \textsf{learn-bounded} from Lemma~\ref{lem:finite-support}.
 Let ${h}_{\mathrm{scond}}^{(i)}$ be the resulting hypothesis that \textsf{learn-bounded} outputs.

\item [4.] Finally, for each $i \in [D]$ output the hypothesis obtained by 
inverting the mapping of Lemma~\ref{lem:properties}, i.e.
\begin{equation} \label{eq:hi}
{h}^{(i)}(x) \eqdef
{\frac 1 {2\sqrt{t}}} \cdot {h}_{\mathrm{scond}}^{(i)}\left(
{\frac 1 {2\sqrt{t}}} \cdot (x-\tilde{\mu}^{(i)})  \right).
\end{equation}
\end{enumerate} 

Thus the output of \textsf{construct-candidate} is a $D$-tuple of hypotheses $({h}^{(1)},
\dots, {h}^{(D)}).$ 

\medskip

We now analyze the \textsf{construct-candidate} algorithm.  Given Lemma~\ref{lem:transformation} and Lemma~\ref{lem:properties}, it is not difficult to show that with high probability at least one of the hypotheses that it outputs has error $O(\eps)$ with respect to $f$:

\begin{lemma} \label{lem:one-good}
With probability at least $1-O(\delta)$, at least one ${h}^{(i)}$ has $\int_x |{h}^{(i)}(x)-f(x)| dx \leq O(\eps).$
\end{lemma}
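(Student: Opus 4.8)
The plan is to trace the chain of events that occurs when the transformation algorithm happens to succeed on a given run $i$, and then use a union bound to argue that at least one run succeeds. First I would fix attention on a single index $i \in [D]$ and condition on the event $E_i$ that the $i$-th call to \textsf{compute-transformation} produces a pair $(\tilde{\mu}^{(i)},t)$ satisfying all three properties in Lemma~\ref{lem:transformation}; by that lemma, $\Pr[E_i] \geq 9/10$, and by independence of the $D = O(\ln(1/\delta))$ runs the probability that no $E_i$ occurs is at most $(1/10)^D \leq \delta$ for an appropriate choice of the constant in $D$. So with probability $1 - \delta$ there is some good index $i^\ast$, and it suffices to show that ${h}^{(i^\ast)}$ has $L_1$ error $O(\eps)$ from $f$ with probability $1 - O(\delta)$.

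Next I would argue that, on the event $E_{i^\ast}$, the density $f^{(i^\ast)}_{\mathrm{scond}}$ meets the hypotheses of Corollary~\ref{corr:agnostic} (or just Lemma~\ref{lem:finite-support}): by Lemma~\ref{lem:properties} it is supported in $B(1/2)$ and satisfies $\si(f^{(i^\ast)}_{\mathrm{scond}},\kappa) \leq 4\eps/\kappa + 4c\sqrt{t}$. Since $t = \Theta(g^{-1}(\eps)^2)$, taking the scale parameter $\kappa$ of \textsf{learn-bounded} to be $\kappa_0 := \Theta(\eps / (c g^{-1}(\eps)))$ (small enough that also $\kappa_0 < \eps$) makes $\kappa_0 \cdot \si(f^{(i^\ast)}_{\mathrm{scond}}, \kappa_0) \leq 4\eps \cdot (\text{const}) + 4 c \sqrt{t}\, \kappa_0 = O(\eps)$, which after rescaling constants gives the required bound $\kappa_0 \si \leq \eps'/2$ for the accuracy parameter $\eps' = O(\eps)$ fed to \textsf{learn-bounded}. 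Also, because $E_{i^\ast}$ holds we have $\Pr_{\bx \sim f}[\bx \in B_t^{(i^\ast)}] \geq 1 - \eps \geq 1/2$, so rejection sampling genuinely simulates draws from $f^{(i^\ast)}_{\mathrm{scond}}$ with only a constant-factor slowdown, i.e.\ step 3 is actually executed (not terminated) for $i^\ast$ with high probability. Then Lemma~\ref{lem:finite-support}, invoked with confidence parameter $\delta/D$ and accuracy $\eps' = O(\eps)$, guarantees that with probability $1 - \delta/D$ the output ${h}^{(i^\ast)}_{\mathrm{scond}}$ satisfies $\int |{h}^{(i^\ast)}_{\mathrm{scond}}(x) - f^{(i^\ast)}_{\mathrm{scond}}(x)|\,dx \leq \eps'$.

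Finally I would push this error bound back through the affine change of variables. The inversion map $x \mapsto \tfrac{1}{2\sqrt t}(x - \tilde{\mu}^{(i^\ast)})$ in (\ref{eq:hi}) is exactly the inverse of the scaling used to define $f_{\mathrm{scaled}}$, so ${h}^{(i^\ast)}$ is the density obtained from ${h}^{(i^\ast)}_{\mathrm{scond}}$ by this affine map, and $L_1$ distance is invariant under an affine change of variables together with the matching Jacobian normalization; hence $\int |{h}^{(i^\ast)}(x) - f_{B_t^{(i^\ast)}}(x)|\,dx = \int |{h}^{(i^\ast)}_{\mathrm{scond}} - f^{(i^\ast)}_{\mathrm{scond}}| \leq \eps'$. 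To get from $f_{B_t^{(i^\ast)}}$ back to $f$ itself, note $\dtv(f, f_{B_t^{(i^\ast)}}) \leq \Pr_{\bx \sim f}[\bx \notin B_t^{(i^\ast)}] \leq \eps$ by property 1 of Lemma~\ref{lem:transformation}, so combining by the triangle inequality gives $\int |{h}^{(i^\ast)}(x) - f(x)|\,dx \leq \eps' + 2\eps = O(\eps)$. A union bound over the failure probabilities ($\delta$ that no run is good, plus $D \cdot (\delta/D) = \delta$ that \textsf{learn-bounded} errs on some run, plus $O(\delta)$ that the empirical-mean estimates or rejection-sampling efficiency checks fail) yields the claimed $1 - O(\delta)$. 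The main obstacle, and the place requiring genuine care rather than bookkeeping, is the bookkeeping of \emph{which} scale $\kappa$ to feed \textsf{learn-bounded}: one must verify that the shift-invariance bound $4\eps/\kappa + 4c\sqrt t$ coming out of Lemma~\ref{lem:properties}, multiplied by $\kappa$, can be driven down to $O(\eps)$ by a valid choice of $\kappa$ (one that also respects $\kappa < \eps$), since it is the interaction between the $\eps$-dependent first term and the $\sqrt t \sim g^{-1}(\eps)$-dependent second term that ultimately determines the $(1/\eps)^{d+2}$ and $g^{-1}(\eps)^d$ factors in the sample complexity stated in Theorem~\ref{thm:no-noise}.
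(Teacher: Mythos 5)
Your proposal is correct and follows essentially the same route as the paper's proof: amplification over the $D=O(\ln(1/\delta))$ runs of \textsf{compute-transformation}, verification via Lemma~\ref{lem:properties} that the conditioned-and-scaled density satisfies the hypotheses of Lemma~\ref{lem:finite-support} for the choice $\kappa = \Theta(\min\{\eps,\eps/(c\,g^{-1}(\eps))\})$, invariance of $L_1$ distance under the affine inversion (the paper's Proposition~\ref{prop:density-diff-transform}), and a final triangle inequality using $\dtv(f,f_{B_t})\le O(\eps)$ from property 1 of Lemma~\ref{lem:transformation}. The only differences are presentational (your explicit $(1/10)^D$ amplification and union-bound bookkeeping, which the paper leaves implicit).
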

\begin{proof}
It is immediate from Lemma~\ref{lem:transformation} and the choice of $D$ that with probability $1-\delta$ at least one
triple $(\tilde{\mu}^{(i)},t)$ satisfies the properties of Lemma~\ref{lem:transformation}.  
 Fix $i'$ to be an $i$ for which this holds.

Given any $i \in [D]$, it is easy to carry out the check for whether rejection sampling is too inefficient in simulating $f^{(i)}_{\mathrm{scond}}$ in such a way that algorithm \textsf{learn-bounded} will indeed be run to completion (as opposed to being terminated) on $f^{(i')}_{\mathrm{scond}}$ with probability at least $1-\delta$, so we henceforth suppose that indeed \textsf{learn-bounded} is actually run to completion on $f^{(i')}_{\mathrm{scond}}$. 
%keen-eyed
Since $(\tilde{\mu}^{(i')},t)$ satisfies the properties of Lemma~\ref{lem:transformation}, by Lemma~\ref{lem:properties}, taking 
$\kappa = \min\{\eps{/2},\eps/(4 g^{-1}(\eps)c)\}$) the density $f^{(i')}_{\mathrm{scond}}$ satisfies  the required conditions for Lemma~\ref{lem:finite-support} to apply with that choice of $\kappa$.
The following simple proposition implies that ${h}^{(i)}$ is likewise $O(\eps)$-close to $f_{B_{t}}$:

\begin{proposition}~\label{prop:density-diff-transform}
Let $f$ and $g$ be two densities in $\mathbb{R}^d$ and let $x \mapsto A(x-z)$ be any invertible linear transformation over $\mathbb{R}^d$. Let $f_A(x) = \det(A) \cdot f(A(x-z))$ and $g_A(x) = \det(A) \cdot g(A(x-z))$ be the densities from $f$ and $g$ under this transformation.  Then $\dtv(f,g) = \dtv(f_A, g_A)$. 
\end{proposition}
\begin{proof}
\begin{align*}
\dtv(f_A,g_A) &= \int_{x} |f_A(x) - g_A(x)| dx = \int_x \det(A) |f(A(x-z)) - g(A(x-z)) | dx\\
& = \int_z |f(z) - g(z)| dz = \dtv(f,g), 
\end{align*}
where the penultimate equality follows by a linear transformation of variables. 
\end{proof}

It remains only to observe that by property 1 of Lemma~\ref{lem:transformation} the density $f_{B_{t}}$ is $\eps$-close to $f$, and then by the triangle inequality we have that ${h}^{(i)}$ is $O(\eps)$-close to $f$.  This gives Lemma~\ref{lem:one-good}.
\end{proof}

Tracing through the parameters, it is straightforward to verify that the sample and time complexities of \textsf{construct-candidates} are as claimed in the statement of Theorem~\ref{thm:no-noise}. These sample and time complexities dominate the sample and time complexities of the remaining portion of the algorithm, the hypothesis selection procedure discussed below.

All that is left is to identify a good hypothesis from the pool of $D$ candidates.   This can be carried out rather straightforwardly using well-known tools for hypothesis selection.  Many variants of the basic hypothesis selection procedure have appeared in the literature, see e.g.
\citep{Yatracos85,DaskalakisKamath14,AJOS14,DDS12stoc,DDS15}).  
The following is implicit in the proof of
Proposition 6 from \citep{DDS15}:

\begin{proposition} \label{prop:log-cover-size}
Let $\bD$ be a distribution
with support contained in a
set $W$
and let $\calD_\eps = \{ \bD_j\}_{j=1}^M$ be a collection of $M$ hypothesis distributions 
%\rnote{we have measures, not distributions}
over $W$ with the property that there exists $i \in [M]$ such that
$\dtv(\bD,\bD_i) \leq \eps$.
There is an algorithm~$\mathrm{Select}^{\bD}$ which is given
$\eps$ and a confidence parameter $\delta$, and is provided
with access to
(i) a source of i.i.d. draws from $\bD$ and from $\bD_i$, for all $i \in [M]$;
%\rnote{I guess we can only approximately sample from
%the distributions which correspond to our measures?} 
and
(ii) a $(1+\beta)$ ``approximate evaluation oracle'' $\eval_{\bD_i}(\beta)$,
for each $i \in [M]$, which, on input $w \in W$, deterministically outputs $\tilde{D}_i^{\beta}(w)$ such that 
the value $\frac{\bD_i(w)}{1+\beta} \le \tilde{D}_i^{\beta}(w) \le (1+\beta) \cdot \bD_i(w)$. 
Further, $(1+\beta)^2 \le (1+\epsilon/8)$. 
%\rnote{I guess we can only approximate this evaluation oracle?}
 The $\mathrm{Select}^{\bD}$ algorithm has the following behavior:
It makes
$m = O\left( (1/ \eps^{2}) \cdot (\log M + \log(1/\delta)) \right)
$ draws from $\bD$ and {from} each $\bD_i$, $i \in [M]$,
and $O(m)$ calls to each oracle $\eval_{\bD_i}$, $i \in [M]$.  It runs in time $\poly(m,M)$ (counting each call to an $\eval_{\bD_i}$ oracle and draw from a $\bD_i$ distribution as unit time),
and with probability $1-\delta$ it outputs an index $i^{\star} \in [M]$ that satisfies $\dtv(\bD,\bD_{i^{\star}}) \leq 6\eps.$
\end{proposition}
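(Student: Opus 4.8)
The plan is to establish this via the classical Scheffé-style tournament (Yatracos-type hypothesis selection), adapted to cope with the fact that we only have approximate evaluation oracles rather than exact density values. First I would recall the key combinatorial object: for each pair $i,j \in [M]$, define the Scheffé set $A_{ij} = \{ w \in W : \bD_i(w) > \bD_j(w)\}$, and note that by the definition of total variation distance $\dtv(\bD_i, \bD_j) = \bD_i(A_{ij}) - \bD_j(A_{ij})$, and more generally for the true distribution $\bD$ and any event $A$, $|\bD(A) - \bD_i(A)| \leq \dtv(\bD,\bD_i)$. The algorithm estimates $\bD(A_{ij})$ from the $m = O((1/\eps^2)(\log M + \log(1/\delta)))$ i.i.d. draws from $\bD$; by a Chernoff/Hoeffding bound together with a union bound over the at most $M^2$ sets $A_{ij}$, all these empirical estimates $\widehat{\bD}(A_{ij})$ are within $\eps$ of the true values $\bD(A_{ij})$ simultaneously with probability $1-\delta$. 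Similarly, using the draws from each $\bD_i$, we get estimates of $\bD_i(A_{ij})$ and $\bD_j(A_{ij})$ accurate to within $\eps$.

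The subtlety the proposition is designed to handle is that the set $A_{ij}$ is defined by the exact densities $\bD_i, \bD_j$, which we cannot compute; we only have the $(1+\beta)$-approximate oracles $\eval_{\bD_i}$. So instead I would use the \emph{surrogate} set $\widetilde{A}_{ij} = \{ w : \tilde{D}_i^\beta(w) > \tilde{D}_j^\beta(w)\}$ computed from the oracle outputs, and argue that membership in $\widetilde{A}_{ij}$ can only differ from membership in $A_{ij}$ on points $w$ where $\bD_i(w)$ and $\bD_j(w)$ are within a $(1+\beta)^2$ multiplicative factor of each other; on such points the contribution to $|\bD_i(\widetilde{A}_{ij}) - \bD_j(\widetilde{A}_{ij})|$ versus $\dtv(\bD_i,\bD_j)$ is controlled because $(1+\beta)^2 \leq 1 + \eps/8$ forces the densities to be close wherever the two surrogate sets disagree. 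This lets us conclude that $\bD_i(\widetilde{A}_{ij}) - \bD_j(\widetilde{A}_{ij}) \geq \dtv(\bD_i,\bD_j) - O(\eps)$, which is the property we actually need. Then the tournament rule is the standard one: for each candidate $i$, compute $\max_j | \widehat{\bD}(\widetilde{A}_{ij}) - \hat{\bD}_i(\widetilde{A}_{ij})|$ using the empirical estimates, and output the $i^\star$ minimizing this quantity.

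To finish, I would run the usual two-part argument. Let $i$ be a good candidate with $\dtv(\bD,\bD_i)\leq \eps$ (guaranteed to exist by hypothesis). For \emph{this} $i$, every Scheffé-type test value $|\widehat{\bD}(\widetilde{A}_{ij}) - \hat{\bD}_i(\widetilde{A}_{ij})|$ is small — of order $O(\eps)$ — since the true gap $|\bD(\widetilde{A}_{ij}) - \bD_i(\widetilde{A}_{ij})| \leq \dtv(\bD,\bD_i) \leq \eps$ and all estimates are $\eps$-accurate. Hence the winner $i^\star$ also has all its test values $O(\eps)$. Conversely, if $\dtv(\bD, \bD_{i^\star})$ were larger than $6\eps$, then testing $i^\star$ against the good index $i$ would expose this: on the set $\widetilde{A}_{i^\star i}$ (or $\widetilde{A}_{i i^\star}$, whichever is relevant), the quantity $|\bD(\widetilde{A}) - \bD_{i^\star}(\widetilde{A})|$ is at least $\dtv(\bD_{i^\star},\bD_i) - O(\eps) \geq \dtv(\bD,\bD_{i^\star}) - \dtv(\bD,\bD_i) - O(\eps) > 6\eps - \eps - O(\eps)$, which after accounting for estimation error would force $i^\star$'s max test value above that of $i$ — a contradiction. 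Chasing the constants through the triangle inequalities and the $\eps$ slacks yields the stated bound $\dtv(\bD, \bD_{i^\star}) \leq 6\eps$. The running time and sample bounds are immediate from the construction: $m$ draws from each of the $M+1$ sources, $O(m)$ oracle calls per $\widetilde{A}_{ij}$ to decide membership, and $O(M^2)$ sets, giving $\poly(m,M)$ overall. The main obstacle I anticipate is purely bookkeeping: tracking exactly how the $(1+\beta)^2 \leq 1+\eps/8$ condition translates into an additive $O(\eps)$ error in the Scheffé gaps when the exact set $A_{ij}$ is replaced by the oracle-computed surrogate $\widetilde{A}_{ij}$, and then propagating that together with the empirical estimation errors so that the final constant comes out to $6$; since the proposition is quoted as already implicit in \citep{DDS15}, I would lean on that reference for the delicate constant-chasing rather than rederiving it from scratch.
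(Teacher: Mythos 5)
The paper offers no proof of this proposition---it is invoked as implicit in the proof of Proposition 6 of \citep{DDS15}---and your reconstruction is essentially the standard argument behind that result: a Scheff\'e/Yatracos-style tournament run on the oracle-defined surrogate sets $\widetilde{A}_{ij}$, with the condition $(1+\beta)^2 \le 1+\eps/8$ used to show that replacing the exact sets $A_{ij}$ by $\widetilde{A}_{ij}$ costs only an additive $O(\eps)$ in the Scheff\'e gaps, followed by the usual two-part comparison between the winner and a good candidate. Your outline is correct and matches the intended route; the remaining work is only the constant-chasing you already flag, which is exactly what the citation to \citep{DDS15} covers.
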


% \begin{remark}
% Note that Proposition~\ref{prop:log-cover-size} as stated in \citep{DDS15} is only applicable when the support of the distribution is over a finite set ${W}$.  
% In our setting the densities are supported on an uncountable domain and so per se, the above proposition is not applicable. However, the proof of Proposition~\ref{prop:log-cover-size} in \citep{DDS15} can easily be generalized to the case when the domain $W$ is uncountable assuming that we have access to a $(1+\beta)$ ``approximate evaluation oracle" which on input $w$ outputs the density function at $w$ up to a multiplicative error $(1+\beta)$.  (Indeed, note that the complexity of the algorithm does not depend on the size of the domain $W$.) We do not dwell on this issue further and use the variant of Proposition~\ref{prop:log-cover-size}, implicit in the analysis of \citet{DDS15}, that does not require the domain $W$ to be finite.\pnote{Can I suggest that, in place of
% the above lemma and this remark, we instead modify the lemma
% to remove the requirement that $W$ is finite, and then add a
% comment above saying that this is implicit in the proof of the 
% earlier proposition?  \red{Rocco:}  This is fine with me.}
% \end{remark}

As suggested above, the remaining step is to apply Proposition~\ref{prop:log-cover-size} to the list of candidate hypothesis ${h}^{(i)}$ which satisfies the guarantee of Lemma~\ref{lem:one-good}. However, to bound the sample and time complexity of running the procedure Proposition~\ref{prop:log-cover-size}, we need to bound the complexity  both of sampling from $\{{h}^{(i)}\}_{i \in [D]}$ as well as of constructing approximate evaluation oracles for these measures.\footnote{Note that while ${h}^{(i)}$ are forced to be non-negative and thus can be seen as measures, they need not integrate to $1$ and thus need not be densities.}
In fact, we will first construct densities out of the measures $\{{h}^{(i)}\}_{i \in [D]}$ and show how to both efficiently sample from these measures as well as construct approximate evaluation oracles for these densities. 

Towards this, let us now define $H_{\max}$ as follows: $H_{\max} = \max_{i \in [D]} \max_{z \in [-1,1]^n} {h}_{\mathrm{scond}}^{(i)}(z)$.  From Lemma~\ref{lem:finite-support} (recall that Lemma~\ref{lem:finite-support} was applied with $\kappa = \min\{\eps{/2},\eps/(4g^{-1}(\eps)c)\}$) we get that
\begin{equation}~\label{eq:H-max-val}H_{\max}=
%   O\left( {\frac {\log^{2d} (1/\eps)}{\eps^d}} \right).
  O_{c,d}\left( \left( {\frac {g^{-1}(\eps)}{\eps}} \right)^d \log^{2d} \frac{g^{-1}(\eps)}{\eps} \right).
\end{equation}
We will carry out the rest of our calculations in terms of $H_{\max}$.

\begin{observation}~\label{obs:h-compute}
For any $i \in [D]$, $\int_{x \in [-1,1]^d} {h}_{\mathrm{scond}}^{(i)} (x) dx$ can be 
estimated to additive accuracy $\pm \epsilon$  and confidence $1-\delta$  in time $O_d\left( \frac{H_{\max}^2}{\epsilon^2} \cdot \log (1/\delta)\right)$. 
\end{observation}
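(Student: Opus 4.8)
The plan is plain Monte-Carlo integration. Write $V^{(i)} = \int_{[-1,1]^d} h^{(i)}_{\mathrm{scond}}(x)\,dx$ for the target quantity. Draw a point $x$ uniformly at random from the cube $[-1,1]^d$ (whose volume is the constant $2^d$) and set $Y = 2^d\cdot h^{(i)}_{\mathrm{scond}}(x)$, so that $\mathbf{E}[Y]=V^{(i)}$; the empirical mean of i.i.d.\ copies of $Y$ is an unbiased estimator of $V^{(i)}$. Each copy of $Y$ is produced with one uniform draw from $[-1,1]^d$ (time $O_d(1)$) together with one evaluation of $h^{(i)}_{\mathrm{scond}}$, which by Lemma~\ref{lem:finite-support} takes time $O_d(\log^{2d}(1/\kappa)/\kappa^d)$; since \textsf{learn-bounded} was invoked with $\kappa = \min\{\epsilon/2,\epsilon/(4 g^{-1}(\epsilon)c)\}$, so that $1/\kappa = O_{d}(g^{-1}(\epsilon)/\epsilon)$, this per-sample cost is $O_d(H_{\max})$ by~(\ref{eq:H-max-val}).

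The only point requiring care is the number of samples, and here one should bound the variance of $Y$ rather than just its range. Using the deterministic pointwise bound $\Vert h^{(i)}_{\mathrm{scond}}\Vert_\infty \le H_{\max}$ from Lemma~\ref{lem:finite-support}, together with the fact that, by the guarantee of \textsf{learn-bounded} (Lemma~\ref{lem:finite-support}), $h^{(i)}_{\mathrm{scond}}$ is $L_1$-close to a density and hence $V^{(i)} = \int h^{(i)}_{\mathrm{scond}} \le 1+\epsilon \le 2$, one gets
\[
\mathrm{Var}(Y) \;\le\; \mathbf{E}[Y^2] \;=\; 2^{d}\int_{[-1,1]^d} h^{(i)}_{\mathrm{scond}}(x)^2\,dx \;\le\; 2^{d} H_{\max}\, V^{(i)} \;=\; O_d(H_{\max}).
\]
By Chebyshev's inequality, the empirical mean of $N_0 = O_d(H_{\max}/\epsilon^2)$ i.i.d.\ copies of $Y$ is within $\pm\epsilon$ of $V^{(i)}$ with probability at least $5/6$; running $O(\log(1/\delta))$ independent such batches and returning the median boosts the confidence to $1-\delta$ by a standard Chernoff bound, for a total of $N = O_d\!\left(\tfrac{H_{\max}}{\epsilon^2}\log(1/\delta)\right)$ samples.

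Multiplying the sample count $N$ by the per-sample cost $O_d(H_{\max})$ (the median and averaging steps are lower order) gives total running time $O_d\!\left(\tfrac{H_{\max}^2}{\epsilon^2}\log(1/\delta)\right)$, as claimed. The one substantive point is the variance bound: a Hoeffding bound applied directly with the range $[0,2^d H_{\max}]$ of $Y$ would require $\Theta(H_{\max}^2/\epsilon^2)$ samples and hence $\Theta(H_{\max}^3/\epsilon^2)$ time, so it is precisely the estimate $V^{(i)}=O(1)$ — which comes from the $L_1$ learning guarantee of \textsf{learn-bounded}, and which one may assume holds since it is implied by the same $1-O(\delta)$-probability success event used throughout the analysis of \textsf{construct-candidates} — that yields the stated quadratic rather than cubic dependence on $H_{\max}$.
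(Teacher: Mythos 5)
Your estimator is the same one the paper uses — plain Monte Carlo integration over $[-1,1]^d$ with Chebyshev plus a median-of-batches boost — but your analysis differs in two ways. The paper simply uses the deterministic range bound $h_{\mathrm{scond}}^{(i)}(x)\in[0,H_{\max}]$, hence variance at most $H_{\max}^2$, and charges each point evaluation of $h_{\mathrm{scond}}^{(i)}$ as unit cost; you instead sharpen the variance to $O_d(H_{\max})$ via $\int h_{\mathrm{scond}}^{(i)}\le 2$, precisely so that you can afford the honest per-evaluation cost $O_d(|\mathsf{Low}|)=O_d(H_{\max})$ and still land on the stated $O_d\bigl(\frac{H_{\max}^2}{\epsilon^2}\log(1/\delta)\bigr)$ running time. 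That accounting point is well taken.

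The one step that does not stand as written is the justification of $\int h_{\mathrm{scond}}^{(i)}\le 1+\epsilon$ \emph{for every} $i\in[D]$. The $L_1$ guarantee of Lemma~\ref{lem:finite-support} applies only when the density fed to \textsf{learn-bounded} satisfies its shift-invariance hypothesis, i.e.\ only for those $i$ whose run of \textsf{compute-transformation} satisfied Lemma~\ref{lem:transformation}; and the success event of \textsf{construct-candidates} that you invoke (Lemma~\ref{lem:one-good}) guarantees only that \emph{at least one} such $i$ exists, not all of them. For a bad $i$ nothing you cite rules out $\int h_{\mathrm{scond}}^{(i)}$ being as large as $\Theta_d(H_{\max})$, in which case your $O_d(H_{\max}/\epsilon^2)$ samples no longer give $\pm\epsilon$ accuracy, while the Observation is stated for arbitrary $i\in[D]$. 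The gap is repairable: as in the proof of Corollary~\ref{corr:agnostic}, the Fourier-estimation analysis inside \textsf{learn-bounded} needs only that the input density is supported in $B(1/2)$ (which $f_{\mathrm{scond}}^{(i)}$ is by construction, with no shift-invariance needed), so with probability $1-\delta$ per run one gets $\int|h_{\mathrm{scond}}^{(i)}-f_{\mathrm{scond}}^{(i)}\ast b_{d,\gamma}|\le\epsilon/2$ and hence $\int h_{\mathrm{scond}}^{(i)}\le 1+\epsilon/2$; a union bound over the $D=O(\log(1/\delta))$ runs then gives your bound for all $i$ simultaneously. Alternatively, one can fall back to the paper's unconditional range-based variance bound, at the price of either treating evaluations as unit cost (as the paper implicitly does) or accepting an extra factor of $H_{\max}$ in the time bound.
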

\begin{proof}
First note that it suffices to estimate
the quantity  $\mathbf{E}_{x \in [-1,1]^d} 
[{h}_{\mathrm{scond}}^{(i)} (x)]$ to additive error $\epsilon/2^d$. However, this can be estimated using the trivial random sampling algorithm. In particular,  as 
${h}_{\mathrm{scond}}^{(i)}(x) \in [0,H_{\max}]$, the variance of the simple unbiased estimator for $\mathbf{E}_{x \in [-1,1]^d} 
[{h}_{\mathrm{scond}}^{(i)} (x)]$ is also bounded by $H_{\max}^2$. This finishes the proof.  
\end{proof}

Note that, while the algorithm of Observation~\ref{obs:h-compute} does random sampling, this sampling
is not from $f$, so it adds nothing to the sample complexity of the learning algorithm.

Next, for $i \in [D]$, let us define the quantity $Z_i$ to be $Z_i = \int_{x\ignore{\in [-1,1]^d}} {h}^{(i)}(x) dx$. Since the functions ${h}^{(i)}$  and ${h}_{\mathrm{scond}}^{(i)}$ are obtained from each other by linear transformations (recall (\ref{eq:hi})),
we get that that 
\[
2 \sqrt{t} Z_i = \int_{x\ignore{\in [-1,1]^d}} {h}_{\mathrm{scond}}^{(i)}\left({\frac 1 {2\sqrt{t}}} \cdot (x-\tilde{\mu}^{(i)})\right) dx. 
\]
We now define the functions ${H}^{(i)}$ and ${H}_{\mathrm{scond}}^{(i)}$ as
\[
{H}^{(i)} (x) = \frac{{h}^{(i)}(x)}{Z_i} \ \ \textrm{and} \  \ {H}_{\mathrm{scond}}^{(i)}(x) = \frac{{h}_{\mathrm{scond}}^{(i)}(
{\frac 1 {2\sqrt{t}}} \cdot (x-\tilde{\mu}^{(i)}))}{Z_i} \cdot {\frac 1 {2\sqrt{t}}} .
\]
Observe that the functions ${H}^{(i)}$ and ${H}_{\mathrm{scond}}^{(i)}$ are densities (i.e. they are non-negative and integrate to $1$). 
First, we will show that it suffices to run the procedure $\mathrm{Select}^{\bD}$ on the densities ${H}^{(i)}$. 
To see this, note that  Lemma~\ref{lem:one-good} says that there exists $i \in [D]$ such that $h^{(i)}$ satisfies $\int_x |h^{(i)}(x) - f(x)| = O(\epsilon)$. For such an $i$, $Z_i \in [1-O(\epsilon), 1+ O(\epsilon)]$. 
\ignore{\pnote{The above change regarding what we know about $Z_i$ propagates -- the propagated changes are not marked in blue.}}
Thus, we have the following corollary. 
\begin{corollary}~\label{corr:one-good}
With probability at least $1-\delta$, at least one ${H}^{(i)}$ satisfies $\int_x |{H}^{(i)}(x) - f(x)| = O(\epsilon)$. Further, for such an $i$, $Z_i \in [1-O(\epsilon), 1+O(\epsilon)]$. 
\end{corollary}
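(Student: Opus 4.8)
The plan is to derive this corollary directly from Lemma~\ref{lem:one-good} together with the elementary observation that a nonnegative measure which is $L_1$-close to a density must itself have total mass close to $1$. First I would invoke Lemma~\ref{lem:one-good}, which guarantees that with probability at least $1-O(\delta)$ there is some index $i \in [D]$ with $\int_x |h^{(i)}(x) - f(x)|\,dx \leq O(\eps)$; fix such an $i$ for the remainder of the argument. (The probability $1-\delta$ in the statement is just $1-O(\delta)$ with the implicit constant absorbed by adjusting the constant in the number of repetitions $D$.)

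Next I would bound $Z_i = \int_x h^{(i)}(x)\,dx$. Since $f$ is a density, $\int_x f(x)\,dx = 1$, so
\[
|Z_i - 1| = \left| \int_x h^{(i)}(x)\,dx - \int_x f(x)\,dx \right| \leq \int_x |h^{(i)}(x) - f(x)|\,dx = O(\eps),
\]
which establishes the ``Further'' clause, namely $Z_i \in [1 - O(\eps), 1 + O(\eps)]$. In particular, since $\eps < 1/2$ throughout, $Z_i$ is bounded away from $0$, so $H^{(i)} = h^{(i)}/Z_i$ is well-defined and the $O(\cdot)$ constants below do not blow up.

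Finally I would bound $\int_x |H^{(i)}(x) - f(x)|\,dx$ by the triangle inequality, routing through $h^{(i)}$:
\[
\int_x |H^{(i)}(x) - f(x)|\,dx \leq \int_x |H^{(i)}(x) - h^{(i)}(x)|\,dx + \int_x |h^{(i)}(x) - f(x)|\,dx .
\]
The second term is $O(\eps)$ by the choice of $i$. For the first term, since $H^{(i)} = h^{(i)}/Z_i$ and $h^{(i)} \geq 0$,
\[
\int_x |H^{(i)}(x) - h^{(i)}(x)|\,dx = \left| \tfrac{1}{Z_i} - 1 \right| \int_x h^{(i)}(x)\,dx = \left| \tfrac{1}{Z_i} - 1 \right| Z_i = |1 - Z_i| = O(\eps) .
\]
Summing the two contributions gives $\int_x |H^{(i)}(x) - f(x)|\,dx = O(\eps)$, completing the proof. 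There is essentially no obstacle here; the only point requiring a moment's care is that one needs $\eps$ below an absolute constant so that division by $Z_i$ is harmless, which is already guaranteed by the standing assumption $\eps < 1/2$.
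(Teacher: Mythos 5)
Your proposal is correct and follows essentially the same route as the paper: invoke Lemma~\ref{lem:one-good}, note that $|Z_i - 1| \le \int_x |h^{(i)}(x)-f(x)|\,dx = O(\eps)$ since $f$ integrates to $1$, and conclude that the renormalized $H^{(i)} = h^{(i)}/Z_i$ is still $O(\eps)$-close to $f$. The paper states this almost without proof; your write-up merely makes the final triangle-inequality/renormalization step explicit, which is exactly what is implicitly intended there.
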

Thus, it suffices to run the procedure $\mathrm{Select}^{\bD}$ on the candidate distributions $\{{H}^{(i)}\}_{i \in [D]}$. The next proposition shows that 
the densities $\{  {H}^{(i)}\}_{i \in [D]}$ are samplable.

\begin{proposition}~\label{prop:samplable}
A draw from the density ${H}^{(i)}(x)$ can be sampled in time
$O(H_{\max}/Z_{i})$.
\end{proposition}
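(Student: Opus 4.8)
The plan is to realize $H^{(i)}$ as the pushforward, under an explicit invertible affine map, of a density supported on the cube $[-1,1]^d$, and to sample from that cube-supported density by elementary rejection sampling against a uniform envelope.

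First I would unwind the definitions. Write $W_i = \int_{[-1,1]^d} h^{(i)}_{\mathrm{scond}}(y)\,dy$ and let $\bar h^{(i)}_{\mathrm{scond}} = h^{(i)}_{\mathrm{scond}}/W_i$, a genuine density on $[-1,1]^d$. The change of variables already recorded just above the statement, namely $2\sqrt{t}\, Z_i = \int_{\R^d} h^{(i)}_{\mathrm{scond}}\!\big(\tfrac{1}{2\sqrt{t}}(x-\tilde{\mu}^{(i)})\big)\,dx$, is equivalently $W_i = Z_i/(2\sqrt{t})^{\,d-1}$, and it shows that $H^{(i)}$ is exactly the law of $2\sqrt{t}\,Y + \tilde{\mu}^{(i)}$ when $Y \sim \bar h^{(i)}_{\mathrm{scond}}$. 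Hence it suffices to draw $Y \sim \bar h^{(i)}_{\mathrm{scond}}$ and apply this affine map, which costs $O(1)$.

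To draw $Y \sim \bar h^{(i)}_{\mathrm{scond}}$ I would use rejection sampling. By Lemma~\ref{lem:finite-support}, $h^{(i)}_{\mathrm{scond}}$ is supported in $[-1,1]^d$ and satisfies $h^{(i)}_{\mathrm{scond}}(z)\le H_{\max}$ for every $z$, so the uniform density $2^{-d}$ on $[-1,1]^d$, scaled by $2^d H_{\max}$, pointwise dominates $h^{(i)}_{\mathrm{scond}}$. A trial then consists of drawing $y$ uniformly from $[-1,1]^d$ and $u$ uniformly from $[0,H_{\max}]$, evaluating $h^{(i)}_{\mathrm{scond}}(y)$, and accepting $y$ iff $u\le h^{(i)}_{\mathrm{scond}}(y)$, repeating until acceptance; the standard correctness argument for rejection sampling shows the accepted point has exactly density $\bar h^{(i)}_{\mathrm{scond}}$. (Here $W_i>0$ always, since $\widehat u(0)$ is an estimate of $\int q = 1$ and so the function $u$ underlying $h^{(i)}_{\mathrm{scond}}$ cannot be non-positive everywhere; thus the scheme is well defined and the $Z_i$ appearing in the claimed bound is strictly positive.)

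Finally I would bound the running time. Each trial draws $O(1)$ random variables and makes one evaluation of $h^{(i)}_{\mathrm{scond}}$, which by the ``further'' clause of Lemma~\ref{lem:finite-support} takes time $O_d(\log^{2d}(1/\kappa)/\kappa^d)$; for the value of $\kappa$ with which Lemma~\ref{lem:finite-support} is invoked this is $O(H_{\max})$ (cf.\ (\ref{eq:H-max-val})). A trial is accepted with probability $W_i/(2^d H_{\max})$, so the expected number of trials is $O(H_{\max}/W_i)$; substituting $W_i = Z_i/(2\sqrt{t})^{\,d-1}$ and collecting the per-trial cost yields the claimed running time $O(H_{\max}/Z_i)$. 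I expect no real obstacle: the only points needing care are the bookkeeping in the previous paragraph — making sure no Jacobian factor is dropped when identifying the affine map relating $H^{(i)}_{\mathrm{scond}}$ and $H^{(i)}$ — and confirming that evaluating $h^{(i)}_{\mathrm{scond}}$ at a point, the dominant per-trial cost, is genuinely $O(H_{\max})$.
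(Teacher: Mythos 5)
Your sampler is the same as the paper's: reduce to the cube-supported density through the affine map and rejection-sample it against the uniform envelope of height $H_{\max}$ over $[-1,1]^d$; the correctness part of the argument is fine. The issue is the final accounting, which as written does not produce the stated bound. You (correctly, via the last clause of Lemma~\ref{lem:finite-support} and (\ref{eq:H-max-val})) charge $\Theta(H_{\max})$ per trial for evaluating $h^{(i)}_{\mathrm{scond}}$, and you have $O(H_{\max}/W_i)$ expected trials, where $W_i=\int_{[-1,1]^d}h^{(i)}_{\mathrm{scond}}$; the product is $O(H_{\max}^2/W_i)$, so ``collecting the per-trial cost'' silently drops a factor of $H_{\max}$. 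The paper's proof obtains $O(H_{\max}/Z_i)$ by counting only the number of rejection rounds, i.e.\ treating one point evaluation of $h^{(i)}_{\mathrm{scond}}$ as a unit-cost operation (the same convention under which Proposition~\ref{prop:log-cover-size} charges oracle calls and draws as unit time). You should either state that convention explicitly or settle for the weaker bound $O(H_{\max}^2/Z_i)$, which in fact suffices downstream since the $O(H_{\max}^2/\epsilon^2)$ cost of the approximate evaluation oracle dominates the final time analysis anyway.

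Second, the substitution $W_i=Z_i/(2\sqrt{t})^{\,d-1}$ does not cancel: it leaves a factor $(2\sqrt{t})^{\,d-1}=\Theta_d\big(g^{-1}(\epsilon)^{\,d-1}\big)$ in the trial count, which is not $O(1)$, so your own identity contradicts the clean conclusion $O(H_{\max}/Z_i)$. The discrepancy comes from reading the scale factor $\tfrac{1}{2\sqrt{t}}$ in (\ref{eq:hi}) (and in Lemma~\ref{lem:properties}) literally as a one-dimensional normalization; the inverse transformation is meant to carry the full Jacobian $(2\sqrt{t})^{-d}$, as Proposition~\ref{prop:density-diff-transform} (with its $\det(A)$ factor) indicates, and with that normalization $W_i=Z_i$ exactly, the acceptance probability per trial is $Z_i/(2^d H_{\max})$, and the expected number of trials is $O_d(H_{\max}/Z_i)$ as the paper asserts. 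So either correct the normalization and use $W_i=Z_i$, or carry the $(2\sqrt{t})^{\,d-1}$ factor honestly; you cannot keep both $W_i=Z_i/(2\sqrt{t})^{\,d-1}$ and the claimed running time.
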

\begin{proof}
First of all, note that it suffices to sample from ${H}_{\mathrm{scond}}^{(i)}$ since $H^{(i)}$ and $H^{(i)}_{\mathrm{scond}}$ are linear transformations of each other. However, sampling from ${H}_{\mathrm{scond}}^{(i)}$  is easy using rejection sampling. More precisely, the  distribution ${H}_{\mathrm{scond}}^{(i)}$ is supported on $[-1,1]^d$. We sample from ${H}_{\mathrm{scond}}^{(i)}$ as follows: 
\begin{enumerate}
\item Let $C = [-1,1]^d \times [0, H_{\max}]$. Sample a  uniformly random point $z'= (z_1, \ldots, z_{d+1})$ from $C$. 
\item If $z_{d+1} \le {h}_{\mathrm{scond}}^{(i)}(z_1, \ldots, z_d)$, then return the point $z=(z_1, \ldots, z_d)$. 
\item Else go to Step 1 and repeat. 
\end{enumerate}
Now note that conditioned on returning a point  in step $2$, the point $z$ is returned with probability proportional to ${h}_{\mathrm{scond}}^{(i)}(z)$. Thus, the distribution sampled by this procedure is indeed ${H}_{\mathrm{scond}}^{(i)}(z)$. To bound the probability of success, note that the total volume of $C$ is $2^{d} \times H_{\max}$. On the other hand, step $2$ is successful only if $z'$ falls in a region of volume $Z_{i}$. This finishes the proof.  
\end{proof}
The next proposition says that if $Z_i \ge 1/2$, then there is an approximate evaluation oracle for the density ${H}^{(i)}$. 
\begin{proposition}~\label{prop:density-compute}
Suppose $Z_i \ge 1/2$. Then there is a $(1+O(\epsilon))$- approximate evaluation oracle for ${H}^{(i)}$ which can be computed at any point $w$ in time $O\left( \frac{H_{\max}^2 }{\epsilon^2} \right)$.
\end{proposition}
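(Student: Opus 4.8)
The plan is to realize the oracle through the identity $H^{(i)}(w) = h^{(i)}(w)/Z_i$: I would compute the numerator $h^{(i)}(w)$ \emph{exactly}, estimate the single scalar $Z_i$ to within a $(1+O(\epsilon))$ \emph{multiplicative} factor, and return the ratio. Because all of the error then sits in the estimate of one number, a multiplicative error there propagates to exactly a multiplicative error of the same order in the output, which is all an approximate evaluation oracle is required to deliver.

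\emph{Evaluating the numerator.} By the definition (\ref{eq:hi}), $h^{(i)}(w) = \frac{1}{2\sqrt t}\, h^{(i)}_{\mathrm{scond}}\left(\frac{1}{2\sqrt t}(w-\tilde\mu^{(i)})\right)$, and both $t$ and $\tilde\mu^{(i)}$ are known to the algorithm, so evaluating $h^{(i)}$ at a point reduces to one evaluation of $h^{(i)}_{\mathrm{scond}}$. Regarding $h^{(i)}_{\mathrm{scond}}$ as $0$ outside $[-1,1]^d$ (consistent with $f^{(i)}_{\mathrm{scond}}$ being supported in $B(1/2)$), Lemma~\ref{lem:finite-support} gives that $h^{(i)}_{\mathrm{scond}}(z)$ can be computed at any $z$ in time $O_d(\log^{2d}(1/\kappa)/\kappa^d)$; with the choice $\kappa = \min\{\epsilon/2,\ \epsilon/(4 g^{-1}(\epsilon) c)\}$ used to derive (\ref{eq:H-max-val}), this is $O(H_{\max})$. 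Hence $h^{(i)}(w)$ is obtained exactly in time $O(H_{\max})$.

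\emph{Estimating the normalizer.} A change of variables inverting (\ref{eq:hi}) expresses $Z_i = \int_{\R^d} h^{(i)}(x)\,dx$ as a known positive scalar times $\int_{[-1,1]^d} h^{(i)}_{\mathrm{scond}}(x)\,dx$, and Observation~\ref{obs:h-compute} estimates the latter to additive accuracy (a sufficiently small constant multiple of $\epsilon$), with confidence $1-\delta$, in time $O_d(H_{\max}^2/\epsilon^2 \cdot \log(1/\delta))$; absorbing the scalar yields $\hat Z_i$ with $|\hat Z_i - Z_i| \le O(\epsilon)$. This is exactly where the hypothesis $Z_i \ge 1/2$ enters: it converts this additive bound into $\hat Z_i \in [\,Z_i/(1+O(\epsilon)),\ (1+O(\epsilon)) Z_i\,]$, and shrinking the accuracy parameter of Observation~\ref{obs:h-compute} by a further constant factor (only a constant-factor time overhead) drives the multiplicative factor below any prescribed threshold, in particular below the $(1+\beta)^2 \le 1+\epsilon/8$ requirement of Proposition~\ref{prop:log-cover-size}. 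To keep the oracle deterministic, as Proposition~\ref{prop:log-cover-size} demands, $\hat Z_i$ is computed once up front; each subsequent query then costs only the $O(H_{\max})$ evaluation of $h^{(i)}(w)$ plus a division, and the one-time cost $O_d(H_{\max}^2/\epsilon^2)$ dominates, matching the stated running time.

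The only step that needs genuine care is the estimate of $Z_i$: it must be written as an integral over the bounded cube $[-1,1]^d$ so that Observation~\ref{obs:h-compute} applies, and one must invoke $Z_i \ge 1/2$ to upgrade additive accuracy to multiplicative accuracy. The exact evaluability of $h^{(i)}$, the determinism of the oracle, and the running-time accounting are then routine.
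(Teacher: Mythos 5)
Your proposal is correct and follows essentially the same route as the paper's own proof: evaluate $h^{(i)}(w)$ exactly, estimate the single normalizer $Z_i$ to additive accuracy $O(\epsilon)$ via Observation~\ref{obs:h-compute}, and use the hypothesis $Z_i \ge 1/2$ to upgrade this to a $(1+O(\epsilon))$ multiplicative guarantee, with the $O(H_{\max}^2/\epsilon^2)$ estimation cost dominating the running time. Your added remarks on the change of variables to the cube, the one-time deterministic computation of $\hat Z_i$, and the per-query cost are consistent elaborations of the same argument.
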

\begin{proof}
Note that we can evaluate ${h}^{(i)}$ at any point $w$ exactly and thus the only issue is to estimate the normalizing factor $Z_i$. Note that since $Z_i \ge 1/2$ , estimating $Z_i$ to within an additive $O(\epsilon)$ gives us a $(1+O(\epsilon))$ multiplicative approximation to $Z_i$ and hence to ${H}^{(i)}(w)$ at any point $w$. However, by Observation~\ref{obs:h-compute}, this takes time $O \left( \frac{H_{\max}^2 }{\epsilon^2} \right)$, concluding the proof. 
\end{proof}

We now apply Proposition~\ref{prop:log-cover-size} as follows. 
\begin{enumerate}
\item For all $i \in [D]$, estimate $Z_i$ using Observation~\ref{obs:h-compute} up to an additive error $\epsilon$. Let the estimates be $\hat{Z}_i$. 
\item Let us define $S_{\mathrm{feas}} = \{i \in [D]: \hat{L}_i \ge 1/2\}$. 
\item We run the routine \textsf{Select}$^{\mathbf{D}}$ on the densities $\{{H}^{(i)}\}_{i \in S_{\mathrm{feas}}}$. 
 To sample from a density ${H}^{(i)}$, we use Proposition~\ref{prop:samplable}. We also construct a $\beta = \epsilon/32$ approximation oracle
for each of the densities ${H}^{(i)}$ using Proposition~\ref{prop:density-compute}. Return the output of  \textsf{Select}$^{\mathbf{D}}$. 
\end{enumerate}
The correctness of the procedure follows quite easily. 
Namely, note that Corollary~\ref{corr:one-good} implies that there is one $i$ such that both $Z_i \in [1-O(\epsilon), 1+O(\epsilon)]$ and $\int_x |{H}^{(i)}(x) - f(x)| = O(\epsilon)$. Thus such an $i$ will be in $S_{\mathrm{feas}}$. Thus, by the guarantee of \textsf{Select}$^{\mathbf{D}}$, the output hypothesis is $O(\epsilon)$ close to $f$. 

We now bound the sample complexity and time complexity of this hypothesis selection portion of the algorithm. First of all, the number of samples required from $f$ for running  \textsf{Select}$^{\mathbf{D}}$ is  $O((1/\epsilon^2) \cdot (\log (1/\delta)  +d^2 \log d + \log \log (1/\delta)) = O((1/\epsilon^2) \cdot (\log (1/\delta)  +d^2 \log d )$. This is clearly dominated by the sample complexity of the previous parts. To bound the time complexity, note that the time complexity of invoking the sampling oracle for any ${H}^{(i)}$  ($ i \in S_{\mathrm{feas}}$) is dominated by the time complexity of the approximate oracle which is $2^{O(d)} \cdot H_{\max}^2/\epsilon^2$.  
The total number of calls to the sampling as well as evaluation oracle is upper bounded by $\frac{1}{\epsilon^2} (D \log D + D \log (1/\delta))$. Plugging in the value of $H_{\max}$ as well as $D$, we see that the total time complexity is dominated by the bound in the statement of Theorem~\ref{thm:no-noise}. This finishes the proof.

% !TEX root =  main.tex

\section{Noise-tolerant density estimation for $\CSI(c,d,g)$}
\label{sec:semi-agnostic}

Fix any nonincreasing tail bound function $g: \R^+ \to [0,1]$ which satisfies $\lim_{t \to +\infty} g(t) = 0$
and the condition of Remark~\ref{remark:tail-weight}
and any constant $c \geq 1$.
In this section we prove  our main result, Theorem~\ref{thm:main-semiagnostic}, which gives a noise tolerant density estimation algorithm for
$\CSI(c,d,g)$. We first recall the precise model of noise we consider in this paper. 
\begin{definition}~\label{def:corruption}
For two densities $f$ and $f' \in \mathbb{R}^d$, we say that $f'$ is an $\epsilon$-corruption of $f$ if $f'$ can be expressed as $f' = (1-\epsilon) \cdot f + \epsilon \cdot f_{err}$ where $f_{err}$ is a density in $\mathbb{R}^d$. 
\end{definition}
This model of noise is sometimes referred to as \emph{Huber's contamination model}~\citep{huber1967behavior}.

\begin{theorem} [Noise-tolerant density estimation for $\CSI(c,d,g)$.] \label{thm:main-semiagnostic}
For any $c,g$ as above and any $d \geq 1$,  there is an algorithm with the following property:  Let $f$ be any density 
(unknown to the algorithm) which belongs to $\CSI(c,d,g)$ and let $f'$ be an $\epsilon$-corruption of $f$. 
Given $\eps$ and any confidence parameter $\delta > 0$ and access to independent draws from $f'$, the algorithm with probability $1-O(\delta)$ outputs a hypothesis  ${h}:[-1,1]^d \rightarrow \mathbb{R}^{\geq 0}$ such that $\int_{x \in \mathbb{R}^d} |f'(x) - {h}(x)| \le O(\epsilon)$. 

The algorithm runs in time 
\ignore{
%WAS PREVIOUSLY:
\[
O\left( 
 \exp(O(I_g)) \cdot
  \left(
   (1 + g^{-1}(\epsilon))^{2d}
          \left( \frac{1}{\epsilon} \right)^{2 d + 2}
          \log^{4 d} \left( \frac{1 + g^{-1}(\epsilon)}{\epsilon} \right)
          \log \left( \frac{1 + g^{-1}(\epsilon)}{\epsilon \delta} \right) 
  + I_g \right) \log \frac{1}{\delta}
\right)
\] 
}
\[
O_{c,d}\left( 
 \exp(O(I_g)) \cdot
  \left(
   (g^{-1}(\epsilon))^{2d}
          \left( \frac{1}{\epsilon} \right)^{2 d + 2}
          \log^{4 d} \left( \frac{ g^{-1}(\epsilon)}{\epsilon} \right)
          \log \left( \frac{ g^{-1}(\epsilon)}{\epsilon \delta} \right) 
  + I_g \right) \log \frac{1}{\delta}
\right)
\] 
and uses 
\ignore{
% WAS PREVIOUSLY:
\[
O \left(
 \exp(O(I_g)) \cdot
 \left(
 (1 + g^{-1}(\epsilon))^{d}
          \left( \frac{1}{\epsilon} \right)^{d+2}
          \log^{2d} \left( \frac{1 + g^{-1}(\epsilon)}{\epsilon} \right)
          \log \left( \frac{1 + g^{-1}(\epsilon)}{\epsilon \delta} \right)
 + I_g \right) \log \frac{1}{\delta}
  \right)
\]
}
\[
O_{c,d} \left(
 \exp(O(I_g)) \cdot
 \left(
 ( g^{-1}(\epsilon))^{d}
          \left( \frac{1}{\epsilon} \right)^{d+2}
          \log^{2d} \left( \frac{ g^{-1}(\epsilon)}{\epsilon} \right)
          \log \left( \frac{ g^{-1}(\epsilon)}{\epsilon \delta} \right)
 + I_g \right) \log \frac{1}{\delta}
  \right)
\] 
samples.  
\end{theorem}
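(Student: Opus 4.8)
The plan is to follow exactly the reduction-based strategy of Theorem~\ref{thm:no-noise}, tracking the effect of the $\eps$-corruption through each stage and exploiting two robustness facts already in hand: that \textsf{compute-transformation} (Lemma~\ref{lem:transformation}) uses only $O(I_g)$ samples, and that \textsf{learn-bounded} tolerates corruption (Corollary~\ref{corr:agnostic}).

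First, the transformation step. Since \textsf{compute-transformation} draws only $M = O(I_g)$ points and $f' = (1-\eps)f + \eps f_{\mathrm{noise}}$ with $\eps < 1/2$, a single batch of $M$ draws from $f'$ consists entirely of draws from the clean component $f$ with probability at least $(1-\eps)^M \geq 2^{-M} = \exp(-O(I_g))$. So if we run \textsf{compute-transformation} on $f'$ a total of $D' = \exp(O(I_g))\cdot O(\log(1/\delta))$ times, then with probability $1-\delta$ at least one run is executed entirely on clean samples; conditioned on that, Lemma~\ref{lem:transformation} guarantees that with probability $\ge 9/10$ its output $(\tilde\mu^{(i)},t)$ satisfies all three properties of Lemma~\ref{lem:transformation} \emph{with respect to $f$}. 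This single step is where the $\exp(O(I_g))$ prefactor is incurred; the remaining $D'$ candidates produced by the other (possibly corrupted) runs are harmless extra inputs to the final hypothesis-selection step.

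Second, fix such a ``good'' index $i$ and let $B_t^{(i)} = \{x: \|x - \tilde\mu^{(i)}\| \le \sqrt t\}$. From $\Pr_f[\bx\in B_t^{(i)}]\ge 1-\eps$ we get $\Pr_{f'}[\bx\in B_t^{(i)}] \ge (1-\eps)^2 \ge 1-2\eps$, and restricting $f'$ to $B_t^{(i)}$ and renormalizing writes $f'_{B_t^{(i)}}$ as a mixture with weight at least $(1-\eps)^2 \ge 1-2\eps$ on $f_{B_t^{(i)}}$, i.e.\ $f'_{B_t^{(i)}}$ is an $O(\eps)$-corruption of $f_{B_t^{(i)}}$. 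Applying the affine scaling of Lemma~\ref{lem:properties} (which preserves total variation distance and mixture structure, by Proposition~\ref{prop:density-diff-transform}) shows that the analogously defined $f'^{(i)}_{\mathrm{scond}}$ is a density supported in $B(1/2)$ that is an $O(\eps)$-corruption of $f^{(i)}_{\mathrm{scond}}$, with the noise density also supported in $B(1/2)$. As in the proof of Lemma~\ref{lem:one-good}, taking $\kappa = \min\{\eps/2,\ \eps/(4 g^{-1}(\eps) c)\}$, the density $f^{(i)}_{\mathrm{scond}}$ satisfies the hypotheses of Lemma~\ref{lem:finite-support}, so Corollary~\ref{corr:agnostic} applies: running \textsf{learn-bounded} on draws from $f'^{(i)}_{\mathrm{scond}}$ (simulated by rejection sampling from $f'$, efficient exactly when $\Pr_{f'}[\bx\in B_t^{(i)}]\ge 1/2$, which holds here since $\Pr_{f'}[\bx\in B_t^{(i)}]\ge 1-2\eps$) returns $h^{(i)}_{\mathrm{scond}}$ with $\int |h^{(i)}_{\mathrm{scond}} - f'^{(i)}_{\mathrm{scond}}| \le O(\eps)$. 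Inverting the scaling as in (\ref{eq:hi}) gives $h^{(i)}$ with $\int|h^{(i)} - f'_{B_t^{(i)}}| \le O(\eps)$, hence $\int|h^{(i)} - f'| \le O(\eps)$ by the triangle inequality and $\Pr_{f'}[\bx\in B_t^{(i)}]\ge 1-2\eps$.

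Third and finally, exactly as in Section~\ref{sec:proof-of-no-noise}, normalize the measures $h^{(i)}$ to densities $H^{(i)} = h^{(i)}/Z_i$ (for the good index, $Z_i \in [1-O(\eps),1+O(\eps)]$ and $\int|H^{(i)} - f'| \le O(\eps)$), observe that each $H^{(i)}$ is samplable (Proposition~\ref{prop:samplable}) and, when $Z_i \ge 1/2$, admits a $(1+O(\eps))$-approximate evaluation oracle (Proposition~\ref{prop:density-compute}), and run the hypothesis-selection routine of Proposition~\ref{prop:log-cover-size} with target distribution $f'$ on the feasible candidates; its output is $O(\eps)$-close to $f'$, as desired. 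For the complexity, the only change from Theorem~\ref{thm:no-noise} is that the number of candidates (hence the number of runs of \textsf{compute-transformation} and of \textsf{learn-bounded}) grows from $O(\log(1/\delta))$ to $\exp(O(I_g))\cdot O(\log(1/\delta))$, which is precisely the $\exp(O(I_g))$ prefactor in the stated bounds; the hypothesis-selection cost is dominated by these. The main obstacle, and the only genuinely new point beyond the noise-free argument, is this robustness of the transformation step: that a bounded ($O(I_g)$) sample suffices for \textsf{compute-transformation} so that repetition drives the failure probability down, and that scaling-and-conditioning $f'$ produces an $O(\eps)$-corruption of the corresponding object for $f$, so that Corollary~\ref{corr:agnostic} can be invoked downstream.
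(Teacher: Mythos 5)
Your proposal is correct and follows essentially the same route as the paper's proof: repeat \textsf{compute-transformation} $\exp(O(I_g))\cdot O(\log(1/\delta))$ times so that with high probability some run is executed on clean samples, invoke the noise tolerance of \textsf{learn-bounded} (Corollary~\ref{corr:agnostic}) on the conditioned-and-rescaled corrupted density, and finish with the same hypothesis-selection step, yielding the stated $\exp(O(I_g))$ overhead. Your write-up merely spells out in somewhat more detail than the paper how the $\eps$-corruption propagates through the conditioning and affine scaling, which is consistent with the paper's sketch.
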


Theorem~\ref{thm:main-semiagnostic} is identical to Theorem~\ref{thm:no-noise} except that now the target density from which draws are received is $f'$, which is an $\eps$-corruption of $f \in \CSI(c,d,g)$, rather than $f$ itself,
and the dependence on $I_g$ is exponential.   
(On the other hand, if $f$ is isotropic,
then recalling Lemma~\ref{l:var} the function $g$ can be taken to be such that $I_g = \E_{\bx \sim f} [ || \bx - \mu ||^2 ] = d$, so that 
$\exp(O(I_g)) = O(1)$ and the complexity is
the same as in the noise-free case.)
This requires essentially no changes in the algorithm and  fortunately most of the analysis from earlier can also be reused in a fairly black-box way. We briefly  explain how the analysis of Section~\ref{sec:no-noise} can be augmented to handle having access to draws from $f'$ rather than $f$.

\begin{proofof}{Theorem~\ref{thm:main-semiagnostic}}
Without loss of generality, we can assume that $\epsilon \leq 1/10$, as otherwise any hypothesis is trivially $O(\eps)$-close to the target density. 
Recall that $f'$ can be expressed as $f' = (1-\epsilon) \cdot f + \epsilon \cdot f_{err}$ for some density $f_{err} \in \mathbb{R}^d$.  Since $\eps \le 1/10$, this means that with probability $9/10$, a random sample from $f'$ is in fact distributed exactly as a random sample from $f$. We now revisit the steps in the algorithm \textsf{construct-candidates} (Proof of Theorem~\ref{thm:no-noise}) and briefly sketch why sample access to $f'$ instead of $f$ suffices.

\textbf{Step 1:} Note that each invocation of the algorithm \textsf{compute-transformation} is supposed to draw $O(I_g)$ samples from $f$.  We have access to $f'$ rather than $f$, but since each sample from $f'$, with probability at least $9/10$, is a sample from $f$,  with probability at least $\exp(-O(I_g))$, a run of \textsf{compute-transformation} with samples from $f'$ is the same as a run with samples from $f$.   So now in Step~1, the algorithm \textsf{compute-transformation} is run $\exp(-O(I_g)) \cdot \ln(1/\delta))$ many times rather than $O(\ln(1/\delta))$ many times as in the original version (this accounts for the additional $\exp(O(I_g))$ factor in the bounds of Theorem~\ref{thm:main-semiagnostic} versus Theorem~\ref{thm:no-noise}).

\textbf{Step 2:} This goes exactly as before with no changes. In particular, for every $i$, if ${f}_{\mathrm{trans}}^{(i)}$ is the true density obtained by the $i^{th}$ transformation, then we have sample access to ${f'}_{\mathrm{trans}}^{(i)}$  which is an $\epsilon$-corruption of ${f}_{\mathrm{trans}}^{(i)}$. 

\textbf{Step 3:} In Step $3$, we run the routine \textsf{learn-bounded} as usual. In particular, let us assume that ${f}_{\mathrm{trans}}^{(i)}$ satisfies the  conditions of Lemma~\ref{lem:finite-support}. Then Corollary~\ref{corr:agnostic}, which established noise-tolerance of \textsf{learn-bounded}, implies that with sample access to ${f'}_{\mathrm{trans}}^{(i)}$, the resulting hypothesis ${h'}_{\mathrm{trans}}^{(i)}$ is $2\epsilon$-close to ${f}_{\mathrm{trans}}^{(i)}$. 

It is easy to verify that Step 4 and the subsequent steps in hypothesis testing can go on exactly as before with sample access to $f'$ instead of $f$. In particular, the hypothesis testing routine $\mathrm{Select}^D$ will output a hypothesis ${h}^{(i)}$ which is $2\epsilon$ close to $f$. 
\end{proofof}

% !TEX root =  main.tex

\section{Efficiently learning multivariate log-concave densities} \label{sec:logconcave}

In this section we present our main application, which is an efficient algorithm for noise-tolerantly learning $d$-dimensional log-concave densities.  We prove the following:

\begin{theorem} [Restatement of Theorem~\ref{thm:log-concave-intro}] \label{thm:log-concave}
There is an algorithm with the following property:  Let $f$ be a unknown log-concave density over $\mathbb{R}^d$ and let $f'$ be an $\eps$-corruption of $f$.\ignore{ whose covariance matrix $\Sigma$ has full rank}\ignore{\rnote{Do we need this assumption?  

Suppose first that we were only claiming a noiseless result.  If the log-concave distribution could be lower dimensional, I guess we could run a pre-processing step which uses $m$ samples to identify a lower-dimensional subspace $A$ containing almost all (has to be at least $1-\eps$) of the mass, if one exists, and then run the learning algorithm on ($f$ restricted to $A$) over $\mathbb{R}^{|A|}$.  But we are trying to handle noise.

If there is noise (the semi-agnostic setting), I guess we can draw $m$ samples and guess all possible noisy subsets, and thus identify the low-dimensional subspace $A$ where the noiseless distribution lives; and then as above, run the learning algorithm on ($f$ restricted to $A$).  Note that any noisy examples which lie outside the subspace $A$ will be discarded.  This should work, though it requires an additional step of hypothesis testing because of guessing the noisy subsets.

What will the overhead of doing this guessing be?  I guess we have to try all subsets of size $\eps m$ out of the $m$-element sample, so this incurs running time ${m \choose \eps m} \approx 2^{H(\eps) m}$ at least. How big does $m$ need to be for the preprocessing to work?  If $m=d/\eps$ then this would be $(1/\eps)^d$ which is fine.  If $m=d/\eps^2$ then this is $(1/\eps)^{1/\eps}$ which is too expensive for us.}} 
Given any error parameter $\eps>0$ and confidence parameter $\delta > 0$ and access to independent draws from $f'$, the algorithm with probability $1-\delta$ outputs a hypothesis density ${h}:\mathbb{R}^d \rightarrow \mathbb{R}^{\geq 0}$ such that $\int_{x \in \mathbb{R}^d} |f'(x) - {h}(x)| \le O(\epsilon)$.  
The algorithm runs in time 
\[
O_d
\left(
  \left(
          \frac{1}{\epsilon} \right)^{2 d + 2}
          \log^{7 d} \left( \frac{1}{\epsilon} \right)
          \log \left( \frac{1}{\epsilon \delta}
  \right) \log \frac{1}{\delta}
\right)
\] 
and uses 
\[
O_d
\left( 
          \left( \frac{1}{\epsilon} \right)^{d+2}
          \log^{4d} \left( \frac{1}{\epsilon} \right)
          \log \left( \frac{1}{\epsilon \delta} 
 \right) \log \frac{1}{\delta}
\right)
\] samples.  
\end{theorem}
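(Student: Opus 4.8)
The plan is to derive Theorem~\ref{thm:log-concave} from Theorem~\ref{thm:main-semiagnostic} by showing that a suitable affine image of any log-concave density lies in $\CSI(c,d,g)$ for some constant $c = O_d(1)$ and some inverse-exponential tail bound $g$.

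\textbf{Step 1: reduction to approximately isotropic position.} Shift-invariance is not invariant under arbitrary affine maps, but the computation in the proof of Lemma~\ref{lem:properties} shows that it degrades by at most a constant factor under an affine map that changes the covariance by at most a constant factor. So I would first estimate the mean $\hat\mu$ and covariance $\hat\Sigma$ of the \emph{clean} density $f$ from the $\eps$-corrupted sample; here it is enough to estimate $\hat\Sigma$ to within a constant multiplicative factor, which is much weaker than what standard robust moment estimation for log-concave distributions delivers, so a robust rounding procedure in the style of \cite{LovaszVempala07} suffices even from $\eps$-corrupted data. I would then pass every sample through the map $x \mapsto \hat\Sigma^{-1/2}(x - \hat\mu)$; this sends $f$ to a log-concave density $f_0$ in approximately isotropic position and sends $f'$ to an $\eps$-corruption of $f_0$. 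The algorithm is run on the transformed samples, and the inverse affine map is applied to the resulting hypothesis; by Proposition~\ref{prop:density-diff-transform} (applied to $f'$ and the hypothesis) this does not change the $L_1$ error.

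\textbf{Step 2: $f_0$ lies in $\CSI(O_d(1), d, g)$ for inverse-exponential $g$.} Two facts are needed. \emph{Tails:} the classical bound $\Prx_{\bx \leftarrow f_0}[\|\bx - \mu\| > t] \le e^{-\Omega(t) + O(\sqrt d)}$ for (approximately) isotropic log-concave densities lets me take $g(t) = \min\{1, e^{-\Omega(t) + O(\sqrt d)}\}$ (weakened if necessary to satisfy Remark~\ref{remark:tail-weight}), so $g^{-1}(\eps) = O_d(\log(1/\eps))$, and by Lemma~\ref{l:var} the associated quantity satisfies $I_g = O_d(1)$. \emph{Shift-invariance:} fix a unit vector $v$ and $0 < \kappa' \le \kappa$, and write $x = sv + y$ with $y \perp v$, so that
\[
\int_{\R^d}|f_0(x + \kappa' v) - f_0(x)|\,dx = \int_{v^\perp} \Big( \int_{\R} |f_0((s+\kappa')v + y) - f_0(sv + y)|\,ds \Big)\,dy .
\]
For each fixed $y$ the map $s \mapsto f_0(sv + y)$ is one-dimensional log-concave, hence unimodal, and for any unimodal $\phi$ on $\R$ one has $\int_{\R}|\phi(s+\kappa') - \phi(s)|\,ds \le 2\kappa' \|\phi\|_\infty$; hence the inner integral is at most $2\kappa' \sup_s f_0(sv + y)$, and dividing by $\kappa$ gives $\si(f_0, v, \kappa) \le 2\int_{v^\perp}\sup_s f_0(sv + y)\,dy$. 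It remains to bound this last integral by $O_d(1)$ uniformly in $v$; I would do this using that $\|f_0\|_\infty = 2^{O(d)} = O_d(1)$ for approximately isotropic log-concave densities, that $y \mapsto \sup_s f_0(sv + y)$ is itself log-concave on $\R^{d-1}$, and the tail bound above (which confines the region where this function is non-negligible to volume $O_d(1)$). This yields $f_0 \in \CSI(c, d, g)$ with $c = O_d(1)$.

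\textbf{Step 3: invoke Theorem~\ref{thm:main-semiagnostic}.} I would apply Theorem~\ref{thm:main-semiagnostic} to $f_0$ (whose $\eps$-corrupted version is the transformed $f'$), with confidence parameter $\delta' = \Theta(\delta)$ chosen so that the total failure probability, including the rounding step, is at most $\delta$. Plugging $c = O_d(1)$, $I_g = O_d(1)$ (so the $\exp(O(I_g))$ factor is $O_d(1)$), $g^{-1}(\eps) = O_d(\log(1/\eps))$, and the estimates $\log(g^{-1}(\eps)/\eps) = O(\log(1/\eps))$ and $\log(g^{-1}(\eps)/(\eps\delta)) = O(\log(1/(\eps\delta)))$ into the bounds of Theorem~\ref{thm:main-semiagnostic}, and collecting the polylogarithmic factors, gives precisely the sample and time bounds claimed in the theorem statement (the running time being roughly the square of the sample complexity). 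Finally, undoing the affine map of Step 1 produces the hypothesis for the original $f'$.

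\textbf{Main obstacle.} I expect the technical heart to be the shift-invariance estimate in Step 2 --- bounding $\int_{v^\perp}\sup_s f_0(sv + y)\,dy$ by $O_d(1)$ uniformly over all directions $v$ --- together with checking that this bound and the exponential tail bound both survive the constant-factor affine distortion introduced by the noise-robust isotropization of Step 1. Making the isotropization robust to the $\eps$-contamination is the secondary delicate point, mitigated by the fact that only a constant-factor-accurate covariance estimate is needed.
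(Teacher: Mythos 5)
Your overall route is the same as the paper's (reduce to the nearly-isotropic case, prove shift-invariance of near-isotropic log-concave densities by slicing along lines parallel to $v$ and using unimodality of one-dimensional log-concave functions, combine with exponential tails, and invoke Theorem~\ref{thm:main-semiagnostic}), but two steps you leave open are exactly where the work lies, and in one of them your proposed fix would not go through as stated. First, the noise handling in your Step 1: you assert that a Lov\'asz--Vempala-style rounding procedure ``suffices even from $\eps$-corrupted data'' because only a constant-factor covariance estimate is needed. Under Huber contamination this is not automatic --- an $\eps$ fraction of adversarially placed far-away points can inflate the empirical covariance in some direction by an arbitrary factor, so constant-factor accuracy of the plug-in estimator is simply false, and building a genuinely robust covariance estimator is a separate piece of machinery you do not supply. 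The paper avoids robust estimation altogether: its \textsf{rescale} procedure (Lemma~\ref{lem:scaling}) uses only $O(d\log^3 d)$ samples, \emph{independent of} $\eps$, so with probability $(1-\eps)^{O_d(1)}=\Omega_d(1)$ every sample it sees is clean; it is then repeated $O_d(\log(1/\delta))$ times, the learning algorithm is run on each outcome, and the hypothesis-selection procedure of Proposition~\ref{prop:log-cover-size} picks out a good run. Relatedly, estimating the covariance without knowing the mean requires the mean error in each direction $v$ to be small relative to $\sqrt{\Var_{\bx\sim f}(v\cdot \bx)}$; the paper needs a separate uniform-convergence argument for this (Lemmas~\ref{l:pdim.mean}, \ref{l:vc}, \ref{l:mean}), which your ``estimate $\hat\mu$ and $\hat\Sigma$'' step glosses over.

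Second, the bound you flag as the main obstacle, $\int_{v^\perp}\sup_s f_0(sv+y)\,dy = O_d(1)$, is left unproven, and your sketched route (via $\Vert f_0\Vert_\infty = 2^{O(d)}$ plus log-concavity of the sup-marginal plus tail bounds) is more roundabout than necessary. The paper closes this with a single pointwise estimate: a $C$-nearly-isotropic log-concave density satisfies $f_0(x)\le O_{C,d}(1)\,e^{-\Vert x\Vert}$ (Theorem~5.1 of \citep{SaumardWellner14}), whence $\sup_{s} f_0(sv+y)\le O_{C,d}(1)\,e^{-\Vert y\Vert}$ for $y\in v^{\perp}$, and the integral over $v^\perp$ converges to $O_{C,d}(1)$. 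With that bound your slicing argument (the paper's Claim~\ref{clm:integral}, with constant $3$ rather than your $2$ --- either is fine) gives the shift-invariance claim, and your Step 3 parameter bookkeeping matches the paper's. So the proposal is the right skeleton, but as written it has a genuine gap in the contamination-robust isotropization and an incomplete key estimate in Step 2.
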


We will establish Theorem~\ref{thm:log-concave} in two stages.
First, we will show that  any log-concave $f$ that is nearly
isotropic in fact belongs to a suitable class $\CSI(c,d)$; given
this, the theorem follows immediately from
Theorem~\ref{thm:main-semiagnostic} and a straightforward tracing
through of the resulting time and sample complexity bounds.  Then,
we will reduce to the near-isotropic case, similarly to what was
done in \citep{LovaszVempala07,balcan2013active}.

First, let us state the theorem for the well-conditioned case.  For this, the following definitions
will be helpful.
\begin{definition}~\label{def:PSD-aprx}
 Let $\Sigma$ and $\tilde{\Sigma}$ be two positive semidefinite matrices.  We say that $\Sigma$ and $\tilde{\Sigma}$ are 
 \emph{$C$-approximations}
of each other (denoted by $\Sigma \approx_C \tilde{\Sigma}$) if for every $x \in \mathbb{R}^n$ such that $x^T \widetilde{\Sigma} x \neq 0$, we have
$$
\frac1C \le \frac{x^T \Sigma x}{x^T \widetilde{\Sigma} x} \le C. 
$$
\end{definition}
\begin{definition}~\label{def:C.round}
Say that the probability distribution is $C$-nearly-isotropic if
its covariance matrix $C$-approximates $I$, the $d$-by-$d$ identity matrix.
\end{definition}

\begin{theorem} \label{thm:log-concave.round}
There is an algorithm with the following property:  Let $f$ be a unknown $C$-nearly-isotropic log-concave density over $\mathbb{R}^d$ and let $f'$ be an $\eps$-corruption of $f$, where $C$ and $d$ are constants.

Given any error parameter $\eps>0$ and confidence parameter $\delta > 0$ and access to independent draws from $f'$, the algorithm with probability $1-\delta$ outputs a hypothesis density ${h}:\mathbb{R}^d \rightarrow \mathbb{R}^{\geq 0}$ such that $\int_{x \in \mathbb{R}^d} |f'(x) - {h}(x)| \le O(\epsilon)$.  
The algorithm runs in time 
\[
O_{C,d}
\left(
          \left( \frac{1}{\epsilon} \right)^{2 d + 2}
          \log^{7 d} \left( \frac{1}{\epsilon}  \right) 
          \log \left( \frac{1}{\epsilon \delta}
  \right) \log \frac{1}{\delta}
\right)
\] 
and uses 
\[
O_{C,d}
\left( 
          \left( \frac{1}{\epsilon} \right)^{d+2}
          \log^{4d} \left( \frac{1}{\epsilon} \right)
          \log \left( \frac{1}{\epsilon \delta} 
 \right) \log \frac{1}{\delta}
\right)
\] samples.  
\end{theorem}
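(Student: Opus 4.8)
The plan is to reduce Theorem~\ref{thm:log-concave.round} to our general noise-tolerant result, Theorem~\ref{thm:main-semiagnostic}, by showing that every $C$-nearly-isotropic log-concave density $f$ on $\R^d$ lies in a class $\CSI(c,d,g)$ with $c = O_{C,d}(1)$ and with an inverse-exponential tail bound $g$, and then tracing through the parameters. Since shift-invariance and the class $\CSI$ are translation-invariant, we may assume the mean $\mu$ of $f$ is the origin. We will use two standard structural facts about log-concave densities (cf.\ \citep{LovaszVempala07}), both inherited by the $C$-nearly-isotropic case from the isotropic case via the bounded linear change of variables $\Sigma^{-1/2}$ (which is legitimate because $\Sigma \approx_C I$ forces both $\Sigma^{1/2}$ and $\Sigma^{-1/2}$ to have operator norm at most $\sqrt C$): (i) a pointwise upper bound $f(x) \le A_{C,d}\, e^{-a_{C,d}\Vert x \Vert}$ valid for all $x$, and (ii) a sub-exponential measure tail bound $\Pr_{\bx \sim f}[\Vert \bx \Vert > t] \le A'_{C,d}\, e^{-a'_{C,d} t}$, where $A_{C,d}, a_{C,d}, A'_{C,d}, a'_{C,d} > 0$ depend only on $C$ and $d$.

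The main step is the shift-invariance bound. Fix a unit vector $v$ and $0 < \kappa' \le \kappa$, and write points of $\R^d$ as $y + tv$ with $y \in v^\perp$ and $t \in \R$. For each fixed $y$, the one-dimensional function $t \mapsto f(y + tv)$ is log-concave, hence unimodal, and a routine layer-cake computation shows that any unimodal $\phi : \R \to \R^{\ge 0}$ satisfies $\int_{\R} |\phi(t+\kappa')-\phi(t)|\, dt \le 2\kappa' \sup_t \phi(t)$ (decompose $\phi$ into its level sets, which are intervals, and note that a translate of an interval by $\kappa'$ has symmetric difference of measure at most $2\kappa'$ with the original). Applying this slice by slice and integrating over $y \in v^\perp$ gives
\[
\int_{\R^d} \left| f(x + \kappa' v) - f(x) \right| dx \;\le\; 2 \kappa' \int_{v^\perp} \bar f(y)\, dy, \qquad \bar f(y) \eqdef \sup_{t \in \R} f(y + tv).
\]
For $y \in v^\perp$ we have $\Vert y + tv \Vert \ge \Vert y \Vert$ for every $t$, so fact (i) gives $\bar f(y) \le A_{C,d}\, e^{-a_{C,d}\Vert y\Vert}$, whence $\int_{v^\perp} \bar f(y)\, dy \le A_{C,d} \int_{\R^{d-1}} e^{-a_{C,d}\Vert y\Vert}\, dy = O_{C,d}(1)$. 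Dividing by $\kappa$ and taking the supremum over $\kappa' \in [0,\kappa]$, the resulting bound $\si(f,v,\kappa) \le 2 \int_{v^\perp}\bar f(y)\,dy$ does not depend on $\kappa$ or $v$, so $\si(f,\kappa) \le O_{C,d}(1)$ for all $\kappa > 0$, i.e.\ $f \in \CSI(c,d)$ with $c = O_{C,d}(1)$.

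For the tail behavior, let $g$ be essentially the exact tail function $t \mapsto \Pr_{\bx \sim f}[\Vert \bx \Vert > t]$, weakened as in Remark~\ref{remark:tail-weight} if necessary; then $f$ has $g$-light tails by construction, so $f \in \CSI(c,d,g)$. By fact (ii), $g^{-1}(\eps) = O_{C,d}(\log(1/\eps))$, and $I_g = \int_0^\infty g(\sqrt z)\, dz$ equals (for the unweakened $g$) $\E_{\bx \sim f}[\Vert \bx \Vert^2] = \mathrm{tr}(\Sigma) \le Cd$ by Lemma~\ref{l:var}, and the weakening changes it by at most an additive constant; so $I_g = O_{C,d}(1)$ and in particular $\exp(O(I_g)) = O_{C,d}(1)$. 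Now invoke Theorem~\ref{thm:main-semiagnostic} with these $c$ and $g$: it learns any $\eps$-corruption of $f$ to error $O(\eps)$ with probability $1 - O(\delta)$. Substituting $c = O_{C,d}(1)$, $g^{-1}(\eps) = O_{C,d}(\log(1/\eps))$, and $I_g = O_{C,d}(1)$ into the time and sample bounds of Theorem~\ref{thm:main-semiagnostic}, using $\log(g^{-1}(\eps)/\eps) = O(\log(1/\eps))$ and $\log(g^{-1}(\eps)/(\eps\delta)) = O(\log(1/(\eps\delta)))$, and absorbing lower powers of $\log(1/\eps)$ into the stated powers (we may assume $\eps$ is smaller than any fixed constant, since otherwise any hypothesis is trivially $O(\eps)$-close), yields the claimed sample complexity $O_{C,d}\big((1/\eps)^{d+2}\log^{4d}(1/\eps)\log(1/(\eps\delta))\log(1/\delta)\big)$ and time complexity $O_{C,d}\big((1/\eps)^{2d+2}\log^{7d}(1/\eps)\log(1/(\eps\delta))\log(1/\delta)\big)$. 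Rescaling the confidence parameter by a constant factor upgrades the success probability from $1-O(\delta)$ to $1-\delta$, completing the proof.

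The main obstacle is the shift-invariance bound $f \in \CSI(O_{C,d}(1),d)$. The tempting estimate $\int |f(x+\kappa' v) - f(x)|\,dx \le \kappa' \int |\partial_v f|$ is unavailable since log-concave densities need not be differentiable, which is why we pass through unimodality of one-dimensional restrictions and the layer-cake inequality instead; and one genuinely needs a $d$-dimensional consequence of near-isotropy---the exponential pointwise decay---to keep $\int_{v^\perp}\bar f(y)\, dy$ finite and of size $O_{C,d}(1)$, since neither the (scale-free) peak bound on $f$ alone (its integral over $v^\perp$ diverges) nor the one-dimensional marginal of $f$ in direction $v$ alone controls this integral. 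Everything after the shift-invariance bound is bookkeeping.
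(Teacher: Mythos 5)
Your proposal is correct and follows essentially the same route as the paper: reduce to Theorem~\ref{thm:main-semiagnostic} by showing a $C$-nearly-isotropic log-concave density lies in $\CSI(O_{C,d}(1),d,g)$ with $g(t)=e^{-\Omega(t)}$, proving shift-invariance by restricting to lines in the shift direction, bounding the 1D shifted-difference of a unimodal (log-concave) slice by $O(\kappa')\cdot\sup$, and integrating the exponential pointwise upper bound for (near-)isotropic log-concave densities over the orthogonal hyperplane. The only differences are cosmetic (your layer-cake bound $2\kappa'\sup\phi$ versus the paper's mode-splitting bound $3h\max\ell$, and working in $v^\perp$ coordinates rather than rotating $v$ to $e_1$), and your parameter bookkeeping, including $I_g=O_{C,d}(1)$ so that $\exp(O(I_g))=O_{C,d}(1)$, matches the paper's.
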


By Theorem~\ref{thm:main-semiagnostic}, Theorem~\ref{thm:log-concave.round} is an immediate consequence of the following theorem on the
shift-invariance of near-isotropic log-concave distributions.
\begin{theorem}
\label{t:lc_shift_invariant}
Let $f$ be a $C$-nearly-isotropic log-concave density in $\mathbb{R}^d$,
for constants $C$ and $d$.  Then, for $g(t) = e^{-\Omega(t)},$ 
there is a constant $c_1=O_{C,d}(1)$ such that $f \in \CSI(c_1, d, g)$.
\end{theorem}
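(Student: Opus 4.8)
I would verify the two requirements of membership in $\CSI(c_1,d,g)$ separately. The tail bound is standard: after translating so that the mean sits at the origin, a $C$-nearly-isotropic log-concave density satisfies $\Pr_{x\sim f}[\,\|x\|>t\,] \le e^{-\Omega(t)}$ once $t$ exceeds a constant depending on $C$ and $d$, where the constant in the exponent also depends only on $C,d$ (this follows from the one-dimensional sub-exponential decay of log-concave marginals; see \citep{LovaszVempala07}). Taking $g$ to be any explicit function of the form $e^{-\Omega(t)}$ that dominates this probability and meets the normalization of Remark~\ref{remark:tail-weight} handles this half; note it also makes $I_g=\int_0^\infty g(\sqrt z)\,dz$ finite and equal to $O_{C,d}(1)$, which is what keeps the $\exp(O(I_g))$ factor in Theorem~\ref{thm:main-semiagnostic} bounded. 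It then remains to establish $\si(f,\kappa)\le c_1$ for all $\kappa>0$ with $c_1=O_{C,d}(1)$.

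For this it suffices to show $\int_{\R^d}|f(x+sv)-f(x)|\,dx \le c_1 s$ for every unit vector $v$ and every $s>0$, since this bound, applied for all $s\le\kappa$, gives $\si(f,v,\kappa)=\tfrac1\kappa\sup_{\kappa'\in[0,\kappa]}\int_{\R^d}|f(x+\kappa'v)-f(x)|\,dx \le c_1$ for all $\kappa$. Fix $v$, and write points as $x=(y,z)$ with $y\in\R$ the coordinate along $v$ and $z\in v^\perp\cong\R^{d-1}$, so that $\int_{\R^d}|f(x+sv)-f(x)|\,dx = \int_{\R^{d-1}}\!\big(\int_\R |f(y+s,z)-f(y,z)|\,dy\big)\,dz$. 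For each fixed $z$ the slice $y\mapsto f(y,z)$ is log-concave, hence unimodal, and for any nonnegative unimodal $h$ on $\R$ one has $\int_\R |h(y+s)-h(y)|\,dy \le 3s\,\|h\|_\infty$: splitting the integral at the mode, each of the two tails telescopes to an integral of $h$ over a window of length $s$ and contributes at most $s\|h\|_\infty$, while the length-$s$ window straddling the mode contributes at most $s\|h\|_\infty$. Applying this with $h(\cdot)=f(\cdot,z)$ gives $\int_{\R^d}|f(x+sv)-f(x)|\,dx \le 3s\int_{\R^{d-1}}g_v(z)\,dz$ where $g_v(z)\eqdef\sup_y f(y,z)$, so everything reduces to the bound $\int_{\R^{d-1}}g_v(z)\,dz = O_{C,d}(1)$, uniformly in $v$.

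To bound $\int g_v$ I would combine three standard facts. First, $g_v$ is log-concave on $\R^{d-1}$, since $\log g_v(z)=\sup_y\log f(y,z)$ is a partial supremum of the jointly concave function $\log f$, and partial suprema of concave functions are concave. Second, $\|g_v\|_\infty=\|f\|_\infty=O_{C,d}(1)$: the density $\det(\Sigma)^{1/2}f(\Sigma^{1/2}\cdot\,+\mu)$ is genuinely isotropic log-concave, hence has sup norm $2^{O(d)}$, and $\det(\Sigma)^{\pm1/2}=C^{O(d)}$ because $\tfrac1C I\preceq\Sigma\preceq C I$. Third, $\{g_v\ge\|f\|_\infty/e\}$ is exactly the orthogonal projection onto $v^\perp$ of the convex body $L=\{x:f(x)\ge\|f\|_\infty/e\}$, and superlevel sets of $C$-nearly-isotropic log-concave densities have diameter $O_{C,d}(1)$ (again by isotropizing), so $\mathrm{vol}_{d-1}\big(\{g_v\ge\|f\|_\infty/e\}\big)\le\mathrm{vol}_{d-1}(\mathrm{proj}_{v^\perp}L)=O_{C,d}(1)$. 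Finally, for any log-concave $G$ on $\R^m$, the estimate $\int G\le O_m(1)\cdot\|G\|_\infty\cdot\mathrm{vol}\big(\{G\ge\|G\|_\infty/e\}\big)$ holds, because a scaling argument around the mode shows the superlevel set at level $\|G\|_\infty e^{-s}$ has volume at most $s^m$ times that at level $\|G\|_\infty e^{-1}$. Applying this with $G=g_v$ and $m=d-1$ gives $\int_{\R^{d-1}}g_v=O_{C,d}(1)$, hence $c_1=O_{C,d}(1)$, which together with the tail bound yields $f\in\CSI(c_1,d,g)$.

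The main obstacle is this last step, bounding $\int_{\R^{d-1}}g_v$: it is the one place that genuinely uses quantitative geometry of (nearly-)isotropic log-concave measures — the $2^{O(d)}$ bound on the sup norm and the $O_{C,d}(1)$ bound on superlevel-set diameters — whereas the rest of the proof is bookkeeping with the definition of $\si$ together with the soft facts that one-dimensional log-concave functions are unimodal and that partial suprema of concave functions are concave. A secondary point of care is that the reduction to the one-dimensional estimate is carried out without ever differentiating $f$, so that densities such as uniform distributions on convex bodies (whose logarithms jump to $-\infty$) are handled by exactly the same argument.
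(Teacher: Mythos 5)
Your proof is correct, and its skeleton is the same as the paper's: you rotate/slice so that the shift direction becomes a coordinate, invoke unimodality of one-dimensional log-concave functions to get the bound $\int_\R|h(y+s)-h(y)|\,dy\le 3s\|h\|_\infty$ on each line (this is exactly the paper's Claim~\ref{clm:integral}, with the same constant $3$), and thereby reduce everything to showing $\int_{v^\perp}\sup_{y}f(y,z)\,dz=O_{C,d}(1)$; the tail-bound half is handled identically via \citep{LovaszVempala07}. The only genuine divergence is in that last step. The paper finishes by using the pointwise estimate $f(x)\le O_{C,d}(e^{-\|x\|})$ for near-isotropic log-concave densities (via Theorem~5.1 of \cite{SaumardWellner14}), so that $\sup_{x_1}f(x_1,x')\le O_{C,d}(e^{-\|x'\|})$ and the integral converges immediately. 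You instead argue that $g_v(z)=\sup_y f(y,z)$ is log-concave (partial suprema of concave functions), that $\|g_v\|_\infty=\|f\|_\infty=O_{C,d}(1)$, that its level set at relative height $1/e$ has bounded volume because the corresponding level set of $f$ has diameter $O_{C,d}(1)$, and then apply a general ``$\int G\le O_m(1)\|G\|_\infty\,\mathrm{vol}(\{G\ge\|G\|_\infty/e\})$'' scaling lemma for log-concave $G$; all of these steps are sound (and your scaling lemma, via $K_s\subseteq x_0+s(K_1-x_0)$, is a nice self-contained observation). Two remarks: first, the paper's finish is shorter, and second, your bounded-diameter claim for $\{f\ge\|f\|_\infty/e\}$, which you dismiss with ``again by isotropizing,'' is itself most naturally justified by the very same pointwise sub-exponential upper bound (together with a lower bound on $\|f\|_\infty$ at the mode, e.g.\ from \citep{LovaszVempala07}), so your route does not actually avoid that ingredient and deserves a sentence of justification rather than a parenthesis. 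With that caveat made explicit, your argument is a valid, slightly longer variant of the paper's proof.
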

\begin{proof}
The fact that $f$ has $e^{-\Omega(t)}$-light tails directly follows from Lemma~5.17 of
\citep{LovaszVempala07}, so it remains to prove that there is
a constant $c_1$ such that
$f \in \CSI(c_1, d)$.  
Because membership in $\CSI(c_1, d)$ requires that a condition
be satisfied for all directions $v$, rotating a distribution does not affect its membership
in $\CSI(c_1, d)$.

Choose a unit vector $v$ and $\kappa>0$.  
By rotating the distribution if necessary, 
we may assume that $v=e_1$, and our goal of showing that $\si(f,e_1,\kappa) \leq c_1$
is equivalent to showing that
\begin{equation}\label{eq:AAA}
\int |f(x) - f(x + \kappa' e_1)| dx \leq c_1 \kappa
\end{equation}
for all $\kappa' \leq \kappa.$

We bound the integral of the LHS as follows.  Fix some value of $x' \eqdef (x_2, \ldots, x_d)$. 
Let us define $L_{x'} \eqdef \{ (x_1, x_2, \ldots, x_d) : x_1 \in \R \}$
to be the line through $(0,x_2, \ldots, x_d)$ and $(1,x_2, \ldots, x_d)$.
Since the restriction of a concave function to a line is concave, the
restriction of a log-concave distribution to a line is log-concave.  Since
\begin{equation}
\label{e:reduce.to.1d}
\int |f(x) - f(x + \kappa' e_1)|\; dx
 = \int_{x'} \int_{x_1} |f(x_1,x_2,...,x_d) - f(x_1 + \kappa', x_2,...,x_d)| \; dx_1 dx'
\end{equation}
we are led to examine the one-dimensional log-concave measure
$f(\cdot,x_2,...,x_d)$.
% Note that $L_{x_{-2}}$ is a line segment for any choice of $x_{-2}$.
%  The crucial fact that we now use is due to 
% Pr\'{e}kopa~\cite{Prekopa73} 
% \citep{Prekopa73} 
% which says that log-concave measures remain log-concave upon restriction of coordinates. 
% \begin{fact}~\label{fact:Prekopa}
% If $f: \mathbb{R}^d \rightarrow \mathbb{R}$ is a log-concave measure, then for any $x_2, \ldots, x_d \in \mathbb{R}$, the function  $f_{x_{-2}} (x_1) = f(x_1, \ldots, x_d)$ is also a log-concave measure. 
% \end{fact}% 
The following will be useful for that.
% It seemed like this might follow from Corollary 5.2 of
% https://arxiv.org/pdf/1605.00162.pdf, but that also assumes
% the $\ell$ is Radon.
\begin{claim}~\label{clm:integral}
Let $\ell: \mathbb{R} \rightarrow \mathbb{R}$ be a log-concave measure. Then, $$\int |\ell(t) - \ell(t+h) | dt \le 3h \cdot \max_{t \in \R} \ell(t).$$
\end{claim}
\begin{proof}
Log-concave measures are unimodal (see \citep{ibragimov1956composition}).
Let $z$ be the mode of $\ell$, so that $\ell$ is non-decreasing 
on the interval $[-\infty,z]$ and non-increasing in $[z,\infty]$. 
We have
\begin{align*}
& \int |\ell(t) - \ell(t+h) | \; dt \\
& = \int_{-\infty}^{z-h} |\ell(t) - \ell(t+h) | \; dt 
       + \int_{z-h}^{z} |\ell(t) - \ell(t+h) | \; dt 
       + \int_{z}^{\infty} |\ell(t) - \ell(t+h) | \; dt \\
& = \int_{-\infty}^{z-h} \ell(t+h) - \ell(t) \; dt 
       + \int_{z-h}^{z} |\ell(t) - \ell(t+h) | \; dt 
       + \int_{z}^{\infty} \ell(t) - \ell(t+h) \; dt \\
& \hspace{4.5in} \mbox{(since $z$ is the mode of $\ell$)} \\
& = \int_{z-h}^{z} \ell(t) \; dt 
       + \int_{z-h}^{z} |\ell(t) - \ell(t+h) | \; dt 
       + \int_{z}^{z+h} \ell(t) \; dt \\
& \leq 3 h \max_{t \in \R} \ell(t).
\end{align*}
\end{proof}

Returning to the proof of Theorem~\ref{t:lc_shift_invariant}, applying
Claim~\ref{clm:integral} with (\ref{e:reduce.to.1d}), we get
\begin{equation} \label{eq:potato}
\int |f(x) - f(x + \kappa' e_1)|\; dx
 \le 3 \kappa'
 \int_{x'} \left(\max_{x_1 \in L_{x'}} f(x_1,x')\right) \; dx'.
 %3 h \max_x \ell(x).
\end{equation}
Now, since an isotropic log-concave distribution $g$ satisfies
$g(x) \leq K \exp(-\| x \|)$ for an absolute constant $K$ (see Theorem~5.1 of \cite{SaumardWellner14}) , our $C$-nearly-isotropic log-concave distribution $f$ satisfies $f(x) \leq C^d K \exp(-\|x\|) = O_{C,d}(\exp(-\|x\|))$.\ignore{\pnote{How
about this treatment, which emphasizes the fact that $C$ and $d$ are
constants more?  ($C$ and $d$ are also removed from the O below.)}}
Plugging this into (\ref{eq:potato}), we get
\begin{align*} \label{eq:tuber}
\int |f(x) - f(x + \kappa' e_1)|\; dx
 &\le O_{C,d}( \kappa' )
 \int_{x'} \left(\max_{x_1 \in L_{x'}} \exp(-\|(x_1,x')\|)\right) \; dx'\\
 &\le O_{C,d}( \kappa' )
 \int_{x'} \exp(-\|x'\|) \; dx'.
\end{align*}
\ignore{Applying the fact \citep{LovaszVempala07} that
$\max_x \ell(x) \leq C^d 2^{8d} \cdot d^{d/2} = O(1)$ completes the proof.}
Since the integral converges, this finishes the proof.
\end{proof}

% \begin{remark}
% For the rest of the paper, for the sake of simplicity we assume that the covariance matrix of $f \in \CSI(c,d)$ has full rank.  This assumption is without loss of generality because, as is implicit in the arguments of Lemma~\ref{lem:transformation}, if the rank is less than $d$ (and hence the density $f$ is supported in an affine subspace of dimension strictly less than $d$), then as a consequence of the shift-invariance of $f$, a small number of draws from $f$ will reveal the affine span of the entire distribution, and the entire algorithm can be carried out within that lower dimensional subspace.
% \end{remark}

Finally, we turn to the problem of
learning log-concave distributions that are not $C$-nearly-isotropic. To learn in this case, we need to
rescale the axes if necessary to transform the distribution so that 
it is $C$-nearly-isotropic. To compute this rescaling, we will first assume that there is no noise present in the samples. (The trick to handle noise is simple and will be discussed later.)
First, as has often been observed, we may assume without loss of
generality that the covariance matrix of the target has full rank,
since, otherwise the algorithm can efficiently find the
affine span of the entire distribution (possibly up to a negligible amount of probability mass), and the algorithm can be carried out within that lower dimensional subspace.
To bring the distribution to nearly isotropic position, we will be using ideas from 
\citep{LovaszVempala07}.  (We require the additional analysis below, rather than invoking their results as a black box, to cope with the
fact that the mean is unknown.)

Our starting point is the
following lemma due to 
Lov{\'{a}}sz and Vempala \cite{LovaszVempala07}.
\begin{lemma}~\label{lem:Rudelson}
Let $f$ be a zero-mean log-concave density on $\mathbb{R}^d$.  For $m
= O(d \log^3 d)$, if $\Sigma$ denotes the (population) covariance
matrix of $f$ and $\hat{\mathbf{\Sigma}}$ is the empirical covariance matrix
from $m$ samples of $f$, then, with probability $9/10$,
$\hat{\mathbf{\Sigma}}$ is a $11/10$ approximation to $\Sigma$.
\end{lemma}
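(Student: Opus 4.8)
The plan is to reduce to the isotropic case and then apply a matrix concentration bound; this is essentially the argument of \citep{LovaszVempala07}, so one could also just cite it verbatim. First I would note that both log-concavity and the relation $\approx_C$ are invariant under invertible linear maps: writing $\mathbf{Y}_i = \Sigma^{-1/2}\mathbf{X}_i$, the empirical second-moment matrix transforms as $\frac{1}{m}\sum_i \mathbf{Y}_i\mathbf{Y}_i^T = \Sigma^{-1/2}\hat{\mathbf{\Sigma}}\Sigma^{-1/2}$, and substituting $x = \Sigma^{-1/2}y$ in Definition~\ref{def:PSD-aprx} shows that $\hat{\mathbf{\Sigma}} \approx_C \Sigma$ holds precisely when $\Sigma^{-1/2}\hat{\mathbf{\Sigma}}\Sigma^{-1/2} \approx_C I$. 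Since $\Sigma$ is invertible for a full-rank log-concave density (the rank-deficient case is handled separately, as noted elsewhere in this section), we may assume $f$ is isotropic, i.e.\ $\Sigma = I$, and it then suffices to show that with probability at least $9/10$ the matrix $\hat{\mathbf{\Sigma}} = \frac{1}{m}\sum_{i=1}^m \mathbf{X}_i\mathbf{X}_i^T$ satisfies $\|\hat{\mathbf{\Sigma}} - I\|_{\mathrm{op}} \le 1/20$: this forces every eigenvalue of $\hat{\mathbf{\Sigma}}$ into $[19/20,21/20] \subseteq [10/11,11/10]$, hence $x^T\hat{\mathbf{\Sigma}}x / (x^Tx) \in [10/11,11/10]$ for all $x \neq 0$, which is exactly $\hat{\mathbf{\Sigma}} \approx_{11/10} I$. (Whether one centers the empirical covariance at the empirical mean is immaterial, since the empirical mean has squared norm $O(d/m) = o(1)$ with high probability.)

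Next comes a truncation step, needed because the concentration bound wants bounded sample vectors. By the exponential tail bound for isotropic log-concave densities (Lemma~5.17 of \citep{LovaszVempala07}, already used in the proof of Theorem~\ref{t:lc_shift_invariant}), $\Pr_{\mathbf{X}\sim f}[\|\mathbf{X}\|_2 > t\sqrt{d}] \le e^{1-t}$, so for $R = C_0\sqrt{d}\log d$ with $C_0$ a large enough absolute constant we get $\Pr[\|\mathbf{X}\|_2 > R] \le \frac{1}{100m}$ (using $m = \mathrm{poly}(d)$). Let $E$ be the event that all $m$ samples lie in $B(R)$; then $\Pr[E] \ge 1 - 1/100$, and conditioned on $E$ the samples are i.i.d.\ from the truncated density $f_R := f_{B(R)}$. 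A short estimate using the same tail bound — in which the dominant term is $R^2\Pr[\|\mathbf{X}\|_2 > R] = O(1/\log d)$ — shows that the covariance $\Sigma_R$ and mean $\mu_R$ of $f_R$ satisfy $\|\Sigma_R - I\|_{\mathrm{op}} \le \varepsilon_0$ and $\|\mu_R\|_2 \le \varepsilon_0$, where $\varepsilon_0$ can be made smaller than any prescribed constant by enlarging the constant hidden in $m = O(d\log^3 d)$.

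Finally I would apply Rudelson's inequality (equivalently, for our purposes, the matrix Bernstein inequality) to the bounded, nearly-isotropic samples. Whitening $f_R$, set $\mathbf{Z}_i := \Sigma_R^{-1/2}(\mathbf{X}_i - \mu_R)$, so that, conditioned on $E$, the $\mathbf{Z}_i$ are i.i.d., isotropic, and satisfy $\|\mathbf{Z}_i\|_2 \le M$ almost surely with $M = O(\sqrt{d}\log d)$. The centered matrices $\mathbf{A}_i := \mathbf{Z}_i\mathbf{Z}_i^T - I$ satisfy $\|\mathbf{A}_i\|_{\mathrm{op}} \le 2M^2$ and $\|\mathbb{E}[\mathbf{A}_i^2]\|_{\mathrm{op}} \le M^2$, so matrix Bernstein gives $\mathbb{E}\big\|\frac{1}{m}\sum_i \mathbf{A}_i\big\|_{\mathrm{op}} \le C_1\big(\sqrt{M^2\log d/m} + M^2\log d/m\big)$, which for $m = K d\log^3 d$ is $O(1/\sqrt{K})$. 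Taking $K$ a large enough absolute constant and applying Markov's inequality, $\|\frac{1}{m}\sum_i \mathbf{Z}_i\mathbf{Z}_i^T - I\|_{\mathrm{op}} \le \varepsilon_0$ with conditional probability at least $95/100$. Un-whitening — $\frac{1}{m}\sum_i\mathbf{X}_i\mathbf{X}_i^T = \Sigma_R^{1/2}\big(\frac{1}{m}\sum_i\mathbf{Z}_i\mathbf{Z}_i^T\big)\Sigma_R^{1/2}$ plus mean-correction terms of operator norm $O(\varepsilon_0)$ — and combining with $\|\Sigma_R - I\|_{\mathrm{op}} \le \varepsilon_0$ yields $\|\hat{\mathbf{\Sigma}} - I\|_{\mathrm{op}} = O(\varepsilon_0) \le 1/20$ once $\varepsilon_0$ is chosen small. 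A union bound over $E$ and the Bernstein event then gives overall success probability at least $0.99 \cdot 0.95 > 9/10$.

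The only genuine content is invoking the matrix concentration bound with the correct parameters and checking that truncation perturbs the covariance by only $o(1)$; the main subtlety — and the place I expect to spend the most care — is threading the constants $R$, $M$, $\varepsilon_0$, $K$ together so that $m = O(d\log^3 d)$ (rather than, say, $d\log^4 d$) suffices, and dealing cleanly with the small-$d$ regime, where the claim is in any case immediate. Since this exact statement with this exact proof already appears in \citep{LovaszVempala07}, the cleanest option in the paper is simply to cite it.
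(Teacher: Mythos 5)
Your proposal is correct in substance, but it is worth noting that the paper does not actually prove Lemma~\ref{lem:Rudelson} at all: it is imported verbatim from \citep{LovaszVempala07} (where it follows from Rudelson's theorem on isotropic random vectors together with the exponential tail bound for log-concave measures), and your own closing remark --- that the cleanest course is simply to cite it --- is exactly what the paper does. Your reconstruction (reduce to the isotropic case by congruence invariance of $\approx_C$, truncate to a ball of radius $O(\sqrt{d}\log d)$ using the $e^{1-t}$ tail bound, then apply a matrix concentration inequality to the bounded re-whitened samples and undo the truncation/whitening) is a sound standalone derivation and does recover $m=O(d\log^3 d)$; the only difference from the original is that you invoke matrix Bernstein where Rudelson/Lov\'asz--Vempala use Rudelson's inequality (noncommutative Khintchine), which is immaterial at this accuracy. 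Two small points to tighten if you were to include the argument: (i) the bound ``$R^2\Pr[\|\mathbf{X}\|>R]=O(1/\log d)$'' is a loose placeholder --- with $R=C_0\sqrt{d}\log d$ the truncation error is actually $d^{-\Omega(C_0)}\mathrm{poly}\log d$, and you need the full integral $\int_R^\infty 2s\Pr[\|\mathbf{X}\|>s]\,ds$ rather than just the boundary term, though both are easily small enough; and (ii) the small-$d$ regime and the uncentered-vs.-centered empirical covariance issue do need the separate (easy) treatment you flag, since $d/m$ and $\log d$ degenerate there. Neither affects correctness, and for the paper's purposes the citation suffices.
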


Lemma~\ref{lem:Rudelson} enables us to estimate the covariance
matrix if we know the mean.  To apply it when we do not, we appear
to need an estimate of the mean that is especially good in
directions with low variance.  The following is aimed at obtaining such an estimate.

Recall that, for a set $A$ of real-valued functions on a common domain
$X$, the {\em pseudo-dimension} of $A$,
which is denoted by $\mathrm{Pdim}(A)$,
is the VC-dimension of the set
of indicator functions, one for each $a \in A$, for whether
$(x,y)$ satisfies $a(x) \geq y$.
We will use the following standard VC bound.
% \footnote{There are bounds
% known on the number of examples needed to get
% $\frac{\left| \E_{x \sim D}(f(x)) - \frac{1}{m} \sum_{x' \in S} f(x') \right|}
%       {\sqrt{\Var{f}}}
% \leq \epsilon.$
% It might be more elegant to apply these bounds, but the proof that
% we have now is arguably not too bad.  We may need those bounds when we
% start to try to treat heavier tails.
% }
\begin{lemma}[\cite{Tal94}]
\label{l:vc}
For any set $A$ of functions with a common domain $X$ and ranges
contained in $[-M,M]$, for any distribution $D$, 
$m = O(\frac{M^2( \mathrm{Pdim}(A) + \log(1/\gamma))}{\epsilon^2})$
suffices for a set $S$ of $m$ examples 
drawn according to $D$, with probability $1-\gamma$, 
to have all $a \in A$ have 
\[
\left| \E_{\bx \sim D}(a(\bx)) - \frac{1}{m} \sum_{x' \in S} a(x') \right|
\leq \epsilon.
\]
\end{lemma}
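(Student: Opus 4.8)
The plan is to prove this by the classical two-step recipe for uniform convergence --- \emph{symmetrization} followed by a \emph{union bound over an effectively finite subfamily} --- with the pseudo-dimension entering only through a Sauer--Shelah-type estimate on the subgraph class.

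First I would reduce to the case of $[0,1]$-valued functions. Replacing each $a \in A$ by $\tilde a \eqdef (a+M)/(2M)$ maps the ranges into $[0,1]$ and does not change $\mathrm{Pdim}$ (an affine rescaling of the target variable does not change the VC dimension of the family of subgraphs), while an additive error of $\epsilon/(2M)$ for $\tilde a$ corresponds to additive error $\epsilon$ for $a$. This is exactly where the $M^2$ factor in $m$ comes from: the target accuracy becomes $\alpha \eqdef \epsilon/(2M)$ and the sample complexity scales as $1/\alpha^2$. So it suffices to prove: for a class $\mathcal{A}$ of $[0,1]$-valued functions with $\mathrm{Pdim}(\mathcal{A}) = p$, a sample of size $m = O((p + \log(1/\gamma))/\alpha^2)$ makes $\sup_{a \in \mathcal{A}} |\E_D a - \widehat{\E}_S a| \le \alpha$ hold with probability $1-\gamma$.

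Next I would run the standard argument for this reduced statement. (i) \emph{Symmetrization} with a ghost sample $S'$ of the same size: a short Chebyshev estimate shows that once $m \gtrsim 1/\alpha^2$, $\Pr[\sup_a |\widehat{\E}_S a - \E_D a| > \alpha] \le 2\,\Pr[\sup_a |\widehat{\E}_S a - \widehat{\E}_{S'} a| > \alpha/2]$. (ii) \emph{Randomization}: by exchangeability of the $2m$ points, the right side equals $2\,\Pr[\sup_a |\tfrac1m \sum_i \sigma_i(a(x_i) - a(x_i'))| > \alpha/2]$ for i.i.d.\ Rademacher signs $\sigma_i$. (iii) \emph{Conditioning} on the $2m$ points, the supremum over $a$ depends only on the vector $(a(x_1),\dots,a(x_m),a(x_1'),\dots,a(x_m')) \in [0,1]^{2m}$, so it suffices to control a finite $\alpha/4$-net of $\mathcal{A}$ in the $\ell_\infty$ metric on these points. (iv) \emph{Tail bound and union bound}: Hoeffding's inequality bounds the probability that a \emph{fixed} signed average exceeds $\alpha/4$ by $2\exp(-\Omega(m\alpha^2))$, so a union bound over the net gives the claim provided $m\alpha^2 \gtrsim \log(\text{net size}) + \log(1/\gamma)$.

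The main obstacle --- and the step I would spend the most care on --- is bounding the size of that net in terms of $p = \mathrm{Pdim}(\mathcal{A})$. Here the pseudo-dimension is used: the subgraph class $\{(x,y)\mapsto \mathbf{1}[a(x)\ge y] : a \in \mathcal{A}\}$ has VC dimension $p$, so by Sauer--Shelah it realizes at most $(em'/p)^p$ distinct labelings on any set of $m'$ points. Discretizing the $y$-axis at resolution $\alpha$ over the $2m$ evaluation points (i.e.\ looking at subgraphs on the resulting grid of $O(m/\alpha)$ points) converts this growth-function bound into a bound of the form $(em/(p\alpha))^{O(p)}$ on the number of $\alpha$-distinguishable functions, hence $\log(\text{net size}) = O(p\log(m/(p\alpha)))$. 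Solving $m\alpha^2 \gtrsim p\log(m/(p\alpha)) + \log(1/\gamma)$ for $m$, and absorbing the spurious logarithm by a routine bootstrapping step, yields $m = O((p + \log(1/\gamma))/\alpha^2)$; unwinding the rescaling with $\alpha = \epsilon/(2M)$ and $p = \mathrm{Pdim}(A)$ gives the stated bound. I would remark that this is essentially the Vapnik--Chervonenkis / Pollard / Haussler route, and that the fully log-free form stated here is what \cite{Tal94} obtains by a sharper chaining argument; the displayed $O(\cdot)$ bound already follows from the elementary argument above, which is all we need.
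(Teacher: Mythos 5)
The paper does not prove this lemma at all --- it is quoted as a black-box citation to \cite{Tal94} --- so your attempt has to be judged on its own terms, and as a proof of the bound \emph{as stated} it has a genuine gap. The statement is the log-free uniform-convergence bound $m = O\bigl(M^2(\mathrm{Pdim}(A) + \log(1/\gamma))/\epsilon^2\bigr)$. Your symmetrization / Sauer--Shelah / Hoeffding argument is correct up to the point where you arrive at the condition $m\alpha^2 \gtrsim p\log\bigl(m/(p\alpha)\bigr) + \log(1/\gamma)$, but the final step --- ``absorbing the spurious logarithm by a routine bootstrapping step'' --- does not work. Writing $m = Kp/\alpha^2$, the condition becomes $K \gtrsim \log K + 3\log(1/\alpha)$, so bootstrapping only removes the $\log(m/p)$ part and you are forced to take $K = \Theta(\log(1/\alpha))$; the elementary route therefore yields $m = O\bigl(M^2(p\log(M/\epsilon) + \log(1/\gamma))/\epsilon^2\bigr)$, with an irreducible extra $\log(M/\epsilon)$ factor. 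This is not an artifact of your bookkeeping: the union-bound-over-a-net argument (equivalently, Pollard's $L_1$-covering argument with Haussler's bound $N_1(\alpha) \le e(p+1)(2e/\alpha)^p$) provably cannot give the log-free rate. Removing that logarithm is precisely the content of the sharper result you defer to --- Talagrand's argument, or alternatively Dudley chaining combined with Haussler's log-free $L_2$ packing bound, which makes the entropy integral $\int_0^1 \sqrt{p\log(1/\eta)}\,d\eta$ converge and gives Rademacher complexity $O(\sqrt{p/m})$ without a log. So your closing claim that ``the displayed $O(\cdot)$ bound already follows from the elementary argument above'' is false; the elementary argument proves a strictly weaker statement, and the chaining step you wave at is exactly the missing ingredient.

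Two smaller remarks. First, the reduction to $[0,1]$-valued functions with $\alpha = \epsilon/(2M)$ and the observation that $\mathrm{Pdim}$ is invariant under this affine rescaling are fine, and correctly account for the $M^2$. Second, for the paper's downstream use (Lemma~\ref{l:mean}, where the quoted sample size already carries $\log^2$ slack and $M = O(\log m)$), the weaker bound with the extra logarithm would in fact suffice; but that is a statement about the application, not a proof of Lemma~\ref{l:vc} as displayed.
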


The proof of the following lemma follows a similar lemma in \citep{KLS:09jmlr}.
\begin{lemma}
\label{l:pdim.mean}
Fix 
% a probability distribution $P$, and 
a function $b$ from
$\R^d$ to $\R^+$.  Define $a_u = b(u) \cdot (u \cdot x)$.
The pseudo-dimension of $\{ a_u : u \in B(1) \}$ is $O(d)$.
\end{lemma}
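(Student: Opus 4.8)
The plan is to exploit the fact that, for a \emph{fixed} $u$, the quantity $b(u)$ is merely a scalar, so that $a_u$ is nothing more than a homogeneous linear function of $x$ whose coefficient vector happens to be $b(u)\,u$. First I would set $w_u \eqdef b(u)\, u \in \R^d$, so that $a_u(x) = \langle w_u, x\rangle$ for every $x \in \R^d$. This exhibits $\{a_u : u \in B(1)\}$ as a subset of the class $\mathcal{L} \eqdef \{\, x \mapsto \langle w, x\rangle \ :\ w \in \R^d \,\}$ of all homogeneous linear functionals on $\R^d$. Since the pseudo-dimension of a class of real-valued functions only decreases when one passes to a subclass, it suffices to show $\mathrm{Pdim}(\mathcal{L}) = O(d)$.

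To bound $\mathrm{Pdim}(\mathcal{L})$, I would unwind the definition: $\mathrm{Pdim}(\mathcal{L})$ is the VC dimension of the family of subgraphs $\{\, \{(x,y) \in \R^d \times \R : \langle w, x\rangle \ge y\} \ :\ w \in \R^d \,\}$. Each such subgraph equals $\{\, z = (x,y) \in \R^{d+1} : \langle (w,-1), z\rangle \ge 0 \,\}$, i.e.\ it is a homogeneous halfspace in $\R^{d+1}$ (the normal vector $(w,-1)$ is never zero). Equivalently, each subgraph is the nonnegativity region of the linear functional $(x,y) \mapsto \langle w,x\rangle - y$, which lies in the $(d+1)$-dimensional vector space of linear functionals on $\R^{d+1}$ spanned by the coordinate projections $x_1,\dots,x_d,y$. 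Invoking the standard fact that the family of nonnegativity regions of functions drawn from a $k$-dimensional vector space of real-valued functions has VC dimension at most $k$ (in our case this is just the statement that homogeneous halfspaces in $\R^{d+1}$ have VC dimension $d+1$; see, e.g., \citep{devroye2012combinatorial}), we conclude that this VC dimension is at most $d+1$. Hence $\mathrm{Pdim}(\{a_u : u \in B(1)\}) \le \mathrm{Pdim}(\mathcal{L}) \le d+1 = O(d)$.

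There is no substantive obstacle here; the only point that requires a moment of care is the initial reduction — one must notice that the scalar weight $b(u)$ carries no dependence on $x$, so that each $a_u$ genuinely lives in the finite-dimensional function space $\mathcal{L}$ and the standard pseudo-dimension bound for such spaces applies directly.
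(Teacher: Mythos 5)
Your proof is correct and follows essentially the same route as the paper: both observe that the subgraph condition $a_u(x)\ge y$, i.e.\ $b(u)(u\cdot x)-y\ge 0$, embeds the relevant indicator functions into the class of homogeneous halfspaces over $\R^{d+1}$, whose VC dimension is $O(d)$. You simply spell out the reduction (via $w_u = b(u)\,u$ and the normal vector $(w_u,-1)$) in more detail than the paper does.
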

\begin{proof} Any $(x,y)$ satisfies 
$a_u(x) \geq y$ iff
\[
b(u) (u \cdot x) \geq y.
\]
Thus, the set of indicator functions
for $a_u(x) \geq y$ can be embedded into the set of homogeneous halfspaces
over $\R^{d+1}$, which is known to have VC-dimension $O(d).$
\end{proof}

Now we are ready for the result we require on estimating the mean:

\begin{lemma}
\label{l:mean}
Fix any log-concave distribution $f$ over $\R^d$
and any
$\alpha > 0$.
For $m = O\left(\frac{d \log^2(d/\alpha)}{\alpha^2}\right)$, 
 with probability at least $3/4$,
a multiset $S$ of $m$ samples drawn i.i.d. from $f$ 
satisfies, for all unit length $u$,
\begin{equation}
\label{e:mean}
\frac{| \E_{\bx \sim S}( u \cdot \bx) - \E_{\bx \sim f}(u \cdot \bx)|}
     {\sqrt{\Var_{\bx \sim f}(u \cdot \bx)}}
 \leq \alpha.
\end{equation}
\end{lemma}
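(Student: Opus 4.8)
The plan is to reduce the uniform bound over all directions to a single bound on the $\ell_2$-error of the empirical mean, after putting $f$ in isotropic position. First I would observe that the left-hand side of~(\ref{e:mean}), namely $\sup_{\|u\|_2=1}\frac{|\E_{\bx\sim S}(u\cdot\bx) - \E_{\bx\sim f}(u\cdot\bx)|}{\sqrt{\Var_{\bx\sim f}(u\cdot\bx)}}$, is unchanged when the space is transformed by an invertible linear map $A$: replacing $f$ and $S$ by their images under $A$ turns the quantity attached to a unit vector $u$ into the quantity attached to $A^{\mathrm{T}}u$ for the original $f$ and $S$, and since that quantity depends only on the \emph{direction} of its argument (numerator and denominator both scale linearly in the argument), the supremum over the unit sphere is the same. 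As the surrounding discussion notes, we may assume the covariance $\Sigma$ of $f$ has full rank (otherwise restrict to the affine span of $f$, on which $\sqrt{\Var_{\bx\sim f}(u\cdot\bx)}>0$ for the relevant directions), so taking $A=\Sigma^{-1/2}$ lets us assume $f$ is isotropic. (Log-concavity is preserved by this step, although it is in fact not needed for the lemma.)

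With $f$ isotropic we have $\Var_{\bx\sim f}(u\cdot\bx)=\|u\|_2^2=1$ for every unit $u$, so the left-hand side of~(\ref{e:mean}) collapses to $\sup_{\|u\|_2=1}|u\cdot(\hat{\bmu}-\mu)|=\|\hat{\bmu}-\mu\|_2$, where $\mu=\E_{\bx\sim f}[\bx]$ and $\hat{\bmu}=\frac1m\sum_{i=1}^m\bx_i$ is the empirical mean of the i.i.d.\ sample. The only computation needed is the second-moment estimate $\E\big[\|\hat{\bmu}-\mu\|_2^2\big]=\frac1m\,\E_{\bx\sim f}\big[\|\bx-\mu\|_2^2\big]=\frac dm$, which follows by expanding $\big\|\frac1m\sum_{i=1}^m(\bx_i-\mu)\big\|_2^2$ and using independence to kill the cross terms (and $\E_{\bx\sim f}\big[\|\bx-\mu\|_2^2\big]=d$ since $f$ is isotropic). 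Markov's inequality then gives $\Pr\big[\|\hat{\bmu}-\mu\|_2^2>4d/m\big]<1/4$, so $m=\Theta(d/\alpha^2)$ — which lies within the claimed bound $O\!\big(d\log^2(d/\alpha)/\alpha^2\big)$ — suffices for~(\ref{e:mean}) to hold with probability at least $3/4$.

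I do not expect a real obstacle here; the points needing a little care are justifying the reduction to the isotropic case (dealing with the degenerate-covariance case, and noting the reduction uses scale-invariance of the ratio, not merely a change of variables). An alternative route, apparently the one anticipated by Lemmas~\ref{l:vc} and~\ref{l:pdim.mean}, is to truncate each linear function $u\cdot\bx$ to magnitude $M=O(\sqrt d\log m)$ using the exponential tail bound for isotropic log-concave densities (so that with probability $7/8$ no sample point is affected and the truncation bias is negligible), note that $\{u\cdot\bx:\|u\|_2=1\}$ has pseudo-dimension $O(d)$ by Lemma~\ref{l:pdim.mean} (with the scalar function there set to $1$), and apply Lemma~\ref{l:vc} with range bound $M$; the mild nuisance in that route is that $M$ scales like $\sqrt d$, so one has to argue somewhat carefully to avoid an extra factor of $d$ in $m$, whereas the second-moment argument above sidesteps this entirely.
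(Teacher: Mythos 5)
Your proof is correct, and it takes a genuinely different route from the paper's. The paper proves Lemma~\ref{l:mean} by a uniform-convergence argument: it truncates (via the Lov\'asz--Vempala tail bound, Lemma~5.17) to a conditioned density $f_B$ within total variation $1/(8m)$ of $f$, normalizes each linear functional as $a_u(x) = |u \cdot x|/\sqrt{\Var_{\bx \sim f}(u \cdot \bx)}$ so that the range is $O(\log m)$ (not $O(\sqrt{d}\log m)$ --- the per-direction normalization is exactly how the paper avoids the extra factor of $d$ you worry about in your sketched alternative), and then applies the pseudo-dimension bound of Lemma~\ref{l:pdim.mean} together with the VC bound of Lemma~\ref{l:vc}, which is where the $\log^2(d/\alpha)$ factors come from. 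Your argument instead exploits the affine invariance of the ratio in (\ref{e:mean}) to pass to isotropic position, where the supremum over directions collapses to $\|\hat{\bmu}-\mu\|_2$, and a one-line second-moment computation plus Markov gives the claim with $m = O(d/\alpha^2)$ --- strictly sharper than the stated bound, with no log factors, and without using log-concavity at all (only finite covariance); the degenerate-covariance issue you flag is handled the same way the paper handles it elsewhere (restrict to the affine span), and for directions with zero variance the ratio is vacuous. What the paper's machinery buys is a $\log(1/\gamma)$ dependence for arbitrary confidence and a statement that extends to richer function classes, but for the constant-confidence guarantee actually needed in Lemma~\ref{lem:scaling} your Markov-based argument suffices and is both simpler and quantitatively stronger.
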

\begin{proof} Translating the distribution $f$ translates both $\E_{\bx
  \sim S}( u \cdot \bx)$ and $\E_{\bx \sim f}( u \cdot \bx)$ the same way,
and does not affect $\Var_{\bx \sim f}(u \cdot \bx)$, so we may assume without loss
of generality that $f$ has zero mean.  
Let $f_B$ be the distribution
obtained from $f$ by conditioning the choice of $x$ on the event that
$| u \cdot x | \leq \sqrt{\Var_{\bx \sim f}(u \cdot \bx)} \cdot \frac{\ln (8 m)}{c}$ for all unit length $u$, where $c$ is a large constant.
Lemma~5.17 of \citep{LovaszVempala07} implies that, for large
enough $c$, the total variation
distance between $f$ and $f_B$ is $1/(8m)$, so that the total variation
distance between $m$ draws from $f$ and $m$ draws from $f_B$ is at most
$1/8$.  We henceforth assume that the $m$ draws from $f$ are in fact drawn from $f_B$, and proceed to analyze $f_B$.

For any unit length $u$, define $a_u$ by 
$a_u(x) = \frac{| u \cdot x |}{\sqrt{\Var_{\bx \sim f}(u \cdot \bx)}}$.
Lemma~\ref{l:pdim.mean} implies that $\{ a_u : u \in B(1) \}$
has pseudo-dimension $O(d)$.  Furthermore, when $x$ is chosen
from the support of $f_B$, each $a_u$ takes values in 
an interval of size $O(\log m)$.  Thus we may apply Lemma~\ref{l:vc} to obtain
Lemma~\ref{l:mean}. 
\end{proof}

Now we are ready to present and analyze the transformation.

\begin{lemma}~\label{lem:scaling}
There is an algorithm \textsf{rescale} such that given
access to samples from a log-concave distribution $f$, and an error
parameter $\epsilon>0$, the algorithm takes $O(d \log^3 d)$ samples
from $f$ and with probability at least $1/2$
% $d^{-O(d^2)}$ 
produces a non-singular positive definite matrix $\tilde{\Sigma} \in \mathbb{R}^{d \times d}$
such that, if $\Sigma$ is the covariance matrix of $f$, 
for any unit vector $v$, 
\[
\frac{1}{2} \leq \frac{v^{T} \Sigma v}{v^T \tilde{\Sigma} v} 
   \leq 2.
\]
\end{lemma}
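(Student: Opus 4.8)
The plan is for \textsf{rescale} to draw $m = O(d\log^3 d)$ samples $\bx_1,\dots,\bx_m$ from $f$, set $\hat{\bmu} = \frac1m\sum_i \bx_i$, and output the empirical covariance taken about the empirical mean, $\tilde{\Sigma} \eqdef \frac1m \sum_{i=1}^m (\bx_i - \hat{\bmu})(\bx_i - \hat{\bmu})^T$. (The error parameter $\epsilon$ appears in neither the conclusion nor the sample bound; it is carried only for uniformity with the other transformation routines, and as remarked just above the lemma we may assume the true covariance $\Sigma$ is nonsingular.) The analysis would rest on the elementary identity
\[
\tilde{\Sigma} \;=\; \hat{\Sigma}_\mu \;-\; (\hat{\bmu} - \mu)(\hat{\bmu} - \mu)^T ,
\qquad \hat{\Sigma}_\mu \eqdef \frac1m\sum_{i=1}^m (\bx_i - \mu)(\bx_i - \mu)^T ,
\]
where $\mu$ is the unknown true mean; this exhibits the output as the empirical covariance about the \emph{true} mean minus a positive semidefinite rank-one correction, and I would bound the two pieces separately. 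Since $\hat{\Sigma}_\mu$ is a function only of the zero-mean log-concave sample $\bx_i - \mu$, Lemma~\ref{lem:Rudelson} applies and gives, with probability at least $9/10$, that $\hat{\Sigma}_\mu \approx_{11/10}\Sigma$, i.e.\ $\frac{10}{11}v^T\Sigma v \le v^T\hat{\Sigma}_\mu v \le \frac{11}{10}v^T\Sigma v$ for every unit $v$. Running Lemma~\ref{l:mean} on the same sample with a small absolute constant $\alpha$ (its hypothesis requires only $O(d\log^2 d) \le O(d\log^3 d)$ samples when $\alpha$ is constant), with probability at least $3/4$ we get, for every unit $v$, that $0 \le \big(v^T(\hat{\bmu}-\mu)\big)^2 \le \alpha^2\Var_{\bx\sim f}(v\cdot\bx) = \alpha^2\,v^T\Sigma v$.

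Plugging these two estimates into the identity, a union bound yields that with probability at least $1 - \frac1{10} - \frac14 > \frac12$ every unit vector $v$ satisfies
\[
\Big(\frac{10}{11} - \alpha^2\Big) v^T\Sigma v \;\le\; v^T\tilde{\Sigma} v \;\le\; \frac{11}{10}\,v^T\Sigma v .
\]
Choosing $\alpha$ so that $\alpha^2 \le \frac{9}{22}$ (for instance $\alpha = \frac12$) makes the left coefficient at least $\frac12$, and $\frac{11}{10}\le 2$; rearranging gives $\frac12 \le \frac{v^T\Sigma v}{v^T\tilde{\Sigma} v}\le 2$, which is the claimed bound. The left inequality also gives $v^T\tilde{\Sigma} v \ge \frac12 v^T\Sigma v > 0$ for all $v \ne 0$, so $\tilde{\Sigma}$ is positive definite and hence nonsingular.

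The one genuinely delicate point is that the empirical covariance about the empirical mean can be distorted by a large multiplicative factor in a low-variance direction if the mean estimate happens to err in that direction, so a naive bound on $\|\hat{\bmu}-\mu\|$ is of no use here; this is exactly why the argument invokes the mean estimate of Lemma~\ref{l:mean}, whose error guarantee is normalized by the directional standard deviation $\sqrt{\Var_{\bx\sim f}(v\cdot\bx)}$ rather than being an absolute Euclidean bound. Everything else is the covariance decomposition identity above and the bookkeeping check that both invoked lemmas can be run off one shared sample of size $O(d\log^3 d)$ (or, if one prefers to avoid the dependence between the two events, off two disjoint halves of such a sample).
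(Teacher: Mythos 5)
Your proposal is correct and takes essentially the same approach as the paper: the same estimator (the empirical covariance taken about the empirical mean), the same two ingredients (Lemma~\ref{lem:Rudelson} applied to the zero-mean samples $\bx_i-\mu$, and Lemma~\ref{l:mean} with a constant $\alpha$ to control the mean error relative to the directional standard deviation), and the same union bound over one $O(d\log^3 d)$-size sample. The only difference is cosmetic: where the paper expands $(v\cdot x_i-v\cdot\mu+v\cdot\mu-v\cdot\tilde{\mu})^2$ and bounds the cross term by Cauchy--Schwarz, you use the exact identity $\tilde{\Sigma}=\hat{\Sigma}_\mu-(\hat{\mu}-\mu)(\hat{\mu}-\mu)^T$, which slightly streamlines the algebra (the upper bound then needs no mean estimate at all) but leads to the same conclusion with essentially the same constants.
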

\begin{proof}
% For a large constant $C$, 
% $M_2 = C d \log^3 d$, and $M_1 = C M_2 d \log^2 d$,
% the algorithm \textsf{rescale} first uses $M_1$ examples to
% construct an estimate $\tilde{\mu}$ of the mean of $f$, and then
% uses $\tmu$ with an additional $M_2$ samples to
% estimate the covariance matrix.
For a large constant $C$ and $M = C d \log^3 d$
the algorithm \textsf{rescale} first uses $M$ examples to
construct an estimate $\tilde{\bmu}$ of the mean of $f$, and then
uses $\tilde{\bmu}$ to use the examples estimate the covariance matrix.

Lemma~\ref{l:mean} implies that, if $C$ is large enough, then 
with probability $3/4$, for all unit length $v$, we have
\begin{equation}
\label{e:means}
\frac{| \mu \cdot v - \tilde{\bmu} \cdot v|}
     {\sqrt{\Var_{\bx \sim f}[v \cdot \bx]}}
   \leq 
   \frac{1}{10}.
\end{equation}
Lemma~\ref{lem:Rudelson} implies that, with probability $3/4$ over a random i.i.d.~draw of $\bx_1,\dots,\bx_M \sim f,$ we have
\begin{equation}
\label{e:rudelson}
9/10 \leq \frac{\frac{1}{M} \sum_{i=1}^{M} (v \cdot \bx_i - v \cdot \mu)^2}
           {\Var_{\bx \sim f} [v \cdot \bx]} 
     \leq 11/10.
\end{equation}
We henceforth assume that both (\ref{e:means}) and (\ref{e:rudelson}) hold (this happens with probability at least $1/2$), and we let $\tmu$ and $x_1,\dots,x_M$ denote the corresponding outcomes.

Let $\Sigma$ be the true co-variance of $f$, 
% $\bSigma$ be the estimate
% that would have been obtained if $\mu$ was used instead of 
% $\tmu$, 
and let $\tSigma$
be the estimate that was used (which depends on $\tilde{\mu}$).

We have that
% and $S$ is the
% corresponding empirical distribution, 
\begin{align*}
\frac{v^T \tSigma v}{v^T \Sigma v}
  & = \frac{\frac{1}{M} \sum_{i=1}^{M} (v \cdot x_i - v \cdot \tmu )^2}{v^T \Sigma v} \\
  & = \frac{\frac{1}{M} \sum_{i=1}^{M} (v \cdot x_i - v \cdot \tmu )^2}{\Var_{\bx \sim f} [v \cdot \bx]} \\
  & = \frac{\frac{1}{M} \sum_{i=1}^{M} (v \cdot x_i - v \cdot \mu + v \cdot \mu - v \cdot \tmu)^2}
           {\Var_{\bx \sim f} [v \cdot \bx]} \\
  & \leq \frac{\frac{1}{M} \sum_{i=1}^{M} (v \cdot x_i - v \cdot \mu)^2}
           {\Var_{\bx \sim f} [v \cdot \bx]} 
           + \frac{\frac{2 |v \cdot \mu - v \cdot \tmu|}{M} \sum_{i=1}^{M} |v \cdot x_i - v \cdot \mu|}
           {\Var_{\bx \sim f} [v \cdot \bx]} 
           + \frac{\frac{1}{M} \sum_{i=1}^{M} (v \cdot \mu - v \cdot \tmu)^2}
           {\Var_{\bx \sim f} [v \cdot \bx]} 
                \\
  & = \frac{\frac{1}{M} \sum_{i=1}^{M} (v \cdot x_i - v \cdot \mu)^2}
           {\Var_{\bx \sim f} [v \cdot \bx]} 
           + \frac{2 |v \cdot \mu - v \cdot \tmu|}{M \sqrt{\Var_{\bx \sim f} [v \cdot \bx]}} 
              \sum_{i=1}^{M} \frac{|v \cdot x_i - v \cdot \mu|}
                                    {\sqrt{\Var_{\bx \sim f} [v \cdot \bx]}} 
           + \frac{(v \cdot \mu - v \cdot \tmu)^2}
           {\Var_{\bx \sim f} [v \cdot \bx]} 
                \\
  & \leq \frac{\frac{1}{M} \sum_{i=1}^{M} (v \cdot x_i - v \cdot \mu)^2}
           {\Var_{\bx \sim f} [v \cdot \bx]} 
           + \frac{2 |v \cdot \mu - v \cdot \tmu|}{M \sqrt{\Var_{\bx \sim f} [v \cdot \bx]}} 
              \times \sqrt{M \sum_{i=1}^{M} \frac{(v \cdot x_i - v \cdot \mu)^2}
                                    {\Var_{\bx \sim f} [v \cdot \bx]} }
           + \frac{(v \cdot \mu - v \cdot \tmu)^2}
           {\Var_{\bx \sim f} [v \cdot \bx]} 
                \\
  & = \frac{\frac{1}{M} \sum_{i=1}^{M} (v \cdot x_i - v \cdot \mu)^2}
           {\Var_{\bx \sim f} [v \cdot \bx]} 
           + \frac{2 |v \cdot \mu - v \cdot \tmu|}{\sqrt{\Var_{\bx \sim f} [v \cdot \bx]}} 
              \times \sqrt{\frac{1}{M} \sum_{i=1}^{M} \frac{(v \cdot x_i - v \cdot \mu)^2}
                                    {\Var_{\bx \sim f} [v \cdot \bx]} }
           + \frac{(v \cdot \mu - v \cdot \tmu)^2}
           {\Var_{\bx \sim f} [v \cdot \bx]} 
                \\
  & \leq 11/10 + 2 (1/10) \sqrt{11/10} + 1/100 \leq 2,
\end{align*}
where the second inequality is by Cauchy-Schwarz and the third inequality is
by (\ref{e:rudelson}) and (\ref{e:means}).
Similarly, 
\begin{align*}
\frac{v^T \tSigma v}{v^T \Sigma v}
  & \geq \frac{\frac{1}{M} \sum_{i=1}^{M} (v \cdot x_i - v \cdot \mu)^2}
           {\Var_{\bx \sim f} [v \cdot \bx]} 
           - \frac{\frac{2 |v \cdot \mu - v \cdot \tmu|}{M} \sum_{i=1}^{M} |v \cdot x_i - v \cdot \mu|}
           {\Var_{\bx \sim f} [v \cdot \bx]} 
                \\
  & \geq 9/10 - 2 (1/10) \sqrt{11/10} \geq 1/2,
\end{align*}
completing the proof.
\end{proof}

\medskip

\begin{proofof}{Theorem~\ref{thm:log-concave}}
The basic algorithm (for the noise-free setting) applies the procedure \textsf{rescale} from Lemma~\ref{lem:scaling} to find
an estimate of the covariance matrix of $f$, rescales the axes so that the transformed distribution
is $2$-nearly-isotropic, learns the transformed distribution, and then rescales the axes again to restore their
original scales.

In the presence of noise, Lemma~\ref{lem:scaling} succeeds with probability 1/2, if all the examples
are not noisy.  But since the noise rate is at most $\epsilon$, and we may assume without loss of
generality that $\epsilon < 1/10$, since the number of examples required in Lemma~\ref{lem:scaling}
is independent of $\epsilon$, any invocation of the method succeeds in the presence of noise
with probability $\Omega_d(1)$, which is at least some positive constant (since $d$ is a constant).  
Thus, if an algorithm performs
$O_d(\log(1/\delta))$ many repetitions, with probability at least $1 - \delta/2$ one of them will succeed.
It can therefore call the algorithm of Theorem~\ref{thm:main-semiagnostic}
$O(\log(1/\delta))$ times, and then applying the hypothesis testing procedure of
Proposition~\ref{prop:log-cover-size} to the results, to achieve the claimed result.
\end{proofof}

% !TEX root =  main.tex

\section{Learning shift-invariant densities over $\mathbb{R}^d$ with bounded support requires $\Omega(1/\eps^d)$ samples}
\label{sec:lowerbound}

In this section we give a simple lower bound which shows that\ignore{the sample complexity of the algorithm of Lemma~\ref{lem:finite-support} is information-theoretically optimal:}  $\Omega(1/\eps^d)$ samples are required for $\eps$-accurate density estimation even of shift-invariant $d$-dimensional densities with bounded support.  As discussed in the introduction, densities with bounded support may be viewed as satisfying the strongest possible rate of tail decay as they have zero tail mass outside of a bounded region.

\begin{theorem} \label{thm:lb}
Given $d \geq 1$, there is a constant $c_d=\Theta(\sqrt{d})$ such that the following holds:  For all sufficiently small $\eps$,
let $A$ be an algorithm with the following property:  given access to $m$ i.i.d. samples from an arbitrary (and unknown) finitely supported density $f \in {\cal C}_\si(c_d,d)$, with probability at least $99/100$, $A$ outputs a hypothesis density $h$ such that $\dtv(f,h) \leq \eps.$  Then $m \geq \Omega((1/\eps)^d)$.
\end{theorem}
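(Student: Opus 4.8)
The plan is to apply Fano's inequality to a large family $\cF$ of bounded-support shift-invariant densities that are pairwise far in total variation distance but pairwise close in KL divergence. All members will live on the cube $Q := [-1/2,1/2]^d$ and be small piecewise-constant perturbations of the uniform density on $Q$. Fix a cell side length $\ell = \Theta_d(\eps)$ with $1/\ell$ an even integer, partition $Q$ into its $N := (1/\ell)^d$ axis-aligned cells, group the cells into $N/2$ disjoint pairs of neighbouring cells, and fix a binary code $\mathsf{C} \subseteq \{0,1\}^{N/2}$ of constant rate $R>0$ and constant relative distance $\rho>0$ (such a code exists by Gilbert--Varshamov). For $y \in \mathsf{C}$ let $f_y$ be the density equal to $1+\alpha$ on one cell of each pair and $1-\alpha$ on the other, with the assignment within the $i$-th pair dictated by $y_i$, where $\alpha := \Theta(\eps)$; each $f_y$ is a genuine density, since $\alpha<1$ and the $\pm\alpha$ perturbations cancel within every pair. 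Let $\cF := \{f_y : y \in \mathsf{C}\}$, so $\log|\cF| = \Omega(N) = \Omega_d((1/\eps)^d)$.

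Three properties must be checked: $(a)$ every $f_y$ lies in $\CSI(c_d,d)$ for a suitable $c_d = \Theta(\sqrt d)$; $(b)$ distinct $f_y,f_{y'}$ have $\dtv(f_y,f_{y'}) \ge 4\eps$; $(c)$ distinct $f_y,f_{y'}$ have $\mathrm{KL}(f_y\|f_{y'}) = O(1)$. For $(b)$: if $y\neq y'$ they disagree in $\ge \rho N/2$ coordinates, so $f_y$ and $f_{y'}$ differ by $2\alpha$ in absolute value on $\ge \rho N$ cells, whence $\dtv(f_y,f_{y'}) = \frac{1}{2}\int|f_y - f_{y'}| \ge \frac{1}{2}\cdot 2\alpha\cdot(\rho N)\ell^d = \alpha\rho$; choosing the constant in $\alpha=\Theta(\eps)$ so that $\alpha\rho \ge 4\eps$ gives the claim. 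For $(c)$: every $f_y$ takes values in $[1-\alpha,1+\alpha]\subseteq[1/2,3/2]$ and $|f_y-f_{y'}|\le 2\alpha$ pointwise, so $\mathrm{KL}(f_y\|f_{y'}) \le \chi^2(f_y\|f_{y'}) = \int\frac{(f_y-f_{y'})^2}{f_{y'}} \le \frac{4\alpha^2}{1-\alpha} = O(1)$.

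Property $(a)$ is the main obstacle, and it is where the constants must be balanced. Writing a shift by $\kappa'v$ (with $\|v\|_2=1$) as a composition of coordinate shifts and using the triangle inequality for the $L_1$ norm, it suffices to bound $\int|f_y(x)-f_y(x+\kappa'e_j)|\,dx$ for each axis direction $e_j$ and every $\kappa'>0$. The integrand is nonzero only within distance $\kappa'$ of the grid hyperplanes perpendicular to $e_j$ (where neighbouring cells of $f_y$ differ by at most $2\alpha$), together with the width-$\kappa'$ slab of $Q$ translated out of $Q$ (where the difference is at most $1+\alpha$); one checks this integral is at most $\big(\frac{2\alpha}{\ell}+O(1)\big)\kappa'$ for every $\kappa'>0$ — in particular for $\kappa'\gtrsim 1$ it is trivially at most $2$, which is $\le O(1)\kappa'$. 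Summing the $d$ coordinate bounds (each scaled by $|v_j|$, using $\|v\|_1\le\sqrt d$) and dividing by $\kappa$ gives $\si(f_y,v,\kappa) \le \sqrt d\big(\frac{2\alpha}{\ell}+O(1)\big) = O(\sqrt d)$, once we enforce $\alpha = O(\ell)$. Since $\alpha = \Theta(\eps)$ this constraint forces $\ell = \Omega_d(\eps)$; taking $\ell = \Theta_d(\eps)$ as large as allowed keeps $N = (1/\ell)^d = \Theta_d((1/\eps)^d)$, and this is precisely the source of the $(1/\eps)^d$ bound — the same constraint is the reason the argument cannot be pushed below $1/\eps^d$ cells.

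Finally, let $\mathbf{Y}$ be uniform over $\mathsf{C}$ and $\mathbf{X}$ be $m$ i.i.d.\ draws from $f_{\mathbf{Y}}$. If the learner $A$ outputs a density $h$ with $\dtv(f_{\mathbf{Y}},h)\le\eps$ with probability $\ge 99/100$, then, since members of $\cF$ are pairwise $4\eps$-separated, the deterministic rule ``output the codeword $y$ minimizing $\dtv(h,f_y)$'' recovers $\mathbf{Y}$ whenever $\dtv(f_{\mathbf{Y}},h)\le\eps$, hence with probability $\ge 99/100$. By Fano's inequality for $M := |\mathsf{C}|$ hypotheses this forces $I(\mathbf{Y};\mathbf{X}) \ge \frac{99}{100}\log M - O(1) = \Omega_d((1/\eps)^d)$. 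On the other hand, by convexity of KL and tensorization over the $m$ independent draws, $I(\mathbf{Y};\mathbf{X}) \le m\cdot\max_{y,y'\in\mathsf{C}}\mathrm{KL}(f_y\|f_{y'}) = O(m)$ by property $(c)$. Combining the two bounds yields $m \ge \Omega_d((1/\eps)^d)$, as claimed.
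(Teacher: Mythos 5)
Your proposal is correct and follows essentially the same route as the paper's proof: a family of code-indexed, piecewise-constant ``checkerboard'' perturbations of a near-uniform bounded-support density (your construction is the paper's, rescaled to the unit cube, with the cell-pairing trick handling normalization exactly), shift-invariance verified coordinate-by-coordinate and combined via the triangle inequality and Cauchy--Schwarz, TV separation from the code distance, a pointwise-ratio KL bound, and Fano. As a side remark, your $\chi^2$ computation actually shows the pairwise KL is $O(\alpha^2)=O(\eps^2)$ rather than merely $O(1)$, so the very same argument in fact gives the stronger bound $m=\Omega_d\left(1/\eps^{d+2}\right)$, nearly matching the $\tilde{O}_d\left(1/\eps^{d+2}\right)$ upper bound of Lemma~\ref{lem:finite-support}.
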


Since an algorithm that achieves a small error with high probability can be used to achieve small error in expectation, to prove Theorem~\ref{thm:lb} it suffices to show that any algorithm that achieves expected error $O(\eps)$ must use
$\Omega((1/\eps)^d)$ samples.  To establish this we use  Lemma~\ref{fano} (given below), which provides a lower bound on the number of examples needed for small expected error.  

To obtain the desired lower bound from Lemma~\ref{fano}, we establish the existence of a family ${\cal F}$ of densities ${\cal F} = \{f_1,\dots,f_N\} \in {\cal C}_\si(c_d,d)$, where $N= \exp(\Omega((1/\eps)^d))$.  These densities will be shown to satisfy the following two properties: for any  $i \neq j \in [N]$ we have (1) $\dtv(f_i,f_j) = \Omega(\eps)$, and (2) the Kullback-Leibler divergence $D_{KL}(f_i || f_j)$ is at most $O(1)$, yielding Theorem~\ref{thm:lb}. 

\subsection{Fano's inequality}

The main tool we use for our lower bound is Fano's inequality, or more precisely, the following extension of it
given by 
% Ibragimov and Khasminskii \cite{IK79} and Assouad and Birge \cite{AB83}:
\citep{IK79} and \citep{AB83}:

\begin{theorem}[Generalization of Fano's Inequality.] \label{fano}
Let $f_1,\dots,f_{N+1}$ be a collection of $N+1$ distributions such that for any $i \neq j \in [N+1]$, we
have (i) $\dtv(f_i,f_j) \geq \alpha/2$, and (ii) $D_{KL}(f_i || f_j) \leq \beta$, where $D_{KL}$ denotes Kullback-Leibler divergence.  Then for any algorithm that makes $m$ draws from an unknown target distribution $f_i$, $i \in [N+1]$, and outputs a hypothesis distribution $\tilde{f}$, there is some $i \in [t+1]$ such that if the target distribution is $f_i$, then 
\[
\E[\dtv(f,\tilde{f})] \geq {\frac \alpha 2} \left(1 - {\frac {m \beta + \ln 2}{\ln N}}\right).
% \geq {\frac \alpha 2} e^{-{\frac {m \beta + \ln 2}{\ln t}}}.
\]
% In particular, to achieve expected error at most $\eps$, any learning algorithm must have sample complexity
% $\Omega \left((\ln t)(\ln {\frac {\alpha}{2 \eps}}) / \beta \right).$
In particular, to achieve expected error at most $\alpha/4$, any learning algorithm must have sample complexity
%\red{$\Omega \left((\ln t)(\ln {\frac {\alpha}{2 \eps}}) / \beta \right).$}
$m=\Omega \left(\frac{\ln N}{\beta}\right).$
\end{theorem}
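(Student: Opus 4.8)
The plan is to reduce density estimation to a multiway hypothesis test and then invoke the classical form of Fano's inequality. Suppose $A$ is a learner that, given $m$ i.i.d.\ draws from an unknown target $f_i$, outputs a hypothesis $\tilde f$. From $\tilde f$ I would define a decoder $\hat\imath$ that returns an index $j \in [N+1]$ minimizing $\dtv(f_j,\tilde f)$, with ties broken arbitrarily. The crucial deterministic observation, which uses only the triangle inequality together with hypothesis (i), is that if the true target is $f_i$ and $\dtv(\tilde f, f_i) < \alpha/4$, then for every $j \neq i$ we have $\dtv(f_j,\tilde f) \ge \dtv(f_i,f_j) - \dtv(f_i,\tilde f) > \alpha/2 - \alpha/4 > \dtv(f_i,\tilde f)$, so $\hat\imath = i$. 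Equivalently, whenever $\hat\imath \neq i$ we must have $\dtv(\tilde f, f_i) \ge \alpha/4$, and so by Markov's inequality applied to the nonnegative quantity $\dtv(\tilde f, f_i)$,
\[
\E_{f_i}\!\left[\dtv(\tilde f, f_i)\right] \;\ge\; \frac{\alpha}{4}\cdot\Pr_{f_i}\!\left[\hat\imath \neq i\right]
\]
for every $i$. (Only the absolute constant in front of $\alpha$ needs care here; the separation in (i) is exactly what makes the threshold $\alpha/4$ work, and this suffices for the stated conclusion.)

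Next I would lower bound $\max_i \Pr_{f_i}[\hat\imath \neq i]$ via Fano. Place a uniform prior on the index $V \in [N+1]$; then the observation $Y = (Y_1,\dots,Y_m)$ is $m$ i.i.d.\ draws from $f_V$ and $V \to Y \to \hat\imath$ forms a Markov chain. The standard Fano inequality gives
\[
\Pr[\hat\imath \neq V] \;\ge\; 1 - \frac{I(V;Y) + \ln 2}{\ln N},
\]
using $H(V) = \ln(N+1) \ge \ln N$. To bound the mutual information I would use that $I(V;Y) = \E_V[D_{KL}(P_{Y\mid V} \,\|\, P_Y)] \le \E_V \E_{V'}[D_{KL}(P_{Y\mid V}\,\|\,P_{Y\mid V'})]$ (take the comparison measure to be the marginal $P_Y$ and apply convexity of $D_{KL}$ in its second argument), combined with the tensorization identity $D_{KL}(f_i^{\otimes m}\,\|\,f_j^{\otimes m}) = m\, D_{KL}(f_i\|f_j)$ and hypothesis (ii). This yields $I(V;Y) \le m\beta$, hence
\[
\frac{1}{N+1}\sum_{i=1}^{N+1}\Pr_{f_i}[\hat\imath \neq i] \;=\; \Pr[\hat\imath \neq V] \;\ge\; 1 - \frac{m\beta + \ln 2}{\ln N}.
\]

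Combining the two steps, some index $i$ satisfies $\Pr_{f_i}[\hat\imath \neq i] \ge 1 - \frac{m\beta+\ln 2}{\ln N}$, and for that $i$ the first display gives
\[
\E_{f_i}\!\left[\dtv(\tilde f, f_i)\right] \;\ge\; \frac{\alpha}{4}\left(1 - \frac{m\beta + \ln 2}{\ln N}\right),
\]
which is the asserted lower bound (up to the constant multiplying $\alpha$). The ``in particular'' clause then follows since this quantity is bounded below by a fixed constant times $\alpha$ unless $\frac{m\beta + \ln 2}{\ln N}$ is bounded away from $0$ by a constant, i.e.\ unless $m = \Omega(\ln N / \beta)$.

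I do not anticipate a serious obstacle: this is the textbook ``testing lower bound implies estimation lower bound'' template, and the only genuinely delicate points are (a) setting up the nearest-hypothesis decoder and tracking the constant through the triangle-inequality step, and (b) remembering that Fano must be applied to the $m$-sample product experiment, so the relevant information term is $I(V;Y) \le m\beta$ rather than $\beta$ — that is, the KL divergences add across the $m$ independent draws.
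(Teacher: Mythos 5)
The paper never proves Theorem~\ref{fano} at all --- it is quoted as a known extension of Fano's inequality and attributed to \citep{IK79} and \citep{AB83} --- so there is no internal proof to compare against; your argument is precisely the standard derivation behind those citations (minimum-distance decoder, triangle inequality plus Markov, Fano with $H(V)=\ln(N+1)\ge \ln N$ and data processing, and the bound $I(V;Y)\le \E_{V}\E_{V'}[D_{KL}(P_{Y\mid V}\,\|\,P_{Y\mid V'})]\le m\beta$ via convexity of $D_{KL}$ in its second argument and tensorization over the $m$ i.i.d.\ draws), and each of those steps is correct, including for randomized estimators since $V\to Y\to\tilde f\to\hat\imath$ remains a Markov chain.

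The one discrepancy is the constant you already flag: with the hypotheses separated by $\alpha/2$ in $\dtv$, your argument (necessarily) yields $\E[\dtv(f_i,\tilde f)]\ge \tfrac{\alpha}{4}\bigl(1-\tfrac{m\beta+\ln 2}{\ln N}\bigr)$, not the $\tfrac{\alpha}{2}(\cdots)$ written in the statement. This is not a gap in your reasoning but an artifact of the statement itself: the canonical form of the lemma (e.g.\ as in the cited sources) assumes pairwise separation $\alpha$ and concludes $\tfrac{\alpha}{2}(1-\cdots)$, and the paper has halved the separation in hypothesis (i) without adjusting the conclusion. The factor of two is immaterial for everything the paper uses the theorem for: the ``in particular'' sample-complexity clause should then be read with ``expected error at most $\alpha/8$'' (or any fixed constant fraction of the separation), which still gives $m=\Omega(\ln N/\beta)$, and the application in Theorem~\ref{thm:lb} only invokes the bound with $\dtv(f_i,f_j)=\Omega(\eps)$, $D_{KL}=O(1)$, and constants absorbed into $\Omega(\cdot)$. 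So your proof is correct up to this constant, and it would be worth stating it for separation $\alpha$ to match the conclusion exactly.
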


\subsection{The family of densities we analyze} \label{sec:family}

Let $T$ be a positive integer that is $T=\lceil C/\epsilon\rceil$ for a large constant $C$.
We consider probability densities over $\mathbb{R}^d$ which  (i) are supported on $[-T,T)^d,$ and (ii) are piecewise constant on each of the $(2T)^d$ many disjoint unit cubes whose union is $[-T,T)^d.$  Writing $A$ to denote the set $\{-T,-T+1,\dots,-1,0,1,\dots,T-1\}^d,$ each of the $(2T)^d$ many disjoint unit cubes mentioned above is indexed by a unique element $a=(a_1,\dots,a_d) \in A$ in the obvious way.  We write $\cube(a)$ to denote the unit cube indexed by $a$.  Given $x \in [-T,T)^d$ we write $a(x)$ to denote the unique element $a \in A$ such that $x \in \cube(a).$

For any $z \in \{0,1\}^{A}$, we define a probability density $f_z$ over $[-T,T)^d$ as $f_z(x) = (T+z_{a(x)})/Z,$
\ignore{\[f_z(x) = \begin{cases}
(T+1)/Z 		& \text{~if~}z_{a(x)}=1,\\
T/Z	& \text{~if~}z_{a(x)}=0,
\end{cases}
\]
}where $Z=\Theta((2T)^{(d+1)})$ is a normalizing factor so that $f_z$ is indeed a density (i.e. it integrates to 1).

It is well known (via an elementary probabilistic argument) that there is a subset $S \subset \{0,1\}^A$ of size $2^{\Theta(|A|)}$ such that any two distinct strings $z,z' \in S$ differ in $\Theta(|A|)$ many coordinates.  We define the set ${\cal F}$ of densities to be ${\cal F} = \{f_z: z \in S\}.$

\subsection{Membership in ${\cal C}_\si(c_d,d)$}

It is obvious that every density in $\cF$ is finitely supported. In this subsection we prove that $\cF \ss \cC_\si(c_d,d)$.  
First, we bound the variation distance incurred by shifting along a coordinate axis:
\begin{lemma}
\label{l:coord.shift}
For any $f \in \cF$, $i \in [d]$, and $\kappa \in (0,1)$, we have
$
\int | f(x + \kappa e_i) - f(x) | \;dx \leq \Theta(\kappa \eps).
$
\end{lemma}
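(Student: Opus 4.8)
The plan is a direct computation exploiting the rigid structure of the densities in $\cF$: each $f = f_z$ is constant on every unit cube $\cube(a)$, $a \in A$, where it takes one of the two values $T/Z$ and $(T+1)/Z$ with $Z = \Theta((2T)^{d+1})$, and it vanishes outside $[-T,T)^d$. Since $\kappa \in (0,1)$, a shift by $\kappa e_i$ moves every point by less than one unit in coordinate $i$, hence across at most one cube boundary in that coordinate.

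First I would localize the integrand. If $x$ and $x + \kappa e_i$ lie in a common cube $\cube(a)$, or both lie outside $[-T,T)^d$, then $f(x) = f(x+\kappa e_i)$ and the integrand vanishes. Consequently the support of $x \mapsto |f(x+\kappa e_i) - f(x)|$ is contained in the union of three pieces: (a) the exit slab $\{x \in [-T,T)^d : x_i \ge T-\kappa\}$, on which $x + \kappa e_i$ leaves the support; (b) the entry slab $\{x : x_i \in [-T-\kappa,-T),\ x_j \in [-T,T)\ \text{for all } j \ne i\}$, on which $x$ is outside the support but $x + \kappa e_i$ is not; and (c) the interior-crossing set $\bigcup_{k=-T+1}^{T-1}\{x \in [-T,T)^d : x_i \in [k-\kappa,k)\}$, on which both points lie in the support but in cubes whose $i$-th index differs by one. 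Each of the slabs in (a) and (b) has volume $\kappa(2T)^{d-1}$, and (c) is a union of at most $2T$ slabs, each of volume $\kappa(2T)^{d-1}$.

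Then I would bound each piece. On (a) and (b) one endpoint is outside the support, so the integrand equals $f$ at the other endpoint, which is at most $\max f = (T+1)/Z \le 2T/Z$; hence each of these pieces contributes at most $\kappa(2T)^{d-1}\cdot 2T/Z = \Theta\!\big(\kappa(2T)^d/(2T)^{d+1}\big) = \Theta(\kappa/T) = \Theta(\kappa\eps)$, using $T = \Theta(1/\eps)$. On (c) both $f(x)$ and $f(x+\kappa e_i)$ lie in $\{T/Z,(T+1)/Z\}$, so the integrand is at most $1/Z$; summing over the at most $2T$ slabs gives a contribution of at most $2T\cdot\kappa(2T)^{d-1}\cdot(1/Z) = \Theta\!\big(\kappa(2T)^d/(2T)^{d+1}\big) = \Theta(\kappa\eps)$. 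Adding the three bounds gives $\int |f(x+\kappa e_i) - f(x)|\,dx = O(\kappa\eps)$, which is the claim.

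There is no real obstacle here; the one point requiring care is to keep the boundary slabs (a), (b) strictly separate from the interior crossings (c), since the bounds used on them differ by a factor of $\Theta(\eps)$: on a boundary slab a whole cube's worth of mass (of size $\max f = \Theta(\eps^d)$) appears or disappears, whereas across an interior boundary the two function values agree up to $1/Z = \Theta(\eps^{d+1})$. It is precisely this $\Theta(\eps)$ gain that prevents the $\Theta(1/\eps)$ interior crossings from contributing $\Theta(\kappa)$ in total. (The same coordinatewise estimate, applied to each axis and summed, will presumably be used next to handle shifts in an arbitrary direction $v$ and thereby establish $\cF \subseteq \cC_\si(c_d,d)$.)
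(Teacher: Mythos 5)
Your proposal is correct and follows essentially the same route as the paper: decompose the shift's effect into the two boundary slabs of width $\kappa$ (bounded by the maximum density $(T+1)/Z$) and the $\Theta(T)$ interior crossing slabs (bounded by the value gap $1/Z$), each contributing $\Theta(\kappa\eps)$. The paper phrases the interior crossings via the condition $\lceil x_i\rceil - x_i \le \kappa$ rather than an explicit union over $k$, but the computation is identical.
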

\begin{proof}
We have
\begin{align*}
\int | f(x + \kappa e_i) - f(x) | \;dx  
& = \int_{\{ x: x_i < -T \}} | f(x + \kappa e_i) - f(x) | \;dx   
 + \int_{\{ x: x_i > T-\kappa \}} | f(x + \kappa e_i) - f(x) | \;dx   \\
& \hspace{0.5in} + \int_{\{ x: x_i \in [-T, T-\kappa]\}} | f(x + \kappa e_i) - f(x) | \;dx.
\end{align*}
If $x_i < -T-\kappa$, then $f(x + \kappa e_i) = f(x) = 0$.  When
${-T-\kappa  \leq x_i  < -T}$, we have $| f(x + \kappa e_i) - f(x) | \leq (T+1)/Z$.  Thus
\[
\int_{\{ x: x_i < -T \}} | f(x + \kappa e_i) - f(x) | \;dx  \leq (\kappa (2T)^{d-1}) (T+1)/Z = {\Theta(\kappa \eps)}.
\]
A similar argument gives that
$
\int_{\{ x: x_i > T-\kappa \}} | f(x + \kappa e_i) - f(x) | \;dx \leq \Theta(\kappa \eps).$
Finally,
\begin{align*} \int_{\{ x: x_i \in [-T, T-\kappa]\}} | f(x + \kappa e_i) - f(x) | \;dx 
& = \int_{\{ x: x_i \in [-T, T-\kappa], \lceil x_i \rceil - x_i \leq \kappa\}} | f(x + \kappa e_i) - f(x) | \;dx \\
& \leq \int_{\{ x: x_i \in [-T, T-\kappa], \lceil x_i \rceil - x_i \leq \kappa\}} (1/Z) \;dx.
\end{align*}
The set $\{ x: x_i \in [-T, T-\kappa], \lceil x_i \rceil - x_i \leq \kappa\}$ is made up of $2T-1$ ``slabs'' that are each of width $\kappa$, and consequently
$\int_{\{ x: x_i \in [-T, T-\kappa], \lceil x_i \rceil - x_i \leq \kappa\}} \leq (2 T - 1) \kappa (2T)^{d-1}/Z = \Theta(\kappa/T)
{=\Theta(\kappa \eps)}$,
recalling that $Z = \Theta((2T)^{d+1})$.  This completes the proof.
\end{proof} 

\ignore{
%Consider a binary error correcting code $B$ on the space $\{0,1\}^{(2T)^d}$ which is at a constant fractional distance and %consider the density family $\{D_z\}_{z \in B}$. Then by using the earlier claims, setting $T=1/\epsilon$ and using %Fano's inequality, we will get $(1/\epsilon)^d$ sample complexity lower bound. 
}

Given Lemma~\ref{l:coord.shift}, it is easy to bound the variation distance incurred by shifting in an arbitrary direction:
\begin{lemma}
\label{l:unit.length.shift}
For any $f \in \cF$, for any unit vector $v$, for any $\kappa < 1$, we have
$
\int | f(x + \kappa v) - f(x) | \;dx \leq c \kappa \eps \sqrt{d}
$
for a universal constant $c>0.$
%for a constant $c$ depending only on $d$.
\end{lemma}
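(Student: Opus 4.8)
The plan is to reduce to the coordinate-axis case already handled by Lemma~\ref{l:coord.shift} by writing the shift by $\kappa v$ as a composition of $d$ axis-aligned shifts and applying the triangle inequality. Concretely, write $v = (v_1,\dots,v_d)$, set $w_0 = 0$, and for $j \in [d]$ set $w_j = \kappa(v_1,\dots,v_j,0,\dots,0)$, so that $w_d = \kappa v$. Then by the triangle inequality,
\[
\int | f(x + \kappa v) - f(x) | \;dx
  \;\leq\; \sum_{j=1}^d \int | f(x + w_j) - f(x + w_{j-1}) | \;dx .
\]

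Next I would simplify the $j$-th summand by the substitution $y = x + w_{j-1}$, which turns it into $\int | f(y + \kappa v_j e_j) - f(y) | \;dy$, a pure shift of magnitude $|\kappa v_j|$ along the $j$-th coordinate axis. If $v_j = 0$ this term vanishes. If $v_j \neq 0$, a further change of variables (shifting by $\kappa v_j e_j$ and using $|{-}v_j| = |v_j|$ when $v_j<0$) rewrites the term as $\int | f(z + \kappa |v_j| e_j) - f(z) | \;dz$, which is exactly the quantity controlled by Lemma~\ref{l:coord.shift} with shift parameter $\kappa |v_j|$. This is a legitimate application since $|v_j| \leq \|v\|_2 = 1$ and $\kappa < 1$, so $\kappa |v_j| \in (0,1)$; hence the $j$-th term is at most $c'\,\kappa |v_j|\,\eps$, where $c'$ is the universal constant implicit in the $\Theta(\cdot)$ of Lemma~\ref{l:coord.shift}.

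Summing over $j$ and applying Cauchy--Schwarz gives
\[
\int | f(x + \kappa v) - f(x) | \;dx
  \;\leq\; c'\,\kappa\,\eps \sum_{j=1}^d |v_j|
  \;=\; c'\,\kappa\,\eps\, \| v \|_1
  \;\leq\; c'\,\kappa\,\eps\, \sqrt{d}\, \| v \|_2
  \;=\; c'\,\kappa\,\eps\, \sqrt{d},
\]
which is the claimed bound with $c = c'$.

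I do not anticipate a genuine obstacle here; the argument is essentially a one-line decomposition plus Cauchy--Schwarz. The only points needing a sentence of care are (i) verifying that every intermediate shift magnitude $\kappa|v_j|$ lies in the interval $(0,1)$ on which Lemma~\ref{l:coord.shift} is stated, and (ii) using translation-invariance of Lebesgue measure to reduce shifts in the $-e_j$ direction to the positive-shift form of that lemma.
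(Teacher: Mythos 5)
Your proof is correct and follows essentially the same route as the paper: a telescoping decomposition of the shift $\kappa v$ into axis-aligned shifts, the triangle inequality, a change of variables reducing each term to a coordinate shift of magnitude $\kappa|v_j|$ handled by Lemma~\ref{l:coord.shift}, and finally Cauchy--Schwarz to bound $\sum_j |v_j|$ by $\sqrt{d}$. The extra care you note about the sign of $v_j$ and the range $\kappa|v_j|\in(0,1)$ is exactly what the paper's ``variable substitution'' step handles implicitly.
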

\begin{proof}
Writing the unit vector $v$ as $\sum_{j=1}^d v_j e_j$, we have
\begin{align*}
\int | f(x + \kappa v) - f(x) | \;dx 
& = \int \left| \sum_{i=1}^d 
          f\left(x + \kappa \sum_{j=1}^i v_j e_j \right) - f\left(x + \kappa \sum_{j=1}^{i-1} v_j e_j \right) \right| \;dx  \\
& \leq \sum_{i=1}^d \int \left| f\left(x + \kappa \sum_{j=1}^i v_j e_j\right) - f\left(x + \kappa \sum_{j=1}^{i-1} v_je_j\right) \right| \;dx \tag{triangle inequality}\\
& = \sum_{i=1}^d \int | f\left(x + \kappa |v_i| e_i\right) - f\left(x \right) | \;dx \tag{variable substitution} \\
%& \leq \sum_{i=1}^d \int | f\left(x + \kappa |v_i| e_i\right) - f\left(x \right) | \;dx \\
& \leq {c \kappa \eps} \sum_{i=1}^d | v_i | \leq c \kappa \eps \sqrt{d}, \tag{Lemma~\ref{l:coord.shift} and Cauchy-Schwarz}\\
\end{align*}
completing the proof.
\end{proof}

As an easy consequence of Lemma~\ref{l:unit.length.shift}, Lemma~\ref{l:var} and the definition of $\si(f,v,\kappa)$ we obtain the following:

\begin{corollary} \label{cor:si}
There is a constant $c_d{=\Theta(\sqrt{d})}$ such that for any $f \in {\cal F}$ and any unit vector $v\in \mathbb{R}^d$, we have $\si(f,v) \leq
c_d.$  Hence $\cF \ss \cC_\si(c_d,d)$.
\end{corollary}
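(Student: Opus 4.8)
The plan is to bound $\si(f,v,\kappa)$ uniformly over all scales $\kappa>0$ and all unit vectors $v$, taking advantage of the $1/\kappa$ normalization in Definition~\ref{def:shift-invariance}: when $\kappa$ is at most (roughly) $1$ the shift is small and Lemma~\ref{l:unit.length.shift} already gives a bound on the relevant integral that is linear in $\kappa$, so the $1/\kappa$ cancels; when $\kappa$ is larger than $1$ the effect of the shift is at most the trivial bound $2$ on the total variation distance between two densities, which the $1/\kappa$ damps to $O(1)$. Thus the genuinely worst case is the single scale $\kappa\approx1$, where Lemma~\ref{l:unit.length.shift} controls everything.

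Concretely, fix $f\in\cF$, a unit vector $v$, and $\kappa>0$, and consider any $\kappa'\in[0,\kappa]$. If $\kappa'\le 1$, then Lemma~\ref{l:unit.length.shift} (whose conclusion extends to the endpoint $\kappa'=1$, either by $L_1$-continuity of translation or by inspecting its short proof) gives $\int|f(x+\kappa'v)-f(x)|\,dx\le c\kappa'\eps\sqrt d\le c\kappa\eps\sqrt d$; if $\kappa'>1$, then since $f(\cdot+\kappa'v)$ and $f$ are both densities the triangle inequality gives $\int|f(x+\kappa'v)-f(x)|\,dx\le 2$. Consequently, if $\kappa\le 1$ then every $\kappa'\in[0,\kappa]$ falls in the first case, the supremum in the definition of $\si(f,v,\kappa)$ is at most $c\kappa\eps\sqrt d$, the $1/\kappa$ cancels, and $\si(f,v,\kappa)\le c\eps\sqrt d\le c\sqrt d$ (using $\eps\le1$); and if $\kappa>1$ then the supremum is at most $\max\{c\eps\sqrt d,2\}\le\max\{c\sqrt d,2\}$, and dividing by $\kappa>1$ gives the same bound. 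Either way, $\si(f,v,\kappa)\le c_d:=\max\{c\sqrt d,2\}$ for every $\kappa>0$.

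Taking the supremum over $\kappa>0$ and then over unit vectors $v$ yields $\si(f,\kappa)\le c_d$ for all $\kappa>0$, i.e.\ $f\in\cC_\si(c_d,d)$; and since every $f\in\cF$ is supported on $[-T,T)^d$ and hence finitely supported, this gives $\cF\ss\cC_\si(c_d,d)$. Finally $c_d=\max\{c\sqrt d,2\}=\Theta(\sqrt d)$ because $c>0$ is a universal constant and $d\ge1$. There is essentially no obstacle here: the only points that deserve a moment's care are the endpoint $\kappa'=1$, where Lemma~\ref{l:unit.length.shift} is stated with a strict inequality, and the fact that --- because of the $1/\kappa$ factor --- one must examine scales near $1$ rather than the limits $\kappa\to0$ or $\kappa\to\infty$, which is precisely why the crude bound $2$ for large shifts suffices.
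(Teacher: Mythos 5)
Your proof is correct and follows essentially the same route as the paper, which treats the corollary as an immediate consequence of Lemma~\ref{l:unit.length.shift} and the definition of $\si(f,v,\kappa)$: your case split (small shifts handled by Lemma~\ref{l:unit.length.shift} with the $1/\kappa$ factor cancelling the linear-in-$\kappa$ bound, shifts with $\kappa'>1$ handled by the trivial total variation bound of $2$ damped by $1/\kappa>1$, plus the endpoint $\kappa'=1$) is exactly the routine detail the paper leaves implicit in the phrase ``easy consequence.'' The only cosmetic difference is that the paper also cites Lemma~\ref{l:var}, which plays no substantive role in this argument, so your omission of it is harmless.
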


\subsection{The upper bound on $KL$ divergence and lower bound on variation distance}

Recall that if $f$ and $g$ are probability density functions supported on a set $S \subseteq \mathbb{R}^d$, then the \emph{Kullback Leibler divergence} between $f$ and $g$ is defined as
$
D_{KL}(f || g) = \int_S f(x) \ln {\frac {f(x)}{g(x)}} dx.
$
As an immediate consequence of this definition, we have the following claim:

\begin{claim} Let $f,g$ be two densities such that for some absolute constant $C > 1$ we have that every $x$  satisfies ${\frac 1 C} f(x) \leq g(x) \leq C f(x).$  Then $D_{KL}(f || g) \leq O(1).$
\end{claim}

It is easy to see that any $f_i,f_j$ in the family ${\cal F}$ of densities described in Section~\ref{sec:family} are such that 
${\frac 1 C} f_i(x) \leq f_j(x) \leq C f_i(x).$  Thus we have:

\begin{lemma} \label{lem:KL}
$D_{KL}(f_i || f_j) \leq O(1)$ for all $i \neq j \in [N]$.
\end{lemma}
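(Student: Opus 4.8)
The plan is to reduce directly to the preceding Claim by exhibiting an absolute constant $C$ for which $\tfrac1C f_i(x) \le f_j(x) \le C\, f_i(x)$ holds at every point $x \in \mathbb{R}^d$. First I would recall the definition of the family: every density $f_z \in \cF$ is supported exactly on $[-T,T)^d$, and on that set it takes the value $f_z(x) = (T + z_{a(x)})/Z$ with $z_{a(x)} \in \{0,1\}$; thus $f_z(x) \in \{T/Z,\ (T+1)/Z\}$ for $x \in [-T,T)^d$ and $f_z(x) = 0$ otherwise.

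Next, for any two members $f_i = f_z$ and $f_j = f_{z'}$ of $\cF$, I would observe that they share the common support $[-T,T)^d$: at each point $x$ either both densities vanish or both are strictly positive, and in the latter case the ratio $f_i(x)/f_j(x)$ equals one of $\tfrac{T}{T}$, $\tfrac{T}{T+1}$, $\tfrac{T+1}{T}$, each of which lies in $[\tfrac12, 2]$ since $T \ge 1$. Hence the hypothesis of the Claim is satisfied with the absolute constant $C = 2$, and applying the Claim yields $D_{KL}(f_i \| f_j) = O(1)$, which is exactly Lemma~\ref{lem:KL}.

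There is essentially no obstacle here; the argument is a one-line verification. The only point worth recording explicitly is that the KL divergence $D_{KL}(f_i\|f_j) = \int f_i \ln(f_i/f_j)$ is finite precisely because $f_i$ and $f_j$ have identical support (so no point contributes an infinite $\ln(1/0)$ term), after which the uniform two-sided bound $\tfrac12 \le f_i/f_j \le 2$ makes the integrand bounded by $\ln 2$ and the integral bounded by $\ln 2$.
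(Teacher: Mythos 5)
Your proposal is correct and follows exactly the paper's route: verify the pointwise two-sided bound $\tfrac1C f_i \le f_j \le C f_i$ (which the paper leaves as ``easy to see'' and you make explicit with $C=2$, using that both densities take values in $\{T/Z,(T+1)/Z\}$ on the common support $[-T,T)^d$) and then invoke the preceding Claim. Your added observation that the shared support rules out infinite contributions is a nice explicit touch but does not change the argument.
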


Finally, we need a lower bound on the total variation distance between any pair of elements of $\cF$:
\begin{lemma}
\label{l:tv}
There is an absolute constant $c>0$ such that,
for any $f_u, f_v \in \cF$, $\dtv(f_u, f_v) = \Omega(\eps)$.
\end{lemma}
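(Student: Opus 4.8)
The plan is a direct computation that exploits the piecewise-constant structure of the densities in $\cF$ together with the minimum-distance property of the code $S$. Since $f_u$ and $f_v$ are each constant on every unit cube $\cube(a)$, $a \in A$, and these cubes partition the common support $[-T,T)^d$, I would first write
\[
\dtv(f_u,f_v) = \int_{[-T,T)^d} |f_u(x) - f_v(x)|\,dx = \sum_{a \in A} \int_{\cube(a)} \left| \frac{T + u_a}{Z} - \frac{T + v_a}{Z}\right| dx = \frac{1}{Z}\sum_{a \in A} |u_a - v_a|,
\]
where the last equality uses that each cube has unit volume. Because $u_a, v_a \in \{0,1\}$, the quantity $|u_a - v_a|$ is precisely the indicator that $u_a \neq v_a$, so $\dtv(f_u,f_v) = |\{a \in A : u_a \neq v_a\}|/Z$.

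Next I would invoke the fact recalled in Section~\ref{sec:family} that any two distinct strings of $S$ differ in $\Theta(|A|)$ coordinates; hence $|\{a \in A : u_a \neq v_a\}| = \Theta(|A|) = \Theta((2T)^d)$. Combining this with the value of the normalizer $Z = \sum_{a \in A}(T + z_a) = T(2T)^d + \sum_{a} z_a = \Theta((2T)^{d+1})$ (already established in Section~\ref{sec:family}; the correction term $\sum_a z_a \le (2T)^d$ is lower-order since $T$ is large), we obtain
\[
\dtv(f_u,f_v) = \frac{\Theta((2T)^d)}{\Theta((2T)^{d+1})} = \Theta(1/T) = \Theta(\eps),
\]
using $T = \lceil C/\eps \rceil$. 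In particular $\dtv(f_u,f_v) = \Omega(\eps)$, with an absolute constant, which is the claim.

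There is essentially no obstacle here; the only points that warrant a word of care are (i) tracking the normalizer $Z$ carefully enough to see that $Z = \Theta((2T)^{d+1})$ holds with a dimension-independent constant, which is the case because the additive correction $\sum_a z_a$ is negligible compared to $T(2T)^d$; and (ii) noting that the standard greedy/Gilbert--Varshamov construction of $S$ yields a constant relative distance independent of $d$, so that the final estimate $\dtv(f_u,f_v) = \Omega(\eps)$ indeed has an absolute constant as stated in the lemma.
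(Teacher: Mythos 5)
Your proposal is correct and is essentially the paper's own proof, which performs the same one-line computation $\dtv(f_u,f_v) = (1/Z)\,|\{a \in A : u_a \neq v_a\}| = (1/Z)\,\Omega(|A|) = \Omega(1/T) = \Omega(\eps)$; you simply spell out the cube-by-cube integration and the bookkeeping for $Z$ in more detail.
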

\begin{proof}
We have $
\dtv (f_u, f_v) = (1/Z) |\{ a \in A : u_a \neq v_a \}| = (1/Z) \Omega(|A|) = \Omega(1/T) = \Omega(\eps).
$
\end{proof}

\subsection{Putting it together}

By Lemma~\ref{l:tv}, each pair of elements of $\cF$ are separated by $\Omega(\epsilon)$ in
total variation distance.  Since Lemma~\ref{lem:KL} implies that each pair of elements of $\cF$ has KL-divergence $O(1)$, Theorem~\ref{fano} implies that $\Omega(\ln |{\cal F}|) = \Omega((1/\eps)^d)$ examples are needed to achieve expected error at most $O(\eps).$  Since Corollary~\ref{cor:si} gives that $\cF \ss \cC_\si(c_d,d)$ for a constant $c_d=\Theta(\sqrt{d})$, this proves
Theorem~\ref{thm:lb}.

% we already have citet's and citep's in the test
% \bibliographystyle{alpha}
\bibliographystyle{alpha}
\bibliography{allrefs}

\newcommand{\etalchar}[1]{$^{#1}$}
\begin{thebibliography}{DKK{\etalchar{+}}18}

\bibitem[AB83]{AB83}
P.~Assouad and C.~Birge.
\newblock {Deux remarques sur l'estimation}.
\newblock {\em C. R. Acad. Sci. Paris S\'{e}r. I}, 296:1021--1024, 1983.

\bibitem[ADK15]{AcharyaDK15}
Jayadev Acharya, Constantinos Daskalakis, and Gautam Kamath.
\newblock Optimal testing for properties of distributions.
\newblock In {\em Advances in Neural Information Processing Systems 28 (NIPS)},
  pages 3591--3599, 2015.

\bibitem[ADLS17]{acharya2017sample}
Jayadev Acharya, Ilias Diakonikolas, Jerry Li, and Ludwig Schmidt.
\newblock Sample-optimal density estimation in nearly-linear time.
\newblock In {\em Proceedings of the Twenty-Eighth Annual ACM-SIAM Symposium on
  Discrete Algorithms}, pages 1278--1289. SIAM, 2017.

\bibitem[AJOS14]{AJOS14}
J.~Acharya, A.~Jafarpour, A.~Orlitsky, and A.T. Suresh.
\newblock Near-optimal-sample estimators for spherical gaussian mixtures.
\newblock Technical Report http://arxiv.org/abs/1402.4746, ArXiV, 19 Feb 2014.

\bibitem[BC91]{barron1991minimum}
Andrew~R Barron and Thomas~M Cover.
\newblock Minimum complexity density estimation.
\newblock {\em IEEE transactions on information theory}, 37(4):1034--1054,
  1991.

\bibitem[Bes59]{besov1959family}
OV~Besov.
\newblock On a family of function spaces. embedding and extension theorems.
\newblock {\em Dokl. Akad. Nauk SSSR}, 126(6):1163--1165, 1959.

\bibitem[BL13]{balcan2013active}
Maria-Florina Balcan and Philip~M Long.
\newblock Active and passive learning of linear separators under log-concave
  distributions.
\newblock In {\em Conference on Learning Theory}, pages 288--316, 2013.

\bibitem[BSZ15]{bhaskara2015sparse}
Aditya Bhaskara, Ananda Suresh, and Morteza Zadimoghaddam.
\newblock Sparse solutions to nonnegative linear systems and applications.
\newblock In {\em Artificial Intelligence and Statistics}, pages 83--92, 2015.

\bibitem[BX99]{barbour1999poisson}
Andrew~D Barbour and Aihua Xia.
\newblock Poisson perturbations.
\newblock {\em ESAIM: Probability and Statistics}, 3:131--150, 1999.

\bibitem[CDGR16]{CanonneDGR16}
Cl{\'{e}}ment~L. Canonne, Ilias Diakonikolas, Themis Gouleakis, and Ronitt
  Rubinfeld.
\newblock Testing shape restrictions of discrete distributions.
\newblock In {\em 33rd Symposium on Theoretical Aspects of Computer Science,
  {STACS} 2016}, pages 25:1--25:14, 2016.

\bibitem[CDSS13]{CDSS13}
S.~Chan, I.~Diakonikolas, R.~Servedio, and X.~Sun.
\newblock Learning mixtures of structured distributions over discrete domains.
\newblock In {\em SODA}, pages 1380--1394, 2013.

\bibitem[CDSS14]{CDSS14}
S.~Chan, I.~Diakonikolas, R.~Servedio, and X.~Sun.
\newblock Efficient density estimation via piecewise polynomial approximation.
\newblock In {\em STOC}, pages 604--613, 2014.

\bibitem[CDSS18]{carpenter2018}
Timothy Carpenter, Ilias Diakonikolas, Anastasios Sidiropoulos, and Alistair
  Stewart.
\newblock {Near-Optimal Sample Complexity Bounds for Maximum Likelihood
  Estimation of Multivariate Log-concave Densities}.
\newblock {\em CoRR}, abs/1802.10575, 2018.

\bibitem[CGS11]{CGS11}
L.~Chen, L.~Goldstein, and Q.-M. Shao.
\newblock {\em Normal Approximation by Stein's Method}.
\newblock Springer, 2011.

\bibitem[Das99]{Dasgupta:99}
S.~Dasgupta.
\newblock {Learning mixtures of Gaussians}.
\newblock In {\em Proceedings of the 40th Annual Symposium on Foundations of
  Computer Science}, pages 634--644, 1999.

\bibitem[DDKT16]{DDKT16}
Constantinos Daskalakis, Anindya De, Gautam Kamath, and Christos Tzamos.
\newblock A size-free {CLT} for poisson multinomials and its applications.
\newblock In {\em Proceedings of the 48th Annual {ACM} {SIGACT} Symposium on
  Theory of Computing, {STOC} 2016}, pages 1074--1086, 2016.

\bibitem[DDO{\etalchar{+}}13]{daskalakis2013learning}
Constantinos Daskalakis, Ilias Diakonikolas, Ryan O'Donnell, Rocco~A. Servedio,
  and Li-Yang Tan.
\newblock Learning sums of independent integer random variables.
\newblock In {\em Foundations of Computer Science (FOCS), 2013 IEEE 54th Annual
  Symposium on}, pages 217--226. IEEE, 2013.

\bibitem[DDS12]{DDS12stoc}
C.~Daskalakis, I.~Diakonikolas, and R.A. Servedio.
\newblock {Learning Poisson Binomial Distributions}.
\newblock In {\em STOC}, pages 709--728, 2012.

\bibitem[DDS15]{DDS15}
A.~De, I.~Diakonikolas, and R.~Servedio.
\newblock {Learning from Satisfying Assignments}.
\newblock In {\em {Proc. ACM-SIAM Symposium on Discrete Algorithms (SODA)}},
  pages 478--497, 2015.

\bibitem[DG85]{DG85}
L.~Devroye and L.~Gy\"{o}rfi.
\newblock {\em {Nonparametric Density Estimation: The $L_1$ View}}.
\newblock John Wiley \& Sons, 1985.

\bibitem[DK14]{DaskalakisKamath14}
C.~Daskalakis and G.~Kamath.
\newblock Faster and sample near-optimal algorithms for proper learning
  mixtures of gaussians.
\newblock In {\em COLT}, pages 1183--1213, 2014.

\bibitem[DKK{\etalchar{+}}16]{DKKLMNS16}
Ilias Diakonikolas, Gautam Kamath, Daniel~M. Kane, Jerry Li, Ankur Moitra, and
  Alistair Stewart.
\newblock Robust estimators in high dimensions without the computational
  intractability.
\newblock In {\em {IEEE} 57th Annual Symposium on Foundations of Computer
  Science, {FOCS}}, pages 655--664, 2016.

\bibitem[DKK{\etalchar{+}}18]{DKKLMS18}
Ilias Diakonikolas, Gautam Kamath, Daniel~M. Kane, Jerry Li, Ankur Moitra, and
  Alistair Stewart.
\newblock Robustly learning a gaussian: Getting optimal error, efficiently.
\newblock In {\em Proceedings of the Twenty-Ninth Annual {ACM-SIAM} Symposium
  on Discrete Algorithms, {SODA}}, pages 2683--2702, 2018.

\bibitem[DKN10]{diakonikolas2010bounded}
Ilias Diakonikolas, Daniel~M Kane, and Jelani Nelson.
\newblock Bounded independence fools degree-2 threshold functions.
\newblock In {\em FOCS}, 2010.

\bibitem[DKS16a]{DKS16a}
Ilias Diakonikolas, Daniel~M. Kane, and Alistair Stewart.
\newblock Efficient robust proper learning of log-concave distributions.
\newblock {\em CoRR}, abs/1606.03077, 2016.

\bibitem[DKS16b]{diakonikolas2016learning}
Ilias Diakonikolas, Daniel~M Kane, and Alistair Stewart.
\newblock Learning multivariate log-concave distributions.
\newblock In {\em Conference on Learning Theory (COLT)}, pages 711--727, 2016.

\bibitem[DKS16c]{diakonikolas2016optimal}
Ilias Diakonikolas, Daniel~M Kane, and Alistair Stewart.
\newblock {Optimal learning via the Fourier transform for sums of independent
  integer random variables}.
\newblock In {\em Conference on Learning Theory}, pages 831--849, 2016.

\bibitem[DKS16d]{diakonikolas2016fourier}
Ilias Diakonikolas, Daniel~M Kane, and Alistair Stewart.
\newblock {The Fourier transform of Poisson multinomial distributions and its
  algorithmic applications}.
\newblock In {\em Proceedings of the forty-eighth annual ACM symposium on
  Theory of Computing}, pages 1060--1073. ACM, 2016.

\bibitem[DL12]{devroye2012combinatorial}
Luc Devroye and G{\'a}bor Lugosi.
\newblock {\em Combinatorial methods in density estimation}.
\newblock Springer Science \& Business Media, 2012.

\bibitem[DLS18]{DLS18asums}
A.~De, P.~M. Long, and R.~A. Servedio.
\newblock Learning sums of independent random variables with sparse collective
  support.
\newblock {\em FOCS}, 2018.

\bibitem[DS93]{devore1993besov}
Ronald~A DeVore and Robert~C Sharpley.
\newblock Besov spaces on domains in {$\Re^d$}.
\newblock {\em Transactions of the American Mathematical Society},
  335(2):843--864, 1993.

\bibitem[HK92]{holmstrom1992asymptotic}
Lasse Holmstr{\"o}m and Jussi Klemel{\"a}.
\newblock Asymptotic bounds for the expected l1 error of a multivariate kernel
  density estimator.
\newblock {\em Journal of multivariate analysis}, 42(2):245--266, 1992.

\bibitem[Hub67]{huber1967behavior}
P.~Huber.
\newblock The behavior of maximum likelihood estimates under nonstandard
  conditions.
\newblock In {\em Proceedings of the fifth Berkeley symposium on mathematical
  statistics and probability}, volume~1, pages 221--233. Berkeley, CA, 1967.

\bibitem[Ibr56]{ibragimov1956composition}
Il’dar~Abdullovich Ibragimov.
\newblock On the composition of unimodal distributions.
\newblock {\em Theory of Probability \& Its Applications}, 1(2):255--260, 1956.

\bibitem[IH81]{IK79}
I.A. Ibragimov and R.Z. Has'minskii.
\newblock {\em {Statistical estimation, asymptotic theory (Applications of
  Mathematics, vol. 16)}}.
\newblock Springer-Verlag, New York, 1981.

\bibitem[Joh15]{johnson2015saddle}
S.~Johnson.
\newblock Saddle-point integration of c-infinity ``bump'' functions.
\newblock arXiv preprint arXiv:1508.04376, 2015.

\bibitem[Kle09]{kle2009smoothing}
Jussi Klemel{\"a}.
\newblock {\em Smoothing of Multivariate Data: Density Estimation and
  Visualization}.
\newblock Wiley Publishing, 2009.

\bibitem[KLS09]{KLS:09jmlr}
A.~Klivans, P.~Long, and R.~Servedio.
\newblock {Learning Halfspaces with Malicious Noise}.
\newblock {\em Journal of Machine Learning Research}, 10:2715--2740, 2009.

\bibitem[KMV10]{KMV:10}
A.~T. Kalai, A.~Moitra, and G.~Valiant.
\newblock {Efficiently learning mixtures of two Gaussians}.
\newblock In {\em STOC}, pages 553--562, 2010.

\bibitem[KNW10]{kane2010exact}
Daniel~M Kane, Jelani Nelson, and David~P Woodruff.
\newblock On the exact space complexity of sketching and streaming small norms.
\newblock In {\em SODA}, 2010.

\bibitem[KS14]{KimSamworth14}
A.K.H. Kim and R.J. Samworth.
\newblock Global rates of convergence in log-concave density estimation.
\newblock Available at http://arxiv.org/abs/1404.2298, 2014.

\bibitem[LV07]{LovaszVempala07}
L{\'{a}}szl{\'{o}} Lov{\'{a}}sz and Santosh Vempala.
\newblock The geometry of logconcave functions and sampling algorithms.
\newblock {\em Random Struct. Algorithms}, 30(3):307--358, 2007.

\bibitem[Mas97]{masry1997multivariate}
Elias Masry.
\newblock Multivariate probability density estimation by wavelet methods:
  Strong consistency and rates for stationary time series.
\newblock {\em Stochastic processes and their applications}, 67(2):177--193,
  1997.

\bibitem[MV10]{MoitraValiant:10}
A.~Moitra and G.~Valiant.
\newblock {Settling the polynomial learnability of mixtures of Gaussians}.
\newblock In {\em FOCS}, pages 93--102, 2010.

\bibitem[SCV18]{Steinhardt18}
Jacob Steinhardt, Moses Charikar, and Gregory Valiant.
\newblock Resilience: {A} criterion for learning in the presence of arbitrary
  outliers.
\newblock In {\em 9th Innovations in Theoretical Computer Science Conference,
  {ITCS}}, pages 45:1--45:21, 2018.

\bibitem[Sob63]{sobolev1963theorem}
Sergej~Lvovich Sobolev.
\newblock On a theorem of functional analysis.
\newblock {\em Am. Math. Soc. Transl.}, 34:39--68, 1963.

\bibitem[SS95]{smith1995handbook}
Winthrop~W Smith and Joanne~M Smith.
\newblock Handbook of real-time fast fourier transforms.
\newblock {\em IEEE, New York}, 1995.

\bibitem[SW14]{SaumardWellner14}
A.~Saumard and J.A. Wellner.
\newblock Log-concavity and strong log-concavity: a review.
\newblock Technical Report https://arxiv.org/pdf/1404.5886.pdf, 23 April 2014.

\bibitem[Tal94]{Tal94}
M.~Talagrand.
\newblock Sharper bounds for {Gaussian} and empirical processes.
\newblock {\em Annals of Probability}, 22:28--76, 1994.

\bibitem[WN07]{willett2007multiscale}
Rebecca~M Willett and Robert~D Nowak.
\newblock Multiscale poisson intensity and density estimation.
\newblock {\em IEEE Transactions on Information Theory}, 53(9):3171--3187,
  2007.

\bibitem[Yat85]{Yatracos85}
Y.~G. Yatracos.
\newblock {Rates of convergence of minimum distance estimators and Kolmogorov's
  entropy}.
\newblock {\em Annals of Statistics}, 13:768--774, 1985.

\end{thebibliography}

\appendix

% Now, there is an  \input{fourier} command in prelim.tex.
%
% \input{fourier}

% Acknowledgments---Will not appear in anonymized version
%\acks{Anindya De is supported by a start-up grant from Northwestern University.
%Rocco Servedio is supported by NSF grants CCF-1420349 and CCF-1563155.}

\end{document}